\DeclareAcronym{AMP}{
  short = AMP ,
  long  = asymmetric multicore processor
}
\DeclareAcronym{BVP}{
  short = BVP ,
  long  = boundary value problem
}
\DeclareAcronym{CLEVER}{
  short = \textsc{Clever} ,
  long  = Cross Lipschitz Extreme Value for nEtwork Robustness
}
\DeclareAcronym{CPPO}{
  short = cppo ,
  long  = complete pointed partial order
}
\DeclareAcronym{CPS}{
  short = CPS ,
  long  = cyber-physical system
}
\DeclareAcronym{DAG}{
  short = DAG ,
  long  = directed acyclic graph
}
\DeclareAcronym{DBF}{
  short = DBF ,
  long  = Digital Beam Former
}
\DeclareAcronym{DCPO}{
  short = dcpo ,
  long  = directed-complete partial order
}
\DeclareAcronym{DSL}{
  short = DSL ,
  long  = domain-specific language
}
\DeclareAcronym{EDSL}{
  short = EDSL ,
  long  = embedded domain-specific language
}
\DeclareAcronym{FBP}{
  short = FBP ,
  long  = free boundary problem
}
\DeclareAcronym{FivePs}{
  short = 5Ps ,
  long  = five Ps
}
\DeclareAcronym{HS}{
  short = HS ,
  long  = hybrid system
}
\DeclareAcronym{ISA}{
  short = ISA ,
  long  = instruction set architecture
}
\DeclareAcronym{IVP}{
  short = IVP ,
  long  = initial value problem
}
\DeclareAcronym{ODE}{
  short = ODE ,
  long  = ordinary differential equation
}
\DeclareAcronym{PC}{
  short = PC ,
  long  = Pulse Compression
}
\DeclareAcronym{PCROP}{
  short = PC-ROP ,
  long  = PDE-constrained rearrangement optimization problem
}
\DeclareAcronym{PDCPO}{
  short = pointed dcpo ,
  long  = pointed directed-complete partial order
}
\DeclareAcronym{PDE}{
  short = PDE ,
  long  = partial differential equation
}
\DeclareAcronym{PI}{
  short = PI ,
  long  = principal investigator
}
\DeclareAcronym{POSET}{
  short = poset ,
  long  = partially ordered set
}
\DeclareAcronym{RE}{
  short = r.~e. ,
  long  = recursively enumerable
}
\DeclareAcronym{RNN}{
  short = RNN ,
  long  = recurrent neural network
}
\DeclareAcronym{ROP}{
  short = ROP ,
  long  = rearrangement optimization problem
}
\DeclareAcronym{SAC}{
  short = \textsc{Sac} ,
  long  = Single Assignment C
}
\DeclareAcronym{SBSE}{
  short = SBSE ,
  long  = search-based software engineerign
}
\DeclareAcronym{SOP}{
  short = SOP ,
  long  = shape optimization problem
}
\DeclareAcronym{TTE}{
  short = TTE ,
  long  = Type-II Theory of Effectivity
}
\newcommand{\defeq}{\coloneqq} 
\newcommand{\State}{\mathbb{S}}
\newcommand{\cCPPO}[1]{\mathbb{C}#1} 
\newcommand{\kCPPO}[1]{\mathbb{K}#1} 
\newcommand{\id}{\text{id}}
\newcommand{\cl}[1]{\overline{#1}}
\newcommand{\cpo}[1][s]{\ifthenelse{\equal{#1}{s}}{complete partial
order}{Complete partial order}}
\newcommand{\eg}{e.\,g.}
\newcommand{\ie}{i.\,e.}
\newcommand{\absn}[1]{\lvert\thinspace {#1} \thinspace\rvert}
\newcommand{\bra}[1]{#1^\circledcirc}
\newcommand{\clarke}[1]{\ensuremath{\partial #1}} 
\newcommand{\fpoint}[1][s]{\ifthenelse{\equal{#1}{c}}{Floating-point}{floating-point}}
\newcommand{\hbClos}[1]{\ensuremath{#1^\Box}} 
\newcommand{\interiorOf}[1]{\ensuremath{{#1}^{\circ}}}
\newcommand{\intvaldom}[1][{\Rinf}]{\ensuremath{{\mathbb{I}}#1}}
\newcommand{\intvalfun}[1][f]{\ensuremath{{\boldsymbol{I}}#1}}
\newcommand{\kc}[1][s]{\ifthenelse{\equal{#1}{s}}{Kolmogorov
complexity}{Kolmogorov complexities}} 
\newcommand{\lep}[1][x]{\ensuremath{{\underline{#1}}}} 
\newcommand{\uep}[1][x]{\ensuremath{{\overline{#1}}}} 
\newcommand{\lift}{_\bot} 
\newcommand{\glb}{\ensuremath{\bigsqcap}} 
\newcommand{\lub}{\ensuremath{\bigsqcup}} 
\newcommand{\ptime}[1][s]{\ifthenelse{\equal{#1}{c}}{Polynomial-time}{polynomial-time}}
\newcommand{\pspace}[1][s]{\ifthenelse{\equal{#1}{c}}{Polynomial-space}{polynomial-space}}
\newcommand{\me}{\mathrm{e}} 
\newcommand{\N}{\ensuremath{\mathbb{N}}}
\newcommand{\NFL}[1][c]{\ifthenelse{\equal{#1}{s}}{no free lunch}{No
Free Lunch}}
\newcommand{\norm}[1]{\ensuremath{{\left\lVert\thinspace {#1}
        \thinspace \right\rVert}}} 
\newcommand{\Q}{\ensuremath{\mathbb{Q}}}
\newcommand{\R}{\ensuremath{\mathbb{R}}}
\newcommand{\Scott}[1]{\ensuremath{\Sigma ({#1})}} 
\newcommand{\sequence}[5][i]{\ifthenelse{\equal{#1}{i}}{\ensuremath{\left< {#2}_{#3}, {#2}_{#4},
\ldots,{#2}_{#5}, \ldots \right>}}{\ensuremath{\left< {#2}_{#3}, {#2}_{#4},
\ldots,{#2}_{#5} \right>}}}
\newcommand{\set}[1]{\ensuremath{\left\{{#1}\right\}}}
\newcommand{\setbarTall}[2]{\ensuremath{\left\{{#1}\;\vrule\;
{#2}\right\}}} 
\newcommand{\setbarNormal}[2]{\ensuremath{\{{#1} \mid
{#2}\}}}
\newcommand{\UpC}[1]{\uparrow\negthickspace{#1}}
\newcommand{\wayaboves}[1]{\ensuremath{\twoheaduparrow {#1}}}
\newcommand{\waybelows}[1]{\ensuremath{\twoheaddownarrow {#1}}}
\theoremstyle{plain} 
\newtheorem{definition}{Definition}[section]
\newtheorem{corollary}[definition]{Corollary}
\newtheorem{example}[definition]{Example}
\newtheorem{lemma}[definition]{Lemma}
\newtheorem{notation}[definition]{Notation}
\newtheorem{proposition}[definition]{Proposition}
\newtheorem{remark}[definition]{Remark}
\newtheorem{theorem}[definition]{Theorem}
\newcounter{defenumalph}
\newcounter{defenum}
\newcounter{saveeqn}
\newcommand{\arrayoptions}[2]{\setlength{\arraycolsep}{#1}\renewcommand{\arraystretch}{#2}}
\DeclareMathOperator{\convHull}{co} 
\DeclareMathOperator{\maxS}{max_S} 
\DeclareMathOperator{\maxV}{max_V} 
\DeclareMathOperator{\ReLU}{\mathrm{ReLU}}
\title{\textbf{A Domain-Theoretic Framework for Robustness Analysis of Neural
  Networks}}
\author{Can Zhou \thanks{Department of Computer Science, University of
    Oxford, Oxford, United Kingdom,
    \href{mailto:Can.Zhou@cs.ox.ac.uk}{Can.Zhou@cs.ox.ac.uk}} \and
  Razin A. Shaikh\thanks{Department of Computer Science, University of
    Oxford, Oxford, United Kingdom, and Quantinuum Ltd., Oxford,
    United Kingdom,
    \href{mailto:Razin.Shaikh@cs.ox.ac.uk}{Razin.Shaikh@cs.ox.ac.uk}}
  \and Yiran Li \thanks{School of Computer Science, University of
    Nottingham Ningbo China,
    \href{mailto:Yiran.Li@nottingham.edu.cn}{Yiran.Li@nottingham.edu.cn}}
  \and Amin Farjudian\thanks{School of Computer Science, University of
    Nottingham Ningbo China,
    \href{mailto:Amin.Farjudian@gmail.com}{Amin.Farjudian@gmail.com}
    (\textbf{Corresponding author})}}
\date{}
\begin{document}

\maketitle

\begin{abstract}

  A domain-theoretic framework is presented for validated robustness
  analysis of neural networks. First, global robustness of a general
  class of networks is analyzed. Then, using the fact that Edalat's
  domain-theoretic $L$-derivative coincides with Clarke's generalized
  gradient, the framework is extended for attack-agnostic local
  robustness analysis. The proposed framework is ideal for designing
  algorithms which are correct by construction. This claim is
  exemplified by developing a validated algorithm for estimation of
  Lipschitz constant of feedforward regressors. The completeness of
  the algorithm is proved over differentiable networks, and also over
  general position $\ReLU$ networks. Computability results are
  obtained within the framework of effectively given domains. Using
  the proposed domain model, differentiable and non-differentiable
  networks can be analyzed uniformly. The validated algorithm is
  implemented using arbitrary-precision interval arithmetic, and the
  results of some experiments are presented. The software
  implementation is truly validated, as it handles floating-point
  errors as well.

\end{abstract}

\noindent
\textbf{Keywords:} domain theory, neural network, robustness,
Lipschitz constant, Clarke-gradient

\noindent
\textbf{MSC classes:} 06B35, 68Q55, 49J52, 68T37


\section{Introduction}
\label{sec:intro}

A system is said to be robust with respect to perturbations of a set
of parameters, if small changes to those parameters do not result in
significant changes in the behavior of the system. Robustness is a
core requirement of safety-critical systems, including neural networks
deployed in high-stakes
applications~\parencite{Dietterich:Robust_AI:2017,Heaven:AI_Easy_to_Fool:2019}. As
a result, several methods have been suggested for \emph{measuring}
robustness, {\eg}, through adversarial
attacks~\parencite{Goodfellow:Adversarial_Examples:2015,Carlini_Wagner:Towards_Robustness:2017}
or via attack-agnostic
methods~\parencite{Hein_Andriushchenko:Robustness:2017,Weng_et_al-CLEVER-ICLR:2018,Ko_et_al:POPQOEN:2019,Jordan_Dimakis:Exactly_NeurIPS:2020,Jordan_Dimakis:Provable_ICML:2021}. A
common and effective attack-agnostic approach involves estimation of
the local Lipschitz constant of the network. This makes sense as
larger Lipschitz constants signify more sensitivity to input values.
As such, a useful estimate of the Lipschitz constant should be a
\emph{tight upper bound}. A lower bound of the Lipschitz constant
results in false negatives---which are more serious than false
positives---and a very loose upper bound generates too many false
positives, which renders the estimate ineffective. Sound and accurate
computation of the Lipschitz constant is also essential in Lipschitz
regularization of neural
networks~\parencite{Araujo:Toeplitz:2021,Pauli_et_al:Lipschitz_Bounds:2022}.

Of particular interest is the estimation of Lipschitz constant of
non-differentiable networks, such as $\ReLU$ networks. Computing the
Lipschitz constant of $\ReLU$ networks is not just an NP-hard
problem~\parencite{Virmaux_Scaman:Lipschitz_regularity:2018}, it is
strongly
inapproximable~\parencite[Theorem~4]{Jordan_Dimakis:Exactly_NeurIPS:2020}. As
a result, methods of Lipschitz analysis fall into two \emph{disjoint}
categories:

\begin{enumerate}[label=(\arabic*)]
\item \textbf{Scalable}, but not validated, {\eg},
  \parencite{Weng_et_al-CLEVER-ICLR:2018,Chen_et_al:Semialgebraic_Lipschitz:2020,Latorre_et_al:Lipschitz:2020};

\item \textbf{Validated}, but not scalable, {\eg},
  \parencite{Hein_Andriushchenko:Robustness:2017,Jordan_Dimakis:Exactly_NeurIPS:2020,Jordan_Dimakis:Provable_ICML:2021},
  and the method of the current article.

\end{enumerate}

The validated methods---though not scalable---are essential in
providing the ground truth for verification of the scalable
methods. For instance, consider the scalable \ac{CLEVER} method
of~\textcite{Weng_et_al-CLEVER-ICLR:2018}. \textcite{Jordan_Dimakis:Exactly_NeurIPS:2020}
demonstrate that, on certain parts of the input domain, \ac{CLEVER}
outputs inaccurate results, even on small networks. The method of
\textcite{Jordan_Dimakis:Exactly_NeurIPS:2020,Jordan_Dimakis:Provable_ICML:2021},
in turn, although based on a validated theory, is implemented using
floating-point arithmetic, which is prone to inaccuracies, as will be
demonstrated in Section~\ref{sec:experiments}. It is well-known that
floating-point errors lead to vulnerability in safety-critical neural
networks~\parencite{Jia_Rinard:Exploiting_FP:SAS:2021,Zombori:Fooling:ICLR:2021}.

Based on the theory of continuous domains, we develop a framework for
robustness analysis of neural networks, within which we design an
algorithm for estimation of local Lipschitz constant of feedforward
regressors. We prove the soundness of the algorithm. Moreover, we
prove completeness of the algorithm over differentiable networks, and
also over general position $\ReLU$ networks.

To avoid the pitfalls of floating-point errors, we implement our
algorithm using arbitrary-precision interval arithmetic (with outward
rounding) which guarantees soundness of the implementation as well. As
such, our method is \emph{fully validated}, that is, not only does it
follow a validated theory, but also it has a validated
implementation.

\subsection{Related work}

A common approach to robustness analysis of neural networks is based
on estimation of Lipschitz constants. A mathematical justification of
this reduction is provided
in~\parencite[Theorem~2.1]{Hein_Andriushchenko:Robustness:2017}
and~\parencite[Theorem~3.2]{Weng_et_al-CLEVER-ICLR:2018}. The global
Lipschitz constant of a network may be estimated by multiplying the
Lipschitz constants of individual layers. This approach, taken
by~\textcite{Szegedy_et_al:Intriguing:2014}, is scalable but provides
loose bounds. In \acs{CLEVER}~\parencite{Weng_et_al-CLEVER-ICLR:2018}, a
statistical approach is adopted using Extreme Value
Theory~\parencite{deHaan_Ferreira:Extreme_Value_Theory:Book:2006}. While
\ac{CLEVER} is scalable, it may underestimate the Lipschitz constant
significantly~\parencite{Jordan_Dimakis:Exactly_NeurIPS:2020}. A sound
approach to Lipschitz estimation was introduced by
\textcite{Hein_Andriushchenko:Robustness:2017}, but the method applies
only to differentiable shallow networks. Semidefinite programming has
been used by \textcite{Fazlyab_et_al:Efficient_Accurate_Lipschitz:2019}
and \textcite{Hashemi_et_al:Certifying_Incremental_Quadratic:2021} to
obtain tight upper bounds on the Lipschitz constant of a network.

Of particular relevance to our work are methods that are based on
abstract
interpretation~\parencite{Cousot_Cousot:Abstract_Interpretation:1977}. The
study of abstract interpretation in a domain-theoretic framework goes
back (at least)
to~\parencite{Burn_et_al:Strictness_analysis:1986,Abramsky:Abstract_Interpretation_Kan:1990}. In
the current context, the main idea behind abstract interpretation is
to enclose values, and then follow how these enclosures propagate
through the layers of a given network. Various types of enclosures may
be used, {\eg}, hyperboxes, zonotopes, or polytopes, in increasing
order of accuracy, and decreasing order of
efficiency~\parencite{Gehr_et_al:AI2:2018,Mirman_et_al:Differentiable_Abstract_Interpretation:2018,Singh_et_al:Abstract_Domain_NN:2019}. Theoretically,
methods that are based on interval
analysis~\parencite{Wang_et_al:symbolic_intervals:2018,Prabhakar_Rahimi:Abstraction_Based_NNs:2019,Lee_et_al:Lipschitz_Certifiable:2020}
or convex outer
polytopes~\parencite{Wong_Kolter:Provable_Defenses:2018,Sotoudeh_Thakur:Abstract_NNs:2020}
are also subsumed by the framework of abstract interpretation. These
methods, however, are applied over the network itself, rather than its
gradient.

In contrast,
in~\parencite{Chaudhuri_et_al:Proving_Programs_Robust:2011,Chaudhuri_et_al:Continuity_Robustness:2012},
abstract interpretations were used for estimating Lipschitz constants
of programs with control flow, which include neural networks as
special
cases. \textcite{Jordan_Dimakis:Exactly_NeurIPS:2020,Jordan_Dimakis:Provable_ICML:2021}
have used Clarke's generalized
gradient~\parencite{Clarke:Opt_Non_Smooth_Analysis-Book:1990} to
analyze non-differentiable ({\eg}, $\ReLU$) networks. Their
overapproximation of the Clarke-gradient also follows the spirit of
abstract interpretation. Based on dual numbers, sound automatic
differentiation, and Clarke Jacobian,
\textcite{Laurel_et_al:Dual_Number:2022} have improved upon the
results of~\textcite{Chaudhuri_et_al:Proving_Programs_Robust:2011}
and~\textcite{Jordan_Dimakis:Exactly_NeurIPS:2020,Jordan_Dimakis:Provable_ICML:2021}.

Domains and neural networks serve different purposes, and there has
been little interaction between the two. One notable exception is the
use of probabilistic power domains for analysis of forgetful Hopfield
networks~\parencite{Edalat:Domains_NN:1995}. \textcite{KonecnyFarjudian2010:semantics_of_query_driven,KonecnyFarjudian2010::compositional_semantics_of_dataflow}
also present a domain-theoretic framework for analysis of a general
class of networks which communicate real values. Their focus is on
certain distributed systems which include neural networks as a special
case, but they do not discuss the issue of robustness. Domain theory,
however, has been used in robustness analysis, but not in the context
of neural networks. Specifically,
\textcite{Moggi_Farjudian_Duracz_Taha:Reachability_Hybrid:2018} have
developed a framework for robustness analysis, with an emphasis on
hybrid systems. Their work provides an important part of the
foundation of the current paper. The results
of~\parencite{Moggi_Farjudian_Duracz_Taha:Reachability_Hybrid:2018}
were developed further
in~\parencite{Moggi_Farjudian_Taha:System_Analysis_and_Robustness:2019,Moggi_Farjudian_Taha:System_Analysis_and_Robustness:ICTCS:2019,Farjudian_Moggi:Robustness_Scott_Continuity_Computability:2022:arxiv},
in the context of abstract robustness analysis.

As we will analyze maximization of interval functions in our
domain-theoretic framework, we also point out the functional algorithm
developed by \textcite{Simpson:lazy_functional_algorithms:1998} for
maximization. The focus of
\textcite{Simpson:lazy_functional_algorithms:1998} is the functional
style of the algorithm, although he uses a domain semantic model. The
domain model used in \parencite{Simpson:lazy_functional_algorithms:1998}
is, however, algebraic, which is not suitable for our framework. Our
models are all non-algebraic continuous domains.

\subsection{Contributions}

At a theoretical level, the main contribution of the current work is a
domain-theoretic framework, which provides a unified foundation for
validated methods of robustness and security analysis of neural
networks, including interval method
of~\textcite{Wang_et_al:symbolic_intervals:2018}, Clarke-gradient
methods
of~\textcite{Jordan_Dimakis:Exactly_NeurIPS:2020,Jordan_Dimakis:Provable_ICML:2021},
\textcite{Bhowmick_et_al:LipBaB:2021},
and~\textcite{Laurel_et_al:Dual_Number:2022}, and those based on
abstract interpretation,
{\eg},~\parencite{Gehr_et_al:AI2:2018,Singh_et_al:Abstract_Domain_NN:2019}. In
terms of results, the main contributions are as follows:

\begin{description}
\item[Global robustness analysis:] We prove that feedforward
  classifiers always have tightest robust \emph{approximations}
  (Theorem~\ref{thm:classifiers_tightest_robust}), and regressors always
  have robust domain-theoretic \emph{extensions}
  (Theorem~\ref{thm:regressor_interval_robust_extension}).

\item[Local robustness analysis:] Using Edalat's domain-theoretic
  $L$-derivative---which coincides with Clarke-gradient---we develop
  an algorithm for computing the Lipschitz constant of feedforward
  networks. In particular, we prove \emph{soundness and completeness}
  of the algorithm. We also prove \emph{computability} of our method
  within the framework of effectively given domains. These results are
  presented in Lemma~\ref{lemma:soundness_maximization} and
  Theorems~\ref{thm:completeness}, \ref{thm:computability},
  \ref{thm:gen_pos_ReLU} and \ref{thm:differentiable_networks}. To the
  best of our knowledge, completeness and computability results are
  missing in the previous literature on the subject.

\item[Fully validated method:] As the domains involved are effectively
  given, our method has a simple and direct implementation using
  arbitrary-precision interval arithmetic, which is correct by
  construction. Furthermore, following the completeness theorems, we
  can obtain the results to within any given degree of accuracy, for a
  broad class of networks. 

\end{description}

\subsection{Structure of the paper}

The rest of this article is structured as follows: The preliminaries
and technical background are presented in
Section~\ref{sec:preliminaries}. Section~\ref{sec:global_robustness_analysis}
contains the results on global robustness analysis. In
Section~\ref{sec:validated_lipschitz_constant}, we present the main
results on validated local robustness
analysis. Section~\ref{sec:experiments} contains the account of some
relevant experiments. Section~\ref{sec:concluding_remarks} contains
the concluding remarks.

\section{Preliminaries}
\label{sec:preliminaries}

In this section, we present the technical background which will be
used later in the paper.

\subsection{Metric Structure}
\label{sec:metric_structure}

We must first specify the metric used in measuring perturbations of
the input, and the norm used on the relevant gradients. For every
$n \in \N$ and $p \in [1,\infty]$, let $\ell_p^n$ denote the normed
space $(\R^n, \norm{.}^{(n)}_p)$, in which the norm $\norm{.}^{(n)}_p$
is defined for all $x = (x_1, \ldots, x_n) \in \R^n$ by
$\norm{x}^{(n)}_p \defeq \left( \sum_{i=1}^{n} \absn{x_i}^p
\right)^{1/p}$, if $ p \in [1, \infty)$, and
$\norm{x}^{(n)}_p \defeq \max \setbarNormal{\absn{x_i}}{ 1 \leq i \leq
  n}$, if $p = \infty$. We let
$d_p^{(n)} : \R^n \times \R^n \to [0,\infty)$ denote the metric
induced by $\norm{.}^{(n)}_p$, {\ie},
$d_p^{(n)}(x,y) = \norm{x-y}^{(n)}_p$. When $n$ is clear from the
context, we drop the superscript and write $\norm{.}_p$ and $d_p$. For
every $p \in [1,\infty]$, define the conjugate $p' \in [1,\infty]$ as
follows:
\begin{equation}
  \label{eq:p_prime_conj}
  p' \defeq \left\{
      \begin{array}{ll}
        \infty, & \text{if } p = 1,\\
        p/(p-1), & \text{if } p \in (1,\infty), \\
        1, & \text{if } p = \infty.\\        
      \end{array}
      \right.
\end{equation}
\noindent
In our discussion, whenever the perturbation of the input is measured
using $\norm{.}^{(n)}_p$-norm, then the norm of the gradient must be
taken to be $\norm{.}^{(n)}_{p'}$. Furthermore, as the spaces
$\ell^n_p$ are finite-dimensional, the topology on these spaces does
not depend on $p$, a fact which is not true in infinite-dimensional
$\ell_p$ spaces. Thus, in the current article, whenever we mention a
closed, open, or compact subset of $\R^n$, the topology is assumed to
be that of $\ell^n_\infty$.

\begin{remark}
  \label{rem:func_analysis_dual_Banach}
  The readers familiar with functional analysis may notice that the
  Banach space $\ell^n_{p'}$ is the continuous dual of the Banach
  space $\ell^n_p$. This explains the correspondence between the norm
  on the gradient and the perturbation norm. To keep the discussion
  simple, we do not present any technical background from functional
  analysis. The interested reader may refer to classical texts on the
  subject, such
  as~\parencite{Rudin:Functional_Analysis:Book:1991,Albiac_Kalton:Banach_Theory:Book:2006}
\end{remark}

\begin{remark}
\label{rem:Wasserstein_metric}
  We will only consider perturbations with respect to the
  $\norm{.}^{(n)}_p$-norm, which is the default norm considered in
  most of the related literature. We point out, however, that other
  metrics have also been suggested for quantifying perturbations, most
  notably, the Wasserstein
  metric~\parencite{Wong_Schmidt_Kolter:Wasserstein_Adversarial:2019}. In
  fact, the choice of an adequate metric remains an open
  problem~\parencite{Serban_at_al:adversarial:2020}.
\end{remark}


\subsection{Compact Subsets}

Assume that $f: \R^n \to \R$ is (the semantic model of) a regressor
neural network with $n$ input neurons. In robustness analysis, we must
consider the action of $f$ over suitable \emph{subsets} of the input
space. Specifically, we work with closed subsets of both the input and
output spaces. Let us justify this choice formally.

Let $(\State,d)$ be a metric space with
distance $d: \State \times \State \to \R$. For any
$X \subseteq \State$ and $\delta>0$, we define the open neighborhood:
\begin{equation}
  \label{eq:B_X_delta}
  B(X,\delta)\defeq\{y \in \State \mid \exists x \in X :
  d(x,y)<\delta\}.
\end{equation}
The set $B(X,\delta)$ is indeed open, because it is the union of open
balls $B(\set{s},\delta)$ with $s \in X$. Intuitively, $B(X,\delta)$
is the set of points in $X$ with imprecision less than $\delta$. For
all $X \subseteq \State$ and $\delta>0$, we have
$X\subseteq\cl{X}\subseteq B(X,\delta)=B(\cl{X},\delta)$, in which
$\cl{X}$ denotes the closure of $X$. Thus, \emph{in the presence of
  imprecision, $X$ and $\cl{X}$ are indistinguishable}. Furthermore,
it is straightforward to verify that
$\cl{X}=\bigcap_{\delta>0}B(X,\delta)=\bigcap_{\delta>0}\cl{B(X,\delta)}$. Intuitively,
$\cl{B(X,\delta)}$ is the set of points in $X$ with imprecision less
than or equal to $\delta$. Thus, \emph{the closure $\cl{X}$ is the set
  of points that are in $X$ with arbitrarily small imprecision}.

In practice, the input space of a network is a bounded region such as
$\State = [-M,M]^n$. As $[-M,M]^n$ is compact, and closed subsets of
compact spaces are compact, in this paper, we will be working with
compact subsets. A more detailed discussion of the choice of closed
subsets, in the broader context of robustness analysis, may be found
in~\parencite{Moggi_Farjudian_Taha:System_Analysis_and_Robustness:ICTCS:2019}.

\subsection{Domain Theory}
\label{subsec:domains_theory}

Domain theory has its roots in topological
algebra~\parencite{Compendium:Book:1980,Keimel:Domain_Ramifications_Interactions:2017}, and it
has enriched computer science with powerful methods from order theory,
algebra, and topology. Domains gained prominence when they were
introduced as a mathematical model for lambda calculus
by~\textcite{Scott:Outline:1970}. We have already mentioned that our
domain-theoretic framework provides a unified foundation for validated
methods of robustness and security analysis of neural networks. Domain
theory, however, provides more than just a unifying framework:

\begin{enumerate}[label=(\arabic*)]

\item Domain-theoretic models are ideal for designing algorithms which
  are \emph{correct by construction}, as will be demonstrated in
  Section~\ref{sec:validated_lipschitz_constant}. 

\item Domains provide a denotational semantics for computable analysis
  according to
  \ac{TTE}~\parencite[Theorem~9.5.2]{Weihrauch2000:book}. Our
  computability results (Theorems~\ref{thm:computability},
  \ref{thm:gen_pos_ReLU}, and \ref{thm:differentiable_networks}) are
  obtained within the framework of effectively given
  domains~\parencite{Smyth:effectively_given_domains:1977}. As a
  consequence, our constructs can be implemented directly on digital
  computers.
\end{enumerate}

We present a brief reminder of the concepts and notations that will be
needed later. The interested reader may refer
to~\parencite{Compendium:Book:1980,AbramskyJung94-DT,Gierz-ContinuousLattices-2003,Goubault-Larrecq:Non_Hausdorff_topology:2013}
for more on domains in general, and refer
to~\parencite{Escardo96-tcs,Edalat:Domains_Physics:1997} for the
interval domain, in particular. A succinct and informative survey of
the theory may be found
in~\parencite{Keimel:Domain_Ramifications_Interactions:2017}.

For any subset $X$ of a \acf{POSET} $(D, \sqsubseteq)$, we write
$\bigsqcup X$ and $\bigsqcap X$ to denote the least upper bound, and
the greatest lower bound, of $X$, respectively, whenever they
exist. In our discussion, $x \sqsubseteq y$ may be interpreted as
`\emph{$y$ contains more information than $x$}'.  A subset
$X \subseteq D$ is said to be \emph{directed} if $X\neq \emptyset$ and
every two elements of $X$ have an upper bound in $X$, {\ie},
$\forall x, y \in X: \exists z \in X: (x \sqsubseteq z) \wedge (y
\sqsubseteq z)$. An $\omega$-chain---{\ie}, a set of the form
$x_0 \sqsubseteq x_1 \sqsubseteq \cdots \sqsubseteq x_n \sqsubseteq
\cdots$---is an important special case of a directed set. A \ac{POSET}
$(D, \sqsubseteq)$ is said to be a \emph{\ac{DCPO}} if every directed
subset $X \subseteq D$ has a least upper bound in $D$. The \ac{POSET}
$(D, \sqsubseteq)$ is said to be \emph{pointed} if it has a bottom
element, which we usually denote by $\bot_D$, or if $D$ is clear from
the context, simply by $\bot$. Directed-completeness is a basic
requirement for almost all the \acp{POSET} in our discussion.

\begin{example}
  \label{example:sigma_infinity}
  Assume that $\Sigma \defeq \set{0,1}$ and let $\Sigma^*$ denote the
  set of finite strings over $\Sigma$, ordered under the prefix
  relation. Then, $\Sigma^*$ is a \ac{POSET}, but it is not
  directed-complete. For instance, the set
  $\setbarNormal{0^n}{ n \in \N}$ is an $\omega$-chain, but it does
  not have the least upper bound, or any upper bound. If we let
  $\Sigma^\omega$ denote the set of countably infinite sequences over
  $\Sigma$, then the \ac{POSET}
  $\Sigma^\infty \defeq \Sigma^* \cup \Sigma^\omega$, ordered under
  prefix relation, is a \ac{DCPO}, and may be regarded as a completion
  of $\Sigma^*$. The \ac{POSET} $\Sigma^\infty$ is indeed a \ac{PDCPO}
  with the empty string as the bottom element.
\end{example}

We let $\kCPPO{\R^n\lift}$ denote the \ac{POSET} with the carrier set
$\setbarNormal{C \subseteq \R^n}{C \text{ is non-empty and compact}}
\cup \set{\R^n}$, ordered by reverse inclusion, {\ie},
$\forall X, Y \in \kCPPO{\R^n\lift}: X \sqsubseteq Y \iff Y \subseteq
X$. By further requiring the subsets to be convex, we obtain the
sub-poset $\cCPPO{\R^n\lift}$. Finally, we let $\intvaldom[\R^n\lift]$
denote the \ac{POSET} of hyperboxes of $\R^n$---{\ie}, subsets of the
form $\prod_{i=1}^n [a_i,b_i]$---ordered under reverse inclusion, with
$\R^n$ added as the bottom element. The three \acp{POSET} are \acp{PDCPO}
and $\lub X = \bigcap X$, for any directed subset $X$.

\begin{remark}
  The reverse inclusion order $X \sqsubseteq Y \iff Y \subseteq X$
  conforms to the interpretation of $X \sqsubseteq Y$ as `$Y$ contains
  more information than $X$', in that, compared with the larger set
  $X$, the subset $Y$ is a more accurate approximation of the result.
\end{remark}

A central concept in domain theory is that of the \emph{way-below}
relation. Assume that $(D, \sqsubseteq)$ is a \ac{DCPO} and let
$x,y \in D$. The element $x$ is said to be \emph{way-below}
$y$---written as $x \ll y$---iff for every directed subset $X$ of $D$,
if $y \sqsubseteq \lub X$, then there exists an element $d \in X$ such
that $x \sqsubseteq d$. Informally, $x$ is way-below $y$ iff it is
impossible to get past $y$ (by suprema of directed sets) without first
getting past $x$. We may also interpret $x \ll y$ as `$x$ is a
finitary approximation of $y$', or even as, `$x$ is a lot simpler than
$y$'~\parencite[Section~2.2.1]{AbramskyJung94-DT}.

It is easy to prove that
$\forall x, y \in D: x \ll y \implies x \sqsubseteq y$. The converse,
however, is not always true. For instance, over $\kCPPO{\R^n\lift}$
(hence, also over $\intvaldom[\R^n\lift]$ and $\cCPPO{\R^n\lift}$) we
have the following characterization:
\begin{equation}
  \label{eq:way_below_compact_sets}
  \forall K_1, K_2 \in \kCPPO{\R^n\lift}: K_1 \ll K_2 \iff K_2 \subseteq \interiorOf{K_1},
\end{equation}
in which $\interiorOf{K_1}$ denotes the interior of $K_1$.

\begin{example}
  Consider the interval domain $\intvaldom[\R^1\lift]$ and let
  $x \defeq [0,2]$ and $y \defeq [0,1]$. Although $x \sqsubseteq y$,
  we have $\interiorOf{x} = (0,2)$. Hence,
  by~\eqref{eq:way_below_compact_sets}, we have $x \not \ll y$. We can
  also prove that $x \not \ll y$ without referring
  to~\eqref{eq:way_below_compact_sets}. Take the $\omega$-chain
  $y_n \defeq [-2^{-n}, 1 + 2^{-n}]$ for $n \in \N$. We have
  $y = \lub_{n \in \N} y_n$, but
  $\forall n \in \N: x \nsqsubseteq y_n$.
\end{example}

\begin{example}
  In the poset $\Sigma^\infty$ of
  Example~\ref{example:sigma_infinity}, we have: $x \ll y \iff (x \sqsubseteq y) \wedge (x \in \Sigma^*)$.
\end{example}

For every element $x$ of a \ac{DCPO} $(D, \sqsubseteq)$, let
$\waybelows{x} \defeq \setbarNormal{a \in D}{a \ll x}$. In
domain-theoretic terms, the elements of $\waybelows{x}$ are the true
approximants of $x$. In fact, the way-below relation is also known as
the order of
approximation~\parencite[Section~2.2.1]{AbramskyJung94-DT}. A subset
$B$ of a \ac{DCPO} $(D, \sqsubseteq)$ is said to be a \emph{basis} for
$D$, if for every element $x \in D$, the set
$B_x \defeq \waybelows{x} \cap B$ is a directed subset with supremum
$x$, {\ie}, $x = \lub B_x$. A \ac{DCPO} $(D, \sqsubseteq)$ is said to
be continuous if it has a basis, and $\omega$-continuous if it has a
countable basis. We call $(D, \sqsubseteq)$ a \emph{domain} if it is a
continuous \ac{PDCPO}. The \acp{PDCPO} $\intvaldom[\R^n\lift]$,
$\kCPPO{\R^n\lift}$, and $\cCPPO{\R^n\lift}$, are all
$\omega$-continuous domains, because:

\label{page:domain}

\begin{itemize}
\item
  $B_{\intvaldom[\R^n\lift]} \defeq \set{\R^n} \cup \setbarNormal{C
    \in \intvaldom[\R^n\lift]}{C \text{ is a hyperbox with
      rational coordinates}}$ is a basis for $\intvaldom[\R^n\lift]$;

\item $B_{\kCPPO{\R^n\lift}} \defeq \set{\R^n} \cup \setbarNormal{C \in
  \kCPPO{\R^n\lift}}{C \text{ is a finite union of hyperboxes with
    rational coordinates}}$ is a basis for $\kCPPO{\R^n\lift}$;

\item
  $B_{\cCPPO{\R^n\lift}} \defeq \set{\R^n} \cup \setbarNormal{C \in
    \cCPPO{\R^n\lift}}{C \text{ is a convex polytope with rational
      coordinates}}$ is a basis for $\cCPPO{\R^n\lift}$.

\end{itemize}

\label{page:StoltenbergHansen_Tucker:mentioned}

The poset $\Sigma^\infty$ of Example~\ref{example:sigma_infinity} is
also an $\omega$-continuous domain, with $\Sigma^*$ as a countable
basis. There is a significant difference between $\Sigma^\infty$ and
the domains $\intvaldom[\R^n\lift]$, $\kCPPO{\R^n\lift}$, and
$\cCPPO{\R^n\lift}$, in that, for the elements of the basis $\Sigma^*$
of $\Sigma^\infty$, we have: $\forall x \in \Sigma^*: x \ll
x$. By~\eqref{eq:way_below_compact_sets}, this is clearly not true of
the bases $B_{\intvaldom[\R^n\lift]}$, $B_{\kCPPO{\R^n\lift}}$ and
$B_{\cCPPO{\R^n\lift}}$, or indeed, any other bases for
$\intvaldom[\R^n\lift]$, $\kCPPO{\R^n\lift}$, and
$\cCPPO{\R^n\lift}$. As such, the domain $\Sigma^\infty$ is an example
of an algebraic domain, whereas the domains that we use in the current
paper are non-algebraic. Although algebraic domains have been used in
real number
computation~\parencite{DiGianantonio:Real_Domain:IC:1996,Farjudian:Shrad:2007}
and, more broadly, representation of topological
spaces~\parencite{StoltenbergHansen_Tucker:Effective_Algebras:1995,StoltenbergHansen_Tucker:topological_algebras:1999},
non-algebraic domains have proven more suitable for computation over
continuous spaces, {\eg}, in dynamical
systems~\parencite{Edalat95:DT-fractals}, exact real number
computation~\parencite{Escardo96-tcs,Edalat:Domains_Physics:1997},
differential equation
solving~\parencite{Edalat_Pattinson2007-LMS_Picard,Edalat_Farjudian_Mohammadian_Pattinson:2nd_Order_Euler:2020:Conf},
and reachability analysis of hybrid
systems~\parencite{Edalat_Pattinson:Hybrid:2007,Moggi_Farjudian_Duracz_Taha:Reachability_Hybrid:2018},
to name a few. Hence, in this article, we will be working in the
framework of non-algebraic $\omega$-continuous domains.

Apart from the order-theoretic structure, domains also have a
topological structure. Assume that $(D, \sqsubseteq)$ is a
\ac{POSET}. A subset $O \subseteq D$ is said to be \emph{Scott open}
if it has the following properties:
  \begin{enumerate}[label=(\arabic*)]
  \item It is an upper set, that is,
    $\forall x \in O, \forall y \in D: x \sqsubseteq y \implies y \in
    O$.
  \item It is inaccessible by suprema of directed sets, that is, for
    every directed set $X \subseteq D$ for which $\lub X$ exists, if
    $\lub X \in O$ then $X \cap O \neq \emptyset$.
  \end{enumerate}

  \begin{example}
    Consider the poset $\R$ under the usual order on real
    numbers. Then, for any $a \in \R$, the set
    $\setbarNormal{x \in \R}{x > a}$ is Scott open. Note that
    $(\R, \leq)$ is not a domain, because it is not even complete. If
    we extend $\R$ to $\R_{\pm \infty} \defeq [-\infty, +\infty]$ with
    $\forall x \in \R: -\infty < x < +\infty$, then
    $(\R_{\pm \infty}, \leq)$ is an $\omega$-continuous domain (in
    fact, an $\omega$-continuous lattice) with $\Q \cup \set{-\infty}$
    as a basis, and each set of the form $(a,+\infty]$ is Scott open.
  \end{example}

  \begin{example}
    Consider the domain $\intvaldom[\R\lift]$. The set
    $\setbarNormal{[a,b]}{[a,b] \subseteq (0,1)}$ is Scott open in
    $\intvaldom[\R\lift]$. See also
    Proposition~\ref{prop:Scott_wayaboves}
    \vpageref[below]{prop:Scott_wayaboves}.
  \end{example}

The collection of all Scott open subsets of a \ac{POSET} forms a $T_0$
topology, referred to as the Scott topology. A function
$f: (D_1, \sqsubseteq_1) \to (D_2, \sqsubseteq_2)$ is said to be
Scott-continuous if it is continuous with respect to the Scott
topologies on $D_1$ and $D_2$. Scott continuity can be stated purely
in order-theoretic terms. A map
$f: (D_1, \sqsubseteq_1) \to (D_2, \sqsubseteq_2)$ between two posets
is Scott continuous if and only if it is monotonic and preserves the
suprema of directed sets, {\ie}, for every directed set
$X \subseteq D_1$ for which $\lub X$ exists, we have
$f(\lub X) = \lub
f(X)$~\parencite[Proposition~4.3.5]{Goubault-Larrecq:Non_Hausdorff_topology:2013}. When
$D_1$ and $D_2$ are $\omega$-continuous domains, then we have an even
simpler formulation:

  \begin{proposition}[{\parencite[Proposition~2.2.14]{AbramskyJung94-DT}}]
    \label{prop:Scott_w_cont}
    Assume that $f: D_1 \to D_2$ is
    a map between two $\omega$-continuous domains. Then, $f$ is Scott
    continuous iff it is monotonic and preserves the suprema of
    $\omega$-chains.
  \end{proposition}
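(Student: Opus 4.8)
The plan is to prove the two implications separately, with the forward direction being immediate and the reverse direction carrying all the content. For \emph{Scott continuity implies monotonicity and preservation of $\omega$-chain suprema}, I would simply invoke the order-theoretic characterization recalled just above the statement: a Scott-continuous map is monotone and preserves the suprema of \emph{all} directed sets. Since every $\omega$-chain is a directed set, such a map in particular preserves the suprema of $\omega$-chains. This direction uses neither continuity nor $\omega$-continuity of the domains, and is the routine half.

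For \emph{monotonicity and preservation of $\omega$-chain suprema imply Scott continuity}, the same characterization reduces the task to showing that $f$ preserves the supremum of an arbitrary directed set $X \subseteq D_1$. Writing $y \defeq \lub X$, monotonicity of $f$ makes $f(X)$ directed, so $\lub f(X)$ exists in the dcpo $D_2$; moreover $f(x) \sqsubseteq f(y)$ for every $x \in X$, so $f(y)$ is an upper bound of $f(X)$ and hence $\lub f(X) \sqsubseteq f(y)$. It therefore remains to establish the reverse inequality $f(y) \sqsubseteq \lub f(X)$, which is where $\omega$-continuity of $D_1$ is used.

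Let $B$ be a countable basis for $D_1$ and set $B_y \defeq \waybelows{y} \cap B$, which by definition of a basis is directed with $\lub B_y = y$. Since $B_y$ is a \emph{countable} directed set, I would extract a cofinal $\omega$-chain from it: enumerating $B_y = \set{d_0, d_1, \ldots}$, setting $e_0 \defeq d_0$, and choosing (by directedness) each $e_{n+1} \in B_y$ above both $e_n$ and $d_{n+1}$, yields an increasing sequence $(e_n)_{n \in \N}$ that is cofinal in $B_y$, so $\lub_{n} e_n = y$. The hypothesis on $\omega$-chains then gives $f(y) = f(\lub_n e_n) = \lub_n f(e_n)$. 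To bound the right-hand side, note that each $e_n \in \waybelows{y}$ satisfies $e_n \ll y = \lub X$, so the very definition of the way-below relation, applied to the directed set $X$, furnishes some $x_n \in X$ with $e_n \sqsubseteq x_n$; monotonicity then gives $f(e_n) \sqsubseteq f(x_n) \sqsubseteq \lub f(X)$. Hence $f(y) = \lub_n f(e_n) \sqsubseteq \lub f(X)$, and combined with the earlier inequality this proves $f(y) = \lub f(X)$.

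The main obstacle is the reverse direction, and specifically the two points where $\omega$-continuity is indispensable: countability of the basis is exactly what guarantees that $B_y$ admits a cofinal $\omega$-chain (an arbitrary directed set need not), and the way-below property $e_n \ll y$ is what lets me transfer the approximation from the basis element $e_n$ to a genuine member $x_n$ of $X$, so that $\lub_n f(e_n)$ is controlled by $\lub f(X)$ rather than merely by $\lub f(B_y)$. Everything else --- directedness of $f(X)$, existence of the relevant suprema in $D_2$, and the trivial inequality $\lub f(X) \sqsubseteq f(y)$ --- is straightforward bookkeeping.
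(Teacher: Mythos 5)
Your proof is correct: the paper does not prove this proposition itself but cites it as Proposition~2.2.14 of Abramsky and Jung, and your argument --- extracting a cofinal $\omega$-chain from the countable set $B_y$ of basis approximants of $y$ and then using $e_n \ll y = \bigsqcup X$ to replace each chain element by a member of the directed set $X$ --- is essentially the standard proof given in that reference. The only point worth noting is that $B_y$ may be finite, in which case the enumeration must allow repetitions, but the construction goes through unchanged.
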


  For every element $x$ of a \ac{DCPO} $(D, \sqsubseteq)$, let
  $\wayaboves{x} \defeq \setbarNormal{a \in D}{x \ll a}$.
  \begin{proposition}[{\parencite[Proposition~2.3.6]{AbramskyJung94-DT}}]
    \label{prop:Scott_wayaboves}
    Let $D$ be a domain with a basis $B$. Then, for each $x \in D$,
    the set $\wayaboves{x}$ is Scott open, and the collection
    ${\mathcal{O}} \defeq \setbarTall{\wayaboves{x}}{x \in B}$ forms a
    base for the Scott topology.
  \end{proposition}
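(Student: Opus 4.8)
The plan is to establish the two assertions separately: first that each $\wayaboves{x}$ is Scott open, and then that $\mathcal{O}$ is a base for the Scott topology. I expect the Scott-openness of $\wayaboves{x}$ to carry essentially all the difficulty, while the base property then follows routinely from the defining property of a basis together with Scott-openness.

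For Scott-openness of $\wayaboves{x}$ I must verify the two clauses in the definition of a Scott open set. The upper-set clause is immediate from the \emph{upward stability on the right} of the way-below relation: if $x \ll a$ and $a \sqsubseteq b$, then for any directed $X$ with $b \sqsubseteq \lub X$ we also have $a \sqsubseteq \lub X$, so some $d \in X$ satisfies $x \sqsubseteq d$; hence $x \ll b$, i.e. $b \in \wayaboves{x}$. The inaccessibility clause is where the real work lies: given a directed $X$ with $\lub X \in \wayaboves{x}$, that is $x \ll \lub X$, I must produce a single member of $X$ already lying in $\wayaboves{x}$. Applying the definition of $\ll$ directly to $X$ only yields some $d \in X$ with $x \sqsubseteq d$, which is too weak, since I actually need $x \ll d$.

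The decisive tool to bridge this gap is the \emph{interpolation property} of the way-below relation in a continuous domain: whenever $x \ll z$, there exists $w$ with $x \ll w \ll z$. Granting interpolation, the inaccessibility clause is quick: from $x \ll \lub X$ choose $w$ with $x \ll w \ll \lub X$; applying $w \ll \lub X$ to the directed set $X$ gives $d \in X$ with $w \sqsubseteq d$, and then $x \ll w \sqsubseteq d$ yields $x \ll d$ by the upward stability already noted, so $d \in X \cap \wayaboves{x}$. The main obstacle is therefore interpolation itself, which is not among the results quoted above; I would establish it from the basis $B$ by forming the doubly-approximating set $\Delta \defeq \setbarNormal{c \in B}{\exists b \in B: c \ll b \ll z}$, showing that $\Delta$ is directed and that $\lub \Delta = z$ (both using that each $B_y$ is directed with supremum $y$, together with transitivity of $\ll$), and then feeding $x \ll z = \lub \Delta$ into the definition of $\ll$ to obtain $c \in \Delta$ with $x \sqsubseteq c \ll b \ll z$, whence $w \defeq b$ interpolates.

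Finally, for the base property I would show that every Scott open $U$ is the union of those $\wayaboves{b}$, with $b \in B$, that it contains. Fix $y \in U$. Since $B$ is a basis, $B_y = \waybelows{y} \cap B$ is directed with $\lub B_y = y \in U$, so inaccessibility of $U$ yields some $b \in B_y \cap U$; this $b$ satisfies $b \ll y$, so $y \in \wayaboves{b}$, and, because $U$ is an upper set while $a \in \wayaboves{b}$ forces $b \sqsubseteq a$, we also get $\wayaboves{b} \subseteq U$. As each $\wayaboves{b}$ is Scott open by the first part, this exhibits $U$ as a union of members of $\mathcal{O}$, which is exactly the base property and completes the argument.
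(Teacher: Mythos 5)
Your argument is correct and is essentially the standard one: the paper does not prove this proposition but cites \parencite[Proposition~2.3.6]{AbramskyJung94-DT}, and the proof given there proceeds exactly as you do, deriving the interpolation property of $\ll$ from the basis and then using it for the inaccessibility clause and the base property. Nothing is missing; your sketch of the directedness of $\Delta$ and of $\lub \Delta = z$ fills in correctly.
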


  The maximal elements of $\intvaldom[\R^n\lift]$,
  $\kCPPO{\R^n\lift}$, and $\cCPPO{\R^n\lift}$ are singletons, and the
  sets of maximal elements may be identified with $\R^n$. For
  simplicity, we write $x$ to denote a maximal element $\set{x}$. As a
  corollary of Proposition~\ref{prop:Scott_wayaboves} and
  characterization~\eqref{eq:way_below_compact_sets}, we obtain:

    \begin{corollary}
    \label{cor:Scott_restrict_Euclid}
    Let ${\cal O}_S$ be the Scott topology on $\intvaldom[\R^n\lift]$,
    $\kCPPO{\R^n\lift}$, or $\cCPPO{\R^n\lift}$. Then, the restriction
    of ${\mathcal{O}}_S$ over $\R^n$ is the Euclidean topology.
  \end{corollary}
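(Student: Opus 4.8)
The plan is to trace a base of the Scott topology onto the set of maximal elements and recognise it as a base of the Euclidean topology. Let $D$ denote any of the three domains $\intvaldom[\R^n\lift]$, $\kCPPO{\R^n\lift}$, $\cCPPO{\R^n\lift}$, with the countable basis $B$ listed earlier, and recall from Proposition~\ref{prop:Scott_wayaboves} that $\mathcal{O} \defeq \setbarNormal{\wayaboves{K}}{K \in B}$ is a base for the Scott topology $\mathcal{O}_S$. Since the trace of a base is a base for the subspace topology, the family $\setbarNormal{\wayaboves{K} \cap \R^n}{K \in B}$ is a base for the restriction of $\mathcal{O}_S$ to the maximal elements, which we identify with $\R^n$ as agreed above.

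The only step with real content is the computation of these traces. A maximal element $\set{x}$ belongs to $\wayaboves{K}$ precisely when $K \ll \set{x}$; by the way-below characterisation~\eqref{eq:way_below_compact_sets} this is equivalent to $\set{x} \subseteq \interiorOf{K}$, {\ie}, to $x \in \interiorOf{K}$. Hence, under the identification of maximal elements with $\R^n$, we have $\wayaboves{K} \cap \R^n = \interiorOf{K}$, which is Euclidean-open. For the bottom element $K = \R^n \in B$ this degenerates gracefully, since $\wayaboves{\R^n} = D$ traces to $\interiorOf{(\R^n)} = \R^n$. Thus the restricted Scott topology has $\setbarNormal{\interiorOf{K}}{K \in B}$ as a base, and every member of this base is Euclidean-open, so the restricted Scott topology is coarser than (or equal to) the Euclidean topology.

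It remains to check the reverse inclusion, namely that $\setbarNormal{\interiorOf{K}}{K \in B}$ already generates the Euclidean topology. For $\intvaldom[\R^n\lift]$ the interiors of the basis hyperboxes are exactly the open rational hyperboxes $\prod_{i=1}^n (a_i,b_i)$, a standard base of the Euclidean topology. For $\kCPPO{\R^n\lift}$ and $\cCPPO{\R^n\lift}$ these same open rational hyperboxes occur among the interiors as a special case---a single hyperbox is both a finite union of hyperboxes and a convex polytope---and all interiors are open; since a family of open sets that contains a base is itself a base, the claim follows in all three cases. Comparing the two bases shows that the restriction of $\mathcal{O}_S$ to $\R^n$ coincides with the Euclidean topology. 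I do not anticipate a genuine obstacle: the whole argument rests on~\eqref{eq:way_below_compact_sets} to convert way-above sets into interiors, after which the three bases are handled uniformly by reducing to open rational hyperboxes. The only points requiring care are keeping the identification of maximal elements with $\R^n$ consistent and treating the bottom element $\R^n \in B$ correctly.
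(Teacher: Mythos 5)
Your proof is correct and follows exactly the route the paper intends: the paper states this as an immediate corollary of Proposition~\ref{prop:Scott_wayaboves} and the characterization~\eqref{eq:way_below_compact_sets}, and your argument (tracing the base $\wayaboves{K}$ onto the maximal elements to get $\interiorOf{K}$, then recovering the open rational hyperboxes as a Euclidean base) is precisely the elaboration of that one-line justification.
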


  Thus, the sets of maximal elements are homeomorphic to $\R^n$. For
  any $K \geq 0$, by restricting to $[-K,K]^n$, we obtain the
  $\omega$-continuous domains $\intvaldom[{[-K,K]^n}]$,
  $\kCPPO{[-K,K]^n}$, and $\cCPPO{[-K,K]^n}$, respectively.

  We say that a \ac{POSET} $(D,\sqsubseteq)$ is
  \emph{bounded-complete} if each bounded pair $x,y \in D$ has a
  supremum. Assume that $(X, \Omega(X))$ is a topological space, and
  let $(D, \sqsubseteq_D)$ be a bounded-complete domain. We let
  $(D, \Scott{D})$ denote the topological space with carrier set $D$
  under the Scott topology $\Scott{D}$. The space $[X \to D]$ of
  functions $f: X \to D$ which are $(\Omega(X), \Scott{D})$ continuous
  can be ordered pointwise by defining:
\begin{equation*}
  \forall f, g \in [X \to D]: \quad f \sqsubseteq g \iff \forall x \in X:
  f(x) \sqsubseteq_D g(x).
\end{equation*}
It is straightfoward to verify that the \ac{POSET}
$([X \to D], \sqsubseteq)$ is directed-complete and
$\forall x \in X: (\lub_{i \in I} f_i)(x) = \lub
\setbarNormal{f_i(x)}{i \in I}$, for any directed subset
$\setbarNormal{f_i}{i \in I}$ of $[X \to D]$. For the \ac{POSET}
$([X \to D], \sqsubseteq)$ to be continuous, however, the topological
space $(X, \Omega(X))$ must be \emph{core-compact}, as we explain
briefly. Consider the \ac{POSET} $(\Omega(X), \subseteq)$ of open
subsets of $X$ ordered under subset relation. For any topological
space $X$, this \ac{POSET} is a complete lattice, with $\emptyset$ as
the bottom element, and $X$ as the top element. Furthermore, we have:
\begin{equation*}
  \forall A \subseteq \Omega(X): \quad \lub A = \bigcup A \text{ and
  } \glb A = \interiorOf{(\bigcap A)}.
\end{equation*}
A topological space $(X, \Omega(X))$ is said to be core-compact if and
only if the lattice $(\Omega(X), \subseteq)$ is continuous. 
\begin{theorem}
  \label{thm:core_compact}
  For any topological space $(X, \Omega(X))$ and non-singleton
  bounded-complete continuous domain $(D, \sqsubseteq_D)$, the
  function space $([X \to D], \sqsubseteq)$ is a bounded-complete
  continuous domain $\iff (X, \Omega(X))$ is core-compact.
\end{theorem}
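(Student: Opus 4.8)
The plan is to prove both implications using \emph{single-step functions} as the central tool. For an open set $U \in \Omega(X)$ and an element $d \in D$, let $(U \searrow d) : X \to D$ denote the map sending $x$ to $d$ if $x \in U$ and to $\bot$ otherwise. Since $\wayaboves{d}$ is Scott open (Proposition~\ref{prop:Scott_wayaboves}) and the preimage of any Scott open set under $(U \searrow d)$ is one of $\emptyset$, $U$, or $X$, each step function lies in $[X \to D]$. The whole argument rests on a single \textbf{key lemma}: for $U, V \in \Omega(X)$ and $d \in D$,
\[
  (U \ll V \text{ in } \Omega(X)) \wedge (\forall x \in V : d \ll f(x)) \implies (U \searrow d) \ll f \text{ in } [X \to D].
\]
To prove it, I would take a directed family $\set{f_i}$ with $f \sqsubseteq \lub_i f_i =: g$ and consider the open sets $W_i \defeq \inv{f_i}(\wayaboves{d}) = \setbarNormal{x}{d \ll f_i(x)}$. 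Using the interpolation property of $\ll$ in the continuous domain $D$ together with $d \ll f(x) \sqsubseteq g(x)$, I would show $V \subseteq \bigcup_i W_i$; since $\set{W_i}$ is directed and $U \ll V$, some $W_i$ already contains $U$, which yields $(U \searrow d) \sqsubseteq f_i$.

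For the ($\Leftarrow$) direction, assume $X$ is core-compact and fix $f \in [X \to D]$. For any $x_0 \in X$ and any $d \ll f(x_0)$, the set $\inv{f}(\wayaboves{d})$ is open and contains $x_0$; core-compactness supplies an open $V \ll \inv{f}(\wayaboves{d})$ with $x_0 \in V$, and the key lemma (applied with big set $\inv{f}(\wayaboves{d})$ and small set $V$) gives $(V \searrow d) \ll f$ with value $d$ at $x_0$. Bounded-completeness of $D$ lifts pointwise to $[X \to D]$, so finite joins of such step functions exist, stay below $f$, and are again way-below $f$; hence the family of these finite joins is directed, and evaluating at each point while using $f(x_0) = \lub \waybelows{f(x_0)}$ shows its supremum is $f$. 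This exhibits a basis of finite joins of step functions, proving $[X \to D]$ is a continuous domain; it is pointed by the constant $\bot$ map and bounded-complete pointwise.

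For the ($\Rightarrow$) direction, assume $[X \to D]$ is continuous. Because $D$ is non-singleton and pointed, I can choose $d \neq \bot$; continuity then yields some $e \ll d$ with $e \neq \bot$ (otherwise $d = \lub \waybelows{d} = \bot$). The map $U \mapsto (U \searrow d)$ is an order embedding preserving directed suprema, and I would transfer the way-below structure back along it. Given $g \ll (V \searrow d)$, I claim the open set $W \defeq \inv{g}(\wayaboves{e})$ satisfies $W \ll V$ in $\Omega(X)$: testing against a directed cover $\set{O_j}$ of $V$, the step functions $(O_j \searrow d)$ join to something above $V \searrow d$, so $g \sqsubseteq (O_j \searrow d)$ for some $j$, and $e \neq \bot$ forces $W \subseteq O_j$. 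Finally, for $x_0 \in V$ I interpolate $e \ll e' \ll d$ and use continuity of $[X \to D]$ at $V \searrow d$ to produce $g \ll (V \searrow d)$ with $e \ll g(x_0)$, so $x_0 \in W \ll V$. Since $\waybelows{V}$ is directed in the complete lattice $\Omega(X)$ and its union is $V$, the lattice is continuous, {\ie}, $X$ is core-compact.

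I expect the main obstacle to be the key lemma, since it is the only place where core-compactness (the relation $U \ll V$ among opens) and continuity of $D$ (interpolation) genuinely interact; once it is in place, both directions reduce to bookkeeping with step functions, with bounded-completeness handling the passage from single-step functions to a directed basis in the ($\Leftarrow$) direction, and non-singularity of $D$ supplying the element $e \neq \bot$ needed in the ($\Rightarrow$) direction.
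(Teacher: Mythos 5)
Your argument is correct, but it is worth noting that the paper does not actually prove this theorem: it delegates both directions to Erker, Esc\'ardo and Keimel (Proposition~2 and pages~62--63 of the cited reference). What you have written is essentially the classical self-contained proof that that reference (and the Compendium) gives, organized around the standard key lemma that $U \ll V$ in $\Omega(X)$ together with $d \ll f(x)$ for all $x \in V$ forces $(U \searrow d) \ll f$. Your proof of the key lemma is sound (the interpolation $d \ll d' \ll f(x)$ is exactly what is needed to land in some $W_i = \inv{f_i}(\wayaboves{d})$ rather than merely above $d$), the use of core-compactness to manufacture $V \ll \inv{f}(\wayaboves{d})$ through $x_0$ is right, and in the converse direction the trick of picking $\bot \neq e \ll d$ and testing $W \defeq \inv{g}(\wayaboves{e})$ against a directed cover of $V$ is precisely how non-singularity of $D$ enters. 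So the net effect is that you supply the proof the paper chose to outsource.

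Two places deserve a little more care than your sketch gives them. First, the phrase ``bounded-completeness of $D$ lifts pointwise to $[X \to D]$'' hides the only genuinely non-trivial verification in the ($\Leftarrow$) direction: one must check that the pointwise join of a bounded pair of continuous maps is again continuous. For finite joins of single-step functions this can be checked directly (the preimage of a Scott-open set is a union of finite intersections of the $U_i$, hence open), and once the step functions are known to form a basis, bounded-completeness of the whole function space follows by taking the directed supremum of the bounded step functions below a given bounded pair---so the order of argument should be: step functions first, bounded-completeness of $[X\to D]$ as a consequence, not as an input. Second, in the ($\Rightarrow$) direction you implicitly use that evaluation at $x_0$ preserves directed suprema; this is covered by the paper's remark that suprema in $[X \to D]$ are computed pointwise, but it should be invoked explicitly when you pass from $e' \ll d = \lub\setbarNormal{g(x_0)}{g \ll (V \searrow d)}$ to a particular $g$ with $e' \sqsubseteq g(x_0)$. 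Neither point is a gap in the idea, only in the bookkeeping.
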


\begin{proof}
  For the ($\Leftarrow$) direction,
  see~\parencite[Proposition~2]{Erker_et_al:way_below:1998}. A proof
  of the ($\Rightarrow$) direction can also be found
  on~\parencite[pages 62 and 63]{Erker_et_al:way_below:1998}.
\end{proof}

  \begin{notation}[$X \Rightarrow D$]
    \label{notation:fun_space}
    For any core-compact topological space $(X, \Omega(X))$ and
    bounded-complete continuous domain $(D, \sqsubseteq_D)$, we denote
    the continuous domain $([X \to D], \sqsubseteq)$ by the notation
    $X \Rightarrow D$.
  \end{notation}

  All the domains that will be used in the framework developed in this
  article---including, $\intvaldom[{[-K,K]^n}]$, $\kCPPO{[-K,K]^n}$,
  and $\cCPPO{[-K,K]^n}$---are bounded-complete, and they are
  core-compact under their respective Scott topologies. Furthermore,
  the space $[-K,K]^n$ under the Euclidean topology is
  core-compact. For more on core-compact spaces, the interested reader
  may refer to, {\eg}, ~\parencite[Chapter~II]{Compendium:Book:1980}
  and~\parencite[Chapter~5]{Goubault-Larrecq:Non_Hausdorff_topology:2013}.

\label{page:interval_function}
  
  Throughout this article, by an \emph{interval function} over a set
  $X$ we mean a function $f: X \to \intvaldom[\R^n\lift]$ for some
  $n \geq 1$. For any set $X \subseteq \R^n$, we let $D^{(0)}(X)$
  denote the continuous domain of interval functions
  $X \Rightarrow \intvaldom[\R\lift]$. Each function in $D^{(0)}(X)$
  is Euclidean-Scott-continuous, {\ie}, continuous with respect to the
  Euclidean topology on $X$ and Scott topology on
  $\intvaldom[\R\lift]$. The functions in $D^{(0)}(X)$ have a useful
  characterization in terms of semicontinuity. Recall that, if $X$ is
  any topological space, then:

\begin{itemize}
\item $f: X \to \R$ is said to be upper semi-continuous at $x_0 \in X$
  iff, for every $y > f(x_0)$, there exists a neighborhood $U$ of
  $x_0$ (in the topology of $X$) such that
  $\forall x \in U: f(x) < y$.

\item $f: X \to \R$ is said to be lower semi-continuous at $x_0 \in X$
  iff, for every $y < f(x_0)$, there exists a neighborhood $U$ of
  $x_0$ such that $\forall x \in U: f(x) > y$.

\item $f: X \to \R$ is said to be upper (respectively, lower)
  semi-continuous iff it is upper (respectively, lower)
  semi-continuous at every $x_0 \in X$.

\end{itemize}

\begin{proposition}[\parencite{Edalat_Lieutier:Domain_Calculus_One_Var:MSCS:2004}]
    \label{prop:Euclidean_Scott_Cont}
    Assume that $X \subseteq \R^n$ and
    $f \equiv [\lep[f], \uep[f]] \in D^{(0)}(X)$, in which $\lep[f]$
    and $\uep[f]$ are the lower and upper bounds of the interval
    function $f$, respectively. Then, $f$ is
    Euclidean-Scott-continuous $\iff$ $\lep[f]$ is lower
    semicontinuous, and $\uep[f]$ is upper semicontinuous.
  \end{proposition}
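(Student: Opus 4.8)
The plan is to reduce both implications to a single statement about a convenient subbase for the Scott topology on $\intvaldom[\R\lift]$. For every $a \in \R$, set
\[
  U_a \defeq \setbarNormal{y \in \intvaldom[\R\lift]}{\lep[y] > a}
  \quad\text{and}\quad
  V_a \defeq \setbarNormal{y \in \intvaldom[\R\lift]}{\uep[y] < a},
\]
where $\lep[y]$ and $\uep[y]$ denote the endpoints of $y$, under the convention $\lep[\bot] = -\infty$ and $\uep[\bot] = +\infty$ for the bottom element $\bot = \R$. The key point is that $f^{-1}(U_a) = \setbarNormal{x \in X}{\lep[f](x) > a}$ and $f^{-1}(V_b) = \setbarNormal{x \in X}{\uep[f](x) < b}$ are exactly the superlevel sets of $\lep[f]$ and the sublevel sets of $\uep[f]$. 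Hence, once $U_a$ and $V_b$ are known to be Scott open and to generate the Scott topology, continuity of $f$ will be equivalent to openness of all these preimages, which by definition is lower semicontinuity of $\lep[f]$ together with upper semicontinuity of $\uep[f]$.

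First I would verify that each $U_a$ and each $V_a$ is Scott open. The upper-set property is immediate: if $\lep[y] > a$ and $y \sqsubseteq z$, then $z \subseteq y$ forces $\lep[z] \ge \lep[y] > a$. For inaccessibility by directed suprema I would use the fact, recorded earlier in the excerpt, that $\lub S = \bigcap S$ for any directed $S$; writing $\lub S = [c,d] \in U_a$, one has $c = \sup \setbarNormal{\lep[I]}{I \in S} > a$, so some $I \in S$ already satisfies $\lep[I] > a$, giving $S \cap U_a \neq \emptyset$. The argument for $V_a$ is symmetric, with $d = \inf \setbarNormal{\uep[I]}{I \in S}$. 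To see that these sets generate the Scott topology, I would observe that for $a < b$ one has $U_a \cap V_b = \wayaboves{[a,b]}$: by~\eqref{eq:way_below_compact_sets} an interval lies in $\wayaboves{[a,b]}$ precisely when it is contained in $(a,b)$, i.e.\ has lower endpoint $>a$ and upper endpoint $<b$, while $\bot$ belongs to neither side. Since the sets $\wayaboves{[a,b]}$ with rational $a,b$ form a base for the Scott topology by Proposition~\ref{prop:Scott_wayaboves}, the collection $\setbarNormal{U_a, V_b}{a,b \in \Q}$ is a subbase.

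With these facts both directions follow at once. For ($\Leftarrow$), lower semicontinuity of $\lep[f]$ and upper semicontinuity of $\uep[f]$ make every $f^{-1}(U_a)$ and $f^{-1}(V_b)$ open, so $f$ is continuous because the $U_a, V_b$ form a subbase. For ($\Rightarrow$), continuity of $f$ makes each $f^{-1}(U_a)$ and $f^{-1}(V_b)$ open for all real $a,b$, and these are exactly the superlevel and sublevel sets characterizing semicontinuity. I expect the only delicate point to be the Scott-openness of $U_a$ and $V_a$ — specifically the directed-suprema condition, which must be argued at the level of suprema and infima of endpoints, taking care that the bottom element $\R$, whose endpoints are $\pm\infty$, is consistently excluded from both $U_a$ and $V_b$.
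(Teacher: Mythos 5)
The paper does not prove this proposition at all: it is imported verbatim from the cited work of Edalat and Lieutier, so there is no in-paper argument to compare against. Your proof is correct and is essentially the standard argument for that cited result: the sets $U_a$ and $V_b$ are Scott open, they form a subbase since $U_a\cap V_b=\wayaboves{[a,b]}$ and Proposition~\ref{prop:Scott_wayaboves} makes the sets $\wayaboves{b}$ for basis elements $b$ a base, and the preimages of $U_a$ and $V_b$ are exactly the superlevel and sublevel sets whose openness defines the two semicontinuity conditions. The only point worth making explicit is that $\lep[f]$ and $\uep[f]$ take values $\mp\infty$ wherever $f$ is $\bot$, so the semicontinuity statements must be read in the extended-real sense; your convention for the bottom element handles this consistently.
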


For any compact set $K \in \kCPPO{\R^n\lift}$, we define the hyperbox
closure as:
  \begin{equation}
    \label{eq:hyper_rect_clos}
    \hbClos{K} \defeq \lub{\setbarNormal{R \in
        \intvaldom[\R^n\lift]}{K \subseteq R}}. 
  \end{equation}
  As $B_{\intvaldom[\R^n\lift]}$ is a basis for
  $\intvaldom[\R^n\lift]$, it is straightforward to show that:
  \begin{proposition}
    \label{prop:K_box_basis_formulation}
    For any compact set $K \in \kCPPO{\R^n\lift}$, we have
    $\hbClos{K} =   \lub{\setbarNormal{b \in
        B_{\intvaldom[\R^n\lift]}}{ b \ll K}}$.
  \end{proposition}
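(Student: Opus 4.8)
The plan is to reduce the claimed identity to the defining property of the basis $B_{\intvaldom[\R^n\lift]}$. First I would observe that, by definition~\eqref{eq:hyper_rect_clos}, $\hbClos{K}$ is itself an element of $\intvaldom[\R^n\lift]$: the set $\setbarNormal{R \in \intvaldom[\R^n\lift]}{K \subseteq R}$ is closed under finite intersections and hence directed under reverse inclusion, and its supremum is the tightest hyperbox enclosing $K$ (with $\hbClos{\R^n} = \R^n$). So it remains only to evaluate the right-hand side of the proposition and show that it equals this hyperbox.

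The key step is the set equality
\[
  \setbarNormal{b \in B_{\intvaldom[\R^n\lift]}}{b \ll K} = \setbarNormal{b \in B_{\intvaldom[\R^n\lift]}}{b \ll \hbClos{K}},
\]
where on the left $\ll$ is taken in $\kCPPO{\R^n\lift}$ and on the right in $\intvaldom[\R^n\lift]$. Applying the characterization~\eqref{eq:way_below_compact_sets} on each side, this reduces to the purely geometric fact that, for a hyperbox $b$,
\[
  K \subseteq \interiorOf{b} \iff \hbClos{K} \subseteq \interiorOf{b}.
\]
The direction ($\Leftarrow$) is immediate from $K \subseteq \hbClos{K}$. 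For ($\Rightarrow$), since $\interiorOf{b}$ is an open hyperbox $\prod_i (a_i,c_i)$ and $K$ is compact, the inclusion $K \subseteq \interiorOf{b}$ forces $a_i < \min_{x \in K} x_i \le \max_{x \in K} x_i < c_i$ in each coordinate $i$; as $\hbClos{K} = \prod_i [\min_{x \in K} x_i, \max_{x \in K} x_i]$, this yields $\hbClos{K} \subseteq \interiorOf{b}$. The degenerate case $K = \R^n$, in which $\hbClos{K} = \R^n$ is the bottom element and is way-below only itself, is handled separately and trivially.

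Once this set equality is established, the right-hand side of the proposition is exactly $\waybelows{\hbClos{K}} \cap B_{\intvaldom[\R^n\lift]}$, which is the set $B_{\hbClos{K}}$ appearing in the definition of a basis. Since $B_{\intvaldom[\R^n\lift]}$ is a basis for the continuous domain $\intvaldom[\R^n\lift]$, this set is directed with supremum $\hbClos{K}$, so its least upper bound is precisely $\hbClos{K}$, completing the argument.

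I expect the main (though modest) obstacle to be the bookkeeping of \emph{which} poset the way-below relation lives in: the proposition states $b \ll K$ for $K \in \kCPPO{\R^n\lift}$, whereas the basis property I wish to invoke concerns $\ll$ in $\intvaldom[\R^n\lift]$. The geometric equivalence displayed above is exactly what transfers between the two, and checking it together with the boundary and degenerate cases is where the genuine content sits; the remainder is a direct appeal to the basis axiom.
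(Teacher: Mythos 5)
Your proof is correct and follows the route the paper intends: the paper dismisses this as ``straightforward'' from the basis property of $B_{\intvaldom[\R^n\lift]}$, and your argument supplies exactly the missing details, with your key set equality $\setbarNormal{b \in B_{\intvaldom[\R^n\lift]}}{b \ll K} = \setbarNormal{b \in B_{\intvaldom[\R^n\lift]}}{b \ll \hbClos{K}}$ being precisely the content of the paper's subsequent Lemma~\ref{lemma:b_waybelow_K_box} (restricted to basis elements) and proved by the same coordinatewise compactness argument via~\eqref{eq:way_below_compact_sets} and~\eqref{eq:K_box_prod}. Your attention to which poset the way-below relation is taken in, and to the bottom element $\R^n$, is exactly the right bookkeeping.
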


  For each $i \in \set{1, \ldots, n}$, we let $\pi_i: \R^n \to \R$
  denote projection over the $i$-th component, that is,
  $\pi_i( x_1, \ldots, x_n) \defeq x_i$. The projections are all
  continuous functions, so they map compact sets to compact
  sets. Hence, for any compact set $K \in \kCPPO{\R^n\lift}$, the set
  $\pi_i(K)$ is a compact subset of $\R$, for any given
  $i \in \set{1, \ldots, n}$. For each $i \in \set{1, \ldots, n}$, we
  define $\lep[K_i] \defeq \min \pi_i(K)$ and
  $\uep[K_i] \defeq \max \pi_i(K)$. It is straightforward to verify
  that:
  \begin{equation}
    \label{eq:K_box_prod}
    \forall K \in \kCPPO{\R^n\lift}: \quad \hbClos{K} = \prod_{i=1}^n
    [\lep[K_i], \uep[K_i]].
  \end{equation}

  \begin{lemma}
    \label{lemma:b_waybelow_K_box}
    For any hyperbox $b \in \intvaldom[\R^n\lift]$ and compact set
    $K \in \kCPPO{\R^n\lift}$, we have: $b \ll K \iff b \ll \hbClos{K}$.
  \end{lemma}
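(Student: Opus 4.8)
The plan is to reduce both halves of the claimed equivalence to statements about set inclusions via the way-below characterization \eqref{eq:way_below_compact_sets}. Since $b$ is a hyperbox and $\hbClos{K}$ is again a hyperbox, both lie in $\intvaldom[\R^n\lift] \subseteq \kCPPO{\R^n\lift}$, while $K \in \kCPPO{\R^n\lift}$; moreover, because $\intvaldom[\R^n\lift]$ is a sub-poset of $\kCPPO{\R^n\lift}$ and \eqref{eq:way_below_compact_sets} holds over both, the way-below relation agrees on hyperbox elements regardless of which poset we work in. Hence the characterization applies to all three elements and yields
\begin{equation*}
  b \ll K \iff K \subseteq \interiorOf{b}, \qquad\text{and}\qquad b \ll \hbClos{K} \iff \hbClos{K} \subseteq \interiorOf{b}.
\end{equation*}
It therefore suffices to establish the equivalence $K \subseteq \interiorOf{b} \iff \hbClos{K} \subseteq \interiorOf{b}$.

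The direction $(\Leftarrow)$ is immediate: from \eqref{eq:hyper_rect_clos}, $\hbClos{K}$ is the intersection (the reverse-inclusion supremum) of all hyperboxes containing $K$, so $K \subseteq \hbClos{K}$; consequently $\hbClos{K} \subseteq \interiorOf{b}$ forces $K \subseteq \interiorOf{b}$. For the direction $(\Rightarrow)$, I would assume $K \subseteq \interiorOf{b}$ and write $b = \prod_{i=1}^n [\alpha_i,\beta_i]$ (the case $b = \R^n$ being vacuous, as then $\interiorOf{b} = \R^n$ and there is nothing to prove), so that $\interiorOf{b} = \prod_{i=1}^n (\alpha_i,\beta_i)$. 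Applying the projection $\pi_i$ to the inclusion gives $\pi_i(K) \subseteq (\alpha_i,\beta_i)$ for each $i$. Since $\pi_i$ is continuous and $K$ is compact, $\pi_i(K)$ is a compact subset of $\R$, so $\lep[K_i] = \min \pi_i(K)$ and $\uep[K_i] = \max \pi_i(K)$ are attained and lie strictly inside $(\alpha_i,\beta_i)$; convexity of the open interval then gives $[\lep[K_i], \uep[K_i]] \subseteq (\alpha_i,\beta_i)$. Taking the product over $i$ and using the explicit form \eqref{eq:K_box_prod} yields $\hbClos{K} = \prod_{i=1}^n [\lep[K_i], \uep[K_i]] \subseteq \prod_{i=1}^n (\alpha_i,\beta_i) = \interiorOf{b}$, completing the proof.

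The argument is essentially routine, and the one step deserving real care is the $(\Rightarrow)$ direction: it is precisely the compactness of $\pi_i(K)$ that guarantees the extrema $\lep[K_i], \uep[K_i]$ are attained and sit \emph{strictly} inside the open interval $(\alpha_i,\beta_i)$, so that passing from $K$ to its bounding box does not push any coordinate out to the boundary of $\interiorOf{b}$. This is where the hypothesis that $K$ is compact, rather than merely bounded, is genuinely used; a bounded non-closed $K$ could have $\sup \pi_i(K) = \beta_i$, breaking the inclusion. The remaining subtleties — applicability of \eqref{eq:way_below_compact_sets} to all three elements, and the degenerate cases $b = \R^n$, $K = \R^n$, or $b$ with empty interior (where both sides of the equivalence are simultaneously true or simultaneously false) — are handled directly and present no real difficulty.
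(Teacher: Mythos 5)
Your proof is correct and follows essentially the same route as the paper's: the easy direction via $K \subseteq \hbClos{K}$, and the substantive direction by reducing $b \ll K$ through \eqref{eq:way_below_compact_sets} to coordinate-wise strict inequalities on $\lep[K_i]$ and $\uep[K_i]$, which combined with \eqref{eq:K_box_prod} give $b \ll \hbClos{K}$. You merely spell out the details (attainment of the extrema by compactness, the product form of $\interiorOf{b}$) that the paper leaves implicit.
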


  \begin{proof}
    Clearly $b \ll \hbClos{K}$ implies $b \ll K$. For the opposite
    direction, assume that $b = \prod_{i=1}^n
    b_i$. From~\eqref{eq:way_below_compact_sets}, we deduce that
    $b \ll K$ iff
    $\forall i \in \set{1, \ldots, n}: (\lep[b_i] < \lep[K_i]) \wedge
    (\uep[K_i] < \uep[b_i])$. This, combined
    with~\eqref{eq:K_box_prod}, entails $b \ll \hbClos{K}$.
  \end{proof}

  \begin{corollary}
    \label{cor:box_map_scott_cont}
    The map
    $\hbClos{(\cdot)} : \kCPPO{\R^n\lift} \to \intvaldom[\R^n\lift]$
    is Scott-continuous.
  \end{corollary}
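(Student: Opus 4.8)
The plan is to apply Proposition~\ref{prop:Scott_w_cont}: since both $\kCPPO{\R^n\lift}$ and $\intvaldom[\R^n\lift]$ are $\omega$-continuous domains, it suffices to verify that $\hbClos{(\cdot)}$ is monotonic and preserves suprema of $\omega$-chains. The key fact I would use throughout is that $\hbClos{K}$ is the \emph{smallest} hyperbox containing $K$; indeed $\hbClos{K} = \bigcap \setbarNormal{R \in \intvaldom[\R^n\lift]}{K \subseteq R}$, the meet being taken over hyperboxes, which follows from~\eqref{eq:hyper_rect_clos} once one recalls that under reverse inclusion the least upper bound is the intersection.

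Monotonicity is then immediate. If $K_1 \sqsubseteq K_2$, i.e.\ $K_2 \subseteq K_1$, then $\hbClos{K_1}$ is a hyperbox containing $K_2$, so by minimality of $\hbClos{K_2}$ we get $\hbClos{K_2} \subseteq \hbClos{K_1}$, which is exactly $\hbClos{K_1} \sqsubseteq \hbClos{K_2}$.

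Next let $(K_j)_{j \in \N}$ be an $\omega$-chain with supremum $K \defeq \lub_j K_j$. By monotonicity $(\hbClos{K_j})_j$ is again an $\omega$-chain and each $\hbClos{K_j} \sqsubseteq \hbClos{K}$, so $\lub_j \hbClos{K_j} \sqsubseteq \hbClos{K}$ holds automatically. For the reverse inequality I would invoke Proposition~\ref{prop:K_box_basis_formulation}, which expresses $\hbClos{K}$ as the supremum of the basis elements $b \in B_{\intvaldom[\R^n\lift]}$ satisfying $b \ll K$; it therefore suffices to show $b \sqsubseteq \lub_j \hbClos{K_j}$ for each such $b$. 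Fixing such a $b$ and applying the definition of the way-below relation to the directed family $(K_j)_j$ (whose supremum is $K$), I obtain an index $j$ with $b \sqsubseteq K_j$, that is $K_j \subseteq b$. Since $b$ is a hyperbox containing $K_j$, minimality of $\hbClos{K_j}$ gives $\hbClos{K_j} \subseteq b$, i.e.\ $b \sqsubseteq \hbClos{K_j} \sqsubseteq \lub_j \hbClos{K_j}$, as required.

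The main obstacle is precisely this reverse inequality, since monotonicity alone delivers only $\lub_j \hbClos{K_j} \sqsubseteq \hbClos{K}$. A direct attack through the product formula~\eqref{eq:K_box_prod} would instead require showing that $\min \pi_i$ and $\max \pi_i$ commute with the intersection $\bigcap_j K_j$ of the nested compacta, which needs a compactness (finite-intersection) argument. The route above avoids this entirely by transporting the way-below relation through $\hbClos{(\cdot)}$ via Lemma~\ref{lemma:b_waybelow_K_box}---as already packaged in Proposition~\ref{prop:K_box_basis_formulation}---and then exploiting the interpolation property of $\ll$ along the chain to land at a finite stage $K_j$.
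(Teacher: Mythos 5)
Your proof is correct and follows essentially the same route as the paper's: reduce to monotonicity plus preservation of $\omega$-chain suprema via Proposition~\ref{prop:Scott_w_cont}, get the easy inclusion from monotonicity, and obtain the hard one by writing $\hbClos{K}$ as the supremum of basis elements $b \ll K$ (Proposition~\ref{prop:K_box_basis_formulation}) and locating each such $b$ below some finite stage of the chain. The only (harmless) divergence is the last step: the paper passes through Scott-openness of $\wayaboves{b}$ and Lemma~\ref{lemma:b_waybelow_K_box} to get $b \ll \hbClos{K_{i_0}}$, whereas you use the bare definition of $\ll$ to get $b \sqsubseteq K_j$ and then minimality of the hyperbox closure --- both land in the same place.
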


  \begin{proof}
    As both $\kCPPO{\R^n\lift}$ and $\intvaldom[\R^n\lift]$ are
    $\omega$-continuous, by Proposition~\ref{prop:Scott_w_cont}, it
    suffices to show that the map $\hbClos{(\cdot)}$ is monotonic and
    preserves the suprema of $\omega$-chains. Monotonicity is
    straightforward. Next, assume that $(K_i)_{i \in \N}$ is an
    $\omega$-chain in $\kCPPO{\R^n\lift}$. We must show that
    $\hbClos{\left( \lub_{i \in \N} K_i\right)} = \lub_{i \in \N}
    \hbClos{K_i}$. The $\sqsupseteq$ direction follows from
    monotonicity. To prove the $\sqsubseteq$ direction, by
    Proposition~\ref{prop:K_box_basis_formulation}, we have:
    \begin{equation*}
      \hbClos{(\lub_{i \in \N} K_i)} = \lub{\setbarTall{b \in
          B_{\intvaldom[\R^n\lift]}}{ b \ll \lub_{i \in \N} K_i}}.  
    \end{equation*}
    Take any arbitrary $b \in B_{\intvaldom[\R^n\lift]}$ satisfying
    $b \ll \lub_{i \in \N} K_i$. By
    Proposition~\ref{prop:Scott_wayaboves}, the set $\wayaboves{b}$ is
    Scott-open, which entails that
    $\exists i_0 \in \N: b \ll K_{i_0}$. By
    Lemma~\ref{lemma:b_waybelow_K_box}, we must have
    $b \ll \hbClos{K_{i_0}}$, from which we deduce that
    $b \ll \lub_{i \in \N} \hbClos{K_i}$. As $b$ was chosen
    arbitrarily, then we must have
    $\hbClos{\left( \lub_{i \in \N} K_i\right)} \sqsubseteq \lub_{i
      \in \N} \hbClos{K_i}$.
  \end{proof}

    \begin{corollary}
      \label{cor:IR_subdomain_KR}
    For every $n \geq 1$, $\intvaldom[\R^n\lift]$ is a sub-domain of
        $\kCPPO{\R^n\lift}$.
  \end{corollary}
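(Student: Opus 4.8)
The plan is to verify the three conditions that make $\intvaldom[\R^n\lift]$ a sub-domain of $\kCPPO{\R^n\lift}$: that it is a sub-poset, that the inclusion preserves suprema of directed sets, and that its way-below relation is exactly the restriction of the one on $\kCPPO{\R^n\lift}$. The first condition is immediate: every hyperbox $\prod_{i=1}^n [a_i,b_i]$ is non-empty and compact, and the common bottom $\R^n$ lies in both posets, so $\intvaldom[\R^n\lift] \subseteq \kCPPO{\R^n\lift}$ as sets; since both carry the reverse-inclusion order, the order on $\intvaldom[\R^n\lift]$ is precisely the one induced from $\kCPPO{\R^n\lift}$. That $\intvaldom[\R^n\lift]$ is itself a continuous domain has already been recorded in Section~\ref{subsec:domains_theory}, so the only genuine work is to show that directed suprema and the way-below relation are inherited.

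For the suprema, I would take any directed $X \subseteq \intvaldom[\R^n\lift]$ and use that $\lub_{\kCPPO{\R^n\lift}} X = \bigcap X$. Writing $\bigcap X = \prod_{i=1}^n \bigl( \bigcap_{R \in X} \pi_i(R) \bigr)$, each coordinate factor is an intersection of a directed family of non-empty compact intervals, hence again a non-empty compact interval; therefore $\bigcap X$ is a hyperbox and lies in $\intvaldom[\R^n\lift]$. Thus $\lub_{\kCPPO{\R^n\lift}} X = \bigcap X = \lub_{\intvaldom[\R^n\lift]} X$, so the two suprema coincide and the inclusion is Scott-continuous. Equivalently, this follows from the fact that $\hbClos{(\cdot)}$ acts as the identity on hyperboxes together with its Scott-continuity (Corollary~\ref{cor:box_map_scott_cont}).

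It remains to reconcile the two way-below relations, which I expect to be the only delicate point. The subtlety is that $\ll$ quantifies over \emph{all} directed sets of the ambient poset, and $\kCPPO{\R^n\lift}$ contains many directed families (e.g.\ of non-convex compacts) that do not live in $\intvaldom[\R^n\lift]$; a priori this larger supply of test sets could make $\ll_{\kCPPO{\R^n\lift}}$ strictly finer on hyperboxes. One direction is free: since the inclusion preserves directed suprema, every directed set of $\intvaldom[\R^n\lift]$ is a directed set of $\kCPPO{\R^n\lift}$ with the same supremum, whence $b_1 \ll_{\kCPPO{\R^n\lift}} b_2 \implies b_1 \ll_{\intvaldom[\R^n\lift]} b_2$. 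For the converse I would invoke the characterization~\eqref{eq:way_below_compact_sets}, which holds verbatim in both posets and collapses each relation to the identical interior condition $b_2 \subseteq \interiorOf{b_1}$; hence the two relations agree on hyperboxes. (Alternatively, Lemma~\ref{lemma:b_waybelow_K_box}, via $b \ll K \iff b \ll \hbClos{K}$, supplies exactly this compatibility.) With the order, the directed suprema, and the way-below relation all shown to be inherited, $\intvaldom[\R^n\lift]$ is a sub-domain of $\kCPPO{\R^n\lift}$.
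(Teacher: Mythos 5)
Your argument is correct, but it is packaged differently from the paper's. The paper proves the corollary in one line by exhibiting the pair $(\iota, \hbClos{(\cdot)})$ as a continuous \emph{embedding-projection pair}: $\iota$ is the inclusion, $\hbClos{(\cdot)}$ is Scott-continuous by the immediately preceding Corollary~\ref{cor:box_map_scott_cont}, and the two identities $\hbClos{(\cdot)} \circ \iota = \id_{\intvaldom[\R^n\lift]}$ and $\iota \circ \hbClos{(\cdot)} \sqsubseteq \id_{\kCPPO{\R^n\lift}}$ are immediate. That is the sense of ``sub-domain'' the paper intends, and the projection $\hbClos{(\cdot)}$ is not a luxury: it is used explicitly alongside $\iota$ in Theorem~\ref{thm:regressor_interval_robust_extension}. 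You instead verify directly that $\intvaldom[\R^n\lift]$ is a sub-poset closed under directed suprema whose way-below relation is the restriction of the ambient one; each step is sound --- in particular your computation $\bigcap X = \prod_{i=1}^n \bigl( \bigcap_{R \in X} \pi_i(R) \bigr)$ with each factor a non-empty compact interval by directedness and the finite intersection property, and your observation that~\eqref{eq:way_below_compact_sets} collapses both way-below relations to the same interior condition (your worry that the ambient poset's larger supply of directed test families could refine $\ll$ on hyperboxes is legitimate, and the characterization disposes of it cleanly). What your route buys is a self-contained, elementary verification that the continuity structure is inherited; what it omits is the projection map itself, i.e.\ the retraction half of the embedding-projection pair. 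Since you already note that $\hbClos{(\cdot)}$ is Scott-continuous and restricts to the identity on hyperboxes, adding the one remaining observation $\iota \circ \hbClos{(\cdot)} \sqsubseteq \id_{\kCPPO{\R^n\lift}}$ (every compact set contains no more than its hyperbox hull reports) would recover the paper's proof exactly.
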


  \begin{proof}
    Consider the map
    $\iota : \intvaldom[\R^n\lift] \to \kCPPO{\R^n\lift}$ defined by
    $\forall x \in \intvaldom[\R^n\lift]: \iota( x) \defeq x$. It is
    straightforward to prove that $\iota$ is Scott-continuous,
    $\iota \circ \hbClos{(\cdot)} \sqsubseteq
    \id_{\kCPPO{\R^n\lift}}$, and
    $\hbClos{(\cdot)} \circ \iota =
    \id_{\intvaldom[\R^n\lift]}$. Therefore,
    $(\iota, \hbClos{(\cdot)})$ form a continuous embedding-projection
    pair.
  \end{proof}

\begin{definition}[Extension, Canonical Interval Extension {$\intvalfun[f]$}, Approximation]{\ }
  \label{def:ext_canonical_ext}

  \begin{enumerate}[label=(\roman*)]

  \item \label{item:extension} A map
    $u: \kCPPO{\R^n\lift} \to \kCPPO{\R^m\lift}$ is said to be an
    extension of $f: \R^n \to \R^m$ iff
    $\forall x \in \R^n: u(\set{x}) = \set{f(x)}$.

  \item A map $u: \intvaldom[\R^n\lift] \to \intvaldom[\R^m\lift]$ is
    said to be an interval extension of
    $f: \R^n \to \kCPPO{\R^m\lift}$ iff
    $\forall x \in \R^n: u(\set{x}) = \hbClos{f(x)}$.

  \item \label{item:canonical_interval_extension} For any
    $f : \R^n \to \kCPPO{\R^m\lift}$, we define the canonical interval
    extension
    ${\intvalfun[f]} : \intvaldom[\R^n\lift] \to
    \intvaldom[\R^m\lift]$ by:
    \begin{equation}
      \label{eq:canonical_ext}
      \forall \alpha \in \intvaldom[\R^n\lift]: \quad
      {\intvalfun[f]}(\alpha) 
      \defeq \bigsqcap_{x \in \alpha} \hbClos{f(x)}.
    \end{equation}

  \item A map $u: \intvaldom[\R^n\lift] \to \intvaldom[\R^m\lift]$ is
    said to be an interval approximation of
    $f: \R^n \to \kCPPO{\R^m\lift}$ if $u \sqsubseteq \intvalfun[f]$.   
  \end{enumerate}
\end{definition}

\begin{proposition}
\label{prop:canonical_interval_extension}
  For every Euclidean-Scott-continuous
  $f : \R^n \to \kCPPO{\R^m\lift}$, the canonical interval extension
  ${\intvalfun[f]}$ defined in~\eqref{eq:canonical_ext} is the maximal
  extension of $f$ among all the extensions in the continuous domain
  $\intvaldom[\R^n\lift] \Rightarrow \intvaldom[\R^m\lift]$. In
  particular, ${\intvalfun[f]}$ is Scott-continuous.
\end{proposition}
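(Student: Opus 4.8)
The plan is to verify four things in turn: that $\intvalfun[f]$ is well defined as a map into $\intvaldom[\R^m\lift]$, that it is an extension of $f$, that it is Scott continuous (so that it actually lies in $\intvaldom[\R^n\lift] \Rightarrow \intvaldom[\R^m\lift]$), and finally that it dominates every other extension in that domain. The first two are immediate. For each $\alpha$ the family $\setbarNormal{\hbClos{f(x)}}{x \in \alpha}$ has a greatest lower bound in $\intvaldom[\R^m\lift]$, namely the smallest hyperbox enclosing $\bigcup_{x\in\alpha}\hbClos{f(x)}$, formed coordinatewise in $\R_\infty$; hence $\intvalfun[f](\alpha)$ exists. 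Taking $\alpha = \set{x}$ gives $\intvalfun[f](\set{x}) = \hbClos{f(x)}$, which is the interval-extension property. I would then record monotonicity, since everything else rests on it: if $\alpha \sqsubseteq \beta$ then $\beta \subseteq \alpha$ as subsets of $\R^n$, so the greatest lower bound over the smaller index set $\beta$ is $\sqsupseteq$ the one over $\alpha$, \ie $\intvalfun[f](\alpha) \sqsubseteq \intvalfun[f](\beta)$.

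Maximality is short and comes next, as it needs only monotonicity. Let $u$ be any Scott-continuous extension. For every point $x \in \alpha$ we have $\alpha \sqsubseteq \set{x}$ (reverse inclusion, as $\set{x} \subseteq \alpha$), so monotonicity of $u$ gives $u(\alpha) \sqsubseteq u(\set{x}) = \hbClos{f(x)}$. Thus $u(\alpha)$ is a lower bound of $\setbarNormal{\hbClos{f(x)}}{x \in \alpha}$, whence $u(\alpha) \sqsubseteq \bigsqcap_{x\in\alpha}\hbClos{f(x)} = \intvalfun[f](\alpha)$. As $\alpha$ is arbitrary, $u \sqsubseteq \intvalfun[f]$.

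The substance of the proof is Scott continuity of $\intvalfun[f]$. As both domains are $\omega$-continuous, by Proposition~\ref{prop:Scott_w_cont} it suffices to prove monotonicity (already noted) and preservation of suprema of $\omega$-chains. Write $\hbClos{f(x)} = \prod_{j=1}^{m}[g_j(x), h_j(x)]$. The map $x \mapsto \hbClos{f(x)}$ is Euclidean-Scott-continuous, being the composite of $f$ with the Scott-continuous map $\hbClos{(\cdot)}$ of Corollary~\ref{cor:box_map_scott_cont}; applying Proposition~\ref{prop:Euclidean_Scott_Cont} coordinatewise, each lower bound $g_j$ is lower semicontinuous and each upper bound $h_j$ is upper semicontinuous. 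Coordinatewise, $\intvalfun[f](\alpha)$ has $j$-th interval $[\inf_{x\in\alpha} g_j(x),\, \sup_{x\in\alpha} h_j(x)]$.

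Now fix an $\omega$-chain $\alpha_0 \sqsubseteq \alpha_1 \sqsubseteq \cdots$ with supremum $\alpha = \lub_i \alpha_i = \bigcap_i \alpha_i$; as subsets these are nested compact hyperboxes decreasing to $\alpha$. Monotonicity gives $\intvalfun[f](\alpha) \sqsupseteq \lub_i \intvalfun[f](\alpha_i)$, and the reverse inequality $\intvalfun[f](\alpha) \sqsubseteq \lub_i \intvalfun[f](\alpha_i)$ translates (via reverse inclusion, and $\lub_i \intvalfun[f](\alpha_i) = \bigcap_i \intvalfun[f](\alpha_i)$) into the two coordinatewise claims $\sup_{x\in\alpha} h_j(x) \geq \inf_i \sup_{x\in\alpha_i} h_j(x)$ and the symmetric statement for $g_j$. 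This is the \emph{main obstacle}, and it is exactly where compactness and semicontinuity enter. Since $h_j$ is upper semicontinuous and each $\alpha_i$ compact, $M_i \defeq \sup_{x\in\alpha_i} h_j(x)$ is attained at some $x_i \in \alpha_i$; the sequence $(M_i)$ is non-increasing with limit $M \defeq \inf_i M_i$. The points $x_i$ all lie in the compact set $\alpha_0$, so a subsequence $x_{i_k} \to x^\ast$; since the $\alpha_N$ are closed and nested and $x_{i_k} \in \alpha_N$ for all large $k$, we get $x^\ast \in \bigcap_N \alpha_N = \alpha$. Upper semicontinuity then gives $M = \lim_k h_j(x_{i_k}) \leq h_j(x^\ast) \leq \sup_{x\in\alpha} h_j(x)$, as required; the lower-bound inequality is identical, using that the lower semicontinuous $g_j$ attains its infimum on each compact $\alpha_i$. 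Hence $\intvalfun[f](\alpha) = \lub_i \intvalfun[f](\alpha_i)$, establishing Scott continuity. The degenerate cases where some $\hbClos{f(x)} = \R^m$, or an enclosure is unbounded in a coordinate, are handled the same way in the extended reals $\R_\infty$, with $\R^m$ acting as bottom.
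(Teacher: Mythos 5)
Your proof is correct, but it takes a genuinely different route from the paper's. The paper does not argue directly: it first reduces to the case of a hyperbox-valued function by replacing $f$ with $\hbClos{f} = \hbClos{(\cdot)} \circ f$ (Euclidean-Scott-continuous by Corollary~\ref{cor:box_map_scott_cont}, with the same interval approximations), and then cites Lemma~3.4 of Edalat and Escard{\'o}, observing that the proof there---given for $n=m=1$---carries over verbatim because the only structural ingredient is that $\intvaldom[\R^m\lift]$ is a continuous $\glb$-semilattice. You instead verify everything from scratch: well-definedness and the extension property from the definition, monotonicity, maximality of $\intvalfun[f]$ from monotonicity of any competitor $u$ together with $\alpha \sqsubseteq \set{x}$ for $x \in \alpha$, and Scott continuity by decomposing $\hbClos{f}$ coordinatewise into a lower semicontinuous lower bound and an upper semicontinuous upper bound (via Proposition~\ref{prop:Euclidean_Scott_Cont}) and then running a compactness argument---extraction of a convergent subsequence of maximizers lying in the nested compact boxes $\alpha_i$---to show the sup/inf over $\bigcap_i \alpha_i$ agrees with the limit of the sups/infs over the $\alpha_i$. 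What your approach buys is self-containedness and transparency about exactly where compactness and semicontinuity are used; what the paper's buys is brevity and a cleaner level of abstraction (the $\glb$-semilattice property), at the cost of sending the reader to an external reference. One small point worth tightening in your write-up: when you fix the $\omega$-chain you call the $\alpha_i$ ``nested compact hyperboxes,'' but an initial segment of the chain may consist of the bottom element $\R^n$, which is not compact; this is harmless (either the whole chain is constantly $\bot$, or one discards a finite prefix without changing the supremum), and your closing remark about the degenerate cases in $\R_\infty$ gestures at this, but it deserves an explicit sentence rather than being folded into the unboundedness discussion.
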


\begin{proof}
  Given a map $f : \R^n \to \kCPPO{\R^m\lift}$, we define
  $\hbClos{f}: \R^n \to \intvaldom[\R^m\lift]$ by
  $\hbClos{f} \defeq \hbClos{(\cdot)} \circ f$, {\ie},
  $\forall x \in \R^n: \hbClos{f}(x) = \hbClos{f(x)}$. If $f$ is
  Euclidean-Scott-continuous, then, by
  Corollary~\ref{cor:box_map_scott_cont}, so is $\hbClos{f}$. It is
  straightforward to verify that a map
  $u: \intvaldom[\R^n\lift] \Rightarrow \intvaldom[\R^m\lift]$ is an
  interval approximation of $f$ if and only if it is an interval
  approximation of $\hbClos{f}$. Thus, it suffices to prove the
  proposition for the special case of
  $f: \R^n \to \intvaldom[\R^m\lift]$. This has been proved
  in~\parencite[Lemma~3.4]{Edalat_Escardo:Integ_realPCF:2000} for
  $n = m = 1$. The proof given
  in~\parencite{Edalat_Escardo:Integ_realPCF:2000}, however, is
  independent of the values of $n$ and $m$, and the main property that
  is required is that $\intvaldom[\R^m\lift]$ is a continuous
  $\glb$-semilattice, for any $m \in \N$.
\end{proof}

As the restriction of the Scott topology of $\kCPPO{\R^m\lift}$ over
$\R^m$ is the Euclidean topology, we may consider any continuous map
$f: \R^n \to \R^m$ also as a function of type
$\R^n \to \kCPPO{\R^m\lift}$, and construct its canonical interval extension
accordingly, which will be Scott-continuous.

\begin{remark}
  The distinction between extension and approximation is crucial in
  the current article.
\end{remark}


  \section{Global Robustness Analysis}
  \label{sec:global_robustness_analysis}

A domain-theoretic framework for abstract robustness analysis was
introduced
by~\textcite{Moggi_Farjudian_Duracz_Taha:Reachability_Hybrid:2018}. The
essence of the framework is in the link between robustness---a concept
defined based on perturbations---and Scott topology---a concept
defined in terms of order theory.

  Assume that $\State$ is a metric space, $\kCPPO{\State}$ is the
  \ac{PDCPO} of non-empty compact subsets of $\State$, ordered under
  reverse inclusion, with $\State$ added as the bottom element if
  $\State$ is not compact.  When $\State$ is a compact metric
  space---which is the relevant case in our discussion---the
  \ac{PDCPO} $\kCPPO{\State}$ becomes an $\omega$-continuous
  domain~\parencite[Proposition~3.4]{Edalat95:DT-fractals}. For each
  $X \subseteq \State$, let
  $\UpC{X}\defeq\setbarNormal{C \in \kCPPO{\State}}{C\subseteq
    X}$. The robust topology is defined over
  $\kCPPO{\State}$. Intuitively, we regard a collection
  $U \subseteq \kCPPO{\State}$ of compact subsets of $\State$ as
  robust open if $U$ is closed under sufficiently small
  perturbations. Formally, we say that $U \subseteq \kCPPO{\State}$ is
  \emph{robust open} iff
  $ \forall C \in U : \exists \delta > 0 : \UpC{B(C,\delta)}\subseteq
  U$, in which $B(C,\delta)$ is as defined
  in~\eqref{eq:B_X_delta}. This topology indeed captures the notion of
  robustness~\parencite[Theorem~A.2]{Moggi_Farjudian_Duracz_Taha:Reachability_Hybrid:2018}. Thus,
  we define:

\begin{definition}[Robust Map]
  Assume that $\State_1$ and $\State_2$ are compact metric spaces. We
  say that $f : \kCPPO{\State_1} \to \kCPPO{\State_2}$ is robust iff
  it is continuous with respect to the robust topologies on
  $\kCPPO{\State_1}$ and $\kCPPO{\State_2}$.
\end{definition}

  \begin{theorem}[{\parencite[Theorem
    A.4]{Moggi_Farjudian_Duracz_Taha:Reachability_Hybrid:2018}}]
    \label{thm:compact_metric}
    For any compact metric space $\State$, the Scott and robust topologies on
    $\kCPPO{\State}$ coincide.
  \end{theorem}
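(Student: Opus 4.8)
The plan is to prove that the two topologies coincide by establishing mutual containment of their open sets, using the order-theoretic description of Scott-open sets together with two elementary facts about $\kCPPO{\State}$. First, directed suprema are intersections, i.e. $\lub X = \bigcap_{C \in X} C$ for directed $X$; this is meaningful precisely because $\State$ is a compact metric space, so closed subsets are compact and a filtered family of nonempty compacts has nonempty intersection. Second, $\bigcap_{\delta > 0} \cl{B(C,\delta)} = \cl{C} = C$ for every $C \in \kCPPO{\State}$, as noted in the preliminaries. Throughout I would use that $O$ is Scott open iff it is an upper set that is inaccessible by suprema of directed sets.

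For the inclusion (Scott open $\Rightarrow$ robust open), let $O$ be Scott open and fix $C \in O$. I would consider the $\omega$-chain $C_n \defeq \cl{B(C, 1/n)}$, which lies in $\kCPPO{\State}$ (each $C_n$ is a nonempty closed, hence compact, subset) and is increasing in $\sqsubseteq$ since the balls shrink. By the second fact above, $\lub_{n} C_n = \bigcap_n \cl{B(C,1/n)} = C \in O$. Inaccessibility then yields some $n_0$ with $C_{n_0} \in O$, and since $O$ is an upper set, every compact subset of $\cl{B(C,1/n_0)}$ lies in $O$; in particular $\UpC{B(C,1/n_0)} \subseteq O$. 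Taking $\delta = 1/n_0$ verifies the defining condition of robust openness at $C$, and as $C$ was arbitrary, $O$ is robust open.

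For the converse (robust open $\Rightarrow$ Scott open), let $U$ be robust open. Upper-closedness is immediate: if $C \in U$ and $C \sqsubseteq A$ (that is, $A \subseteq C$), choosing $\delta$ with $\UpC{B(C,\delta)} \subseteq U$ gives $A \subseteq C \subseteq B(C,\delta)$, whence $A \in U$. For inaccessibility, let $X$ be directed with $C^* \defeq \lub X = \bigcap_{C \in X} C \in U$, and pick $\delta$ with $\UpC{V} \subseteq U$, where $V \defeq B(C^*,\delta)$. It then suffices to produce some $C \in X$ with $C \subseteq V$, for then $C \in X \cap U$.

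This last step is the crux, and I expect it to be the main obstacle. It amounts to the compactness lemma: a filtered family of nonempty compact sets whose intersection lies in an open set $V$ must have a member contained in $V$. I would prove it by contradiction via the finite intersection property. The sets $\setbarNormal{C \cap \setcomplementOf{V}}{C \in X}$ are compact (closed subsets of the compacts $C$), and they are filtered because $X$ is directed; their total intersection equals $\left(\bigcap_{C \in X} C\right) \cap \setcomplementOf{V} = C^* \cap \setcomplementOf{V} = \emptyset$ since $C^* \subseteq V$. Were every $C \cap \setcomplementOf{V}$ nonempty, filteredness would make every finite subintersection nonempty, and compactness would force the total intersection nonempty, a contradiction; hence some $C \cap \setcomplementOf{V} = \emptyset$, i.e. $C \subseteq V$. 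This closes the inaccessibility check and completes the proof. The only place compactness of $\State$ is genuinely used is here (and in guaranteeing the relevant suprema exist); the remainder is formal manipulation of the two definitions.
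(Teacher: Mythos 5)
Your proof is correct. Note that the paper itself offers no proof of this statement---it is imported verbatim as Theorem~A.4 of the cited reference---so there is no internal argument to compare against; what you have written is a sound, self-contained derivation from the two definitions. Both directions check out: the chain $C_n = \cl{B(C,1/n)}$ does lie in $\kCPPO{\State}$ (nonempty, closed in a compact space, hence compact), its supremum under reverse inclusion is $\bigcap_n \cl{B(C,1/n)} = C$ exactly as recorded in the paper's preliminaries, and inaccessibility plus upper-closedness then deliver robust openness. The converse correctly isolates the only nontrivial step, namely that a filtered family of nonempty compacts whose intersection lies in an open set $V$ has a member inside $V$; your finite-intersection-property argument for this is standard and valid, since directedness under $\sqsubseteq$ means the family is filtered under inclusion, so the sets $C \cap \setcomplementOf{V}$ have the finite intersection property unless one of them is empty. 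You are also right that this is where compactness of $\State$ does genuine work (together with guaranteeing that directed suprema exist and are intersections); the rest is definition-chasing. No gaps.
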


  Assume that $N : \State_1 \to \State_2$ is a network, and define the
  reachability map $A_N : \kCPPO{\State_1} \to \kCPPO{\State_2}$ as
  follows:

\begin{equation}
  \label{eq:A_N_reachability_map}
  \forall X \in \kCPPO{\State_1}: \quad A_N(X) \defeq
  \bigcap \setbarNormal{C \in \kCPPO{\State_2}}{N(X) \subseteq C}.
\end{equation}
\noindent
The map $A_N$ is clearly monotonic, that is,
$\forall X, Y \in \kCPPO{\State_1}: X \subseteq Y \implies A_N(X)
\subseteq A_N(Y)$. In case $A_N$ is not robust, a valid question is
whether it is possible to approximate $A_N$ with a robust map without
losing too much accuracy. For compact metric spaces, the answer is
affirmative:

  \begin{corollary}
    \label{cor:tightest_robust_approximation}
    If $\State_1$ and $\State_2$ are compact metric spaces, then any
    monotonic map $f : \kCPPO{\State_1} \to \kCPPO{\State_2}$ has a
    `tightest' robust approximation
    $\bra{f} : \kCPPO{\State_1} \to \kCPPO{\State_2}$ satisfying:
    \begin{equation}
      \label{eq:bra_formulation}
      \forall X \in \kCPPO{\State_1}: \quad \bra{f}(X) = \lub
      \setbarNormal{f(b)}{ b \ll X},
    \end{equation}
    \noindent
    which by Theorem~\ref{thm:compact_metric}, must be
    Scott-continuous. Here, by the tightest we mean the largest under the
    pointwise ordering of functions.
  \end{corollary}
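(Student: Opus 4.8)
The plan is to prove that $\bra{f}$, as defined in~\eqref{eq:bra_formulation}, is the largest Scott-continuous map below $f$ in the pointwise order, and then to read off the robustness claim from Theorem~\ref{thm:compact_metric}. Indeed, since $\State_1$ and $\State_2$ are compact metric spaces, the \acp{PDCPO} $\kCPPO{\State_1}$ and $\kCPPO{\State_2}$ are $\omega$-continuous domains, and Theorem~\ref{thm:compact_metric} tells us their Scott and robust topologies coincide; hence ``robust'' and ``Scott-continuous'' are interchangeable throughout, and the tightest robust approximation is exactly the largest Scott-continuous map dominated by $f$.

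First I would check that $\bra{f}$ is well defined and monotone. In a continuous domain the set $\waybelows{X}$ is directed: given $a,b \ll X$, using that $\waybelows{X}\cap B$ is directed with supremum $X$, one finds a single basis element way-below $X$ dominating both $a$ and $b$. Consequently $\setbarNormal{f(b)}{b \ll X}$, being the image of a directed set under the monotone $f$, is directed in $\kCPPO{\State_2}$, so its least upper bound exists. Monotonicity of $\bra{f}$ is then immediate from $X \sqsubseteq Y \implies \waybelows{X} \subseteq \waybelows{Y}$. That $\bra{f}$ is an approximation, $\bra{f} \sqsubseteq f$, follows because each $b \ll X$ satisfies $b \sqsubseteq X$, so $f(b) \sqsubseteq f(X)$ by monotonicity and the supremum over such $b$ stays below $f(X)$.

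Next I would establish Scott continuity. By Proposition~\ref{prop:Scott_w_cont} it suffices to treat an $\omega$-chain $(X_i)$. The inequality $\lub_i \bra{f}(X_i) \sqsubseteq \bra{f}(\lub_i X_i)$ is just monotonicity, so the work is in the reverse direction, for which it is enough to show $f(b) \sqsubseteq \lub_i \bra{f}(X_i)$ for every $b \ll \lub_i X_i$. Here I would invoke the interpolation property to obtain $c$ with $b \ll c \ll \lub_i X_i$; since $c \ll \lub_i X_i$, some $X_j$ satisfies $c \sqsubseteq X_j$, whence $b \ll c \sqsubseteq X_j$ gives $b \ll X_j$ and therefore $f(b) \sqsubseteq \bra{f}(X_j) \sqsubseteq \lub_i \bra{f}(X_i)$.

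Finally, for maximality, let $g$ be any Scott-continuous map with $g \sqsubseteq f$ pointwise. Using $X = \lub \waybelows{X}$ (valid in a continuous domain, with $\waybelows{X}$ directed) together with Scott continuity of $g$, I would write $g(X) = \lub \setbarNormal{g(b)}{b \ll X} \sqsubseteq \lub \setbarNormal{f(b)}{b \ll X} = \bra{f}(X)$, so $g \sqsubseteq \bra{f}$; applying this to the robust (hence Scott-continuous) maps below $f$ yields the ``tightest'' claim. The one step requiring genuine care---and the main obstacle---is the continuity argument: the passage from $b \ll \lub_i X_i$ to $b \ll X_j$ truly relies on interpolation, since plain monotonicity delivers only $b \sqsubseteq X_j$, which is too weak to place $f(b)$ inside any single $\bra{f}(X_j)$.
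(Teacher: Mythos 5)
Your proposal is correct, but it takes a genuinely different route from the paper. The paper does not prove the corollary internally at all: it imports the existence of the tightest robust approximation from Corollaries~4.4 and 4.5 of the cited Moggi--Farjudian--Duracz--Taha paper and the formula~\eqref{eq:bra_formulation} from Theorem~5.20 there, using Theorem~\ref{thm:compact_metric} only to convert ``robust'' into ``Scott-continuous''. You instead give a self-contained construction of the Scott-continuous lower envelope: directedness of $\waybelows{X}$ guarantees the supremum in~\eqref{eq:bra_formulation} exists, monotonicity and $\bra{f}\sqsubseteq f$ are immediate, Scott continuity is obtained via the interpolation property of continuous dcpos, and maximality follows from $X=\lub\waybelows{X}$ together with Scott continuity of any competitor $g$. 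All steps check out, including the points that need care: $\waybelows{X}$ is nonempty and directed because $\kCPPO{\State_1}$ is a pointed continuous domain, and you are right that interpolation is genuinely needed in the continuity step, since the definition of $\ll$ applied to the chain $(X_i)$ only yields $b\sqsubseteq X_j$, which does not place $f(b)$ below $\bra{f}(X_j)$. What the two approaches buy: the paper's citation-based proof is shorter and situates the corollary inside the general abstract-robustness framework (where the equivalence of robust and Scott topologies is itself a theorem), whereas your argument is elementary, works verbatim in any continuous pointed dcpo, and makes visible exactly where continuity of the domain is used --- at the cost of re-deriving a known envelope construction. One presentational remark: you should state explicitly at the outset that a ``robust approximation'' of $f$ means a robust map $g$ with $g\sqsubseteq f$ pointwise, since the corollary's final sentence only defines ``tightest'' and the approximation order is otherwise left implicit.
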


  \begin{proof}
    The existence of a tightest robust approximation follows
    from~\parencite[Corollaries 4.4 and
    4.5]{Moggi_Farjudian_Duracz_Taha:Reachability_Hybrid:2018}. The
    formulation~\eqref{eq:bra_formulation} follows
    from~\parencite[Theorem~5.20]{Moggi_Farjudian_Duracz_Taha:Reachability_Hybrid:2018}.
  \end{proof}

\label{page:classifier_reduce_to_regressor}

  In a typical application, the input space of a neural network is a
  bounded region of some Euclidean space. Therefore, we consider
  feedforward neural networks with input space $[-M,M]^n$, for some
  $M>0$. For the output, we will mainly focus on the regressors of
  type $N : [-M,M]^n \to \R$. The reason is that, within the framework
  that we have adopted, robustness analysis of classifiers reduces to
  Lipschitz estimation of regressors. Assume that
  $\hat{N}: [-M,M]^n \to \set{c_1, \ldots, c_k}$ is a classifier
  network. A common architecture for such a classifier comprises a
  regressor $N \equiv (N_1, \ldots, N_k): [-M,M]^n \to \R^k$ followed
  by an $\arg \max$ operation. Thus, for a given input
  $x_0 \in [-M,M]^n$, the index of the class predicted by $\hat{N}$ is
  $i_0 = \arg \max_{1 \leq i \leq k} N_i(x_0)$. In certified local
  robustness analysis, the aim is to provide a radius $\beta > 0$ such
  that, for any point $x$ in the neighborhood of radius $\beta$ around
  $x_0$, we have $\hat{N}(x) = \hat{N}(x_0)$, {\ie}, a neighborhood
  within which no adversarial examples can be found. A lower bound for
  such a radius can be obtained by estimating the Lipschitz constants
  of the functions
  $\setbarNormal{ N_{i_0} - N_i}{ 1 \leq i \leq k, i \neq i_0}$. As
  such, robustness analysis of classifiers reduces to Lipschitz
  estimation of regressors with one output. For more details, the
  reader may refer
  to~\parencite[Section~3]{Weng_et_al-CLEVER-ICLR:2018}.

  When $N: [-M,M]^n \to \R$ is a continuous function---which is the
  relevant case in our discussion---the image of the compact set
  $[-M,M]^n$ will also be compact, which we may assume to be included
  in a compact interval $[-M',M']$, for $M' \geq 0$ large
  enough. Furthermore, the reachability map $A_N$
  of~\eqref{eq:A_N_reachability_map} satisfies
  $\forall X \in \kCPPO{[-M,M]^n}: A_N(X) = N(X)$.

  Before focusing on regressors, however, we briefly discuss some
  issues related to robustness analysis of classifiers.


  \subsection{Classifiers}
\label{subsec:classifiers}

Let $C = \set{c_1, \ldots, c_k}$ be a set of classes, with $k \geq 1$,
and assume that the function $N: [-M,M]^n \to C$ is (the semantic model
of) a classifier. As $C$ is discrete, it is natural to metrize $C$
using the discrete metric:
\begin{equation*}
  \forall x,y \in C: \quad  d(x,y) \defeq
  \left\{
    \begin{array}{ll}
      0, & \text{if } x = y,\\
      1, & \text{if } x \neq y.\\
    \end{array}
  \right.
\end{equation*}
\noindent
The main problem with a classifier of type $N : [-M,M]^n \to C$ is the
mapping of a connected space to a discrete one, where the topology on
$[-M,M]^n$ is the Euclidean topology, and the topology on $C$ is the
discrete topology:

\begin{proposition}
\label{prop:classifier_cont_constant}
A classifier $N: [-M,M]^n \to C$ is continuous iff $N$ is constant.
\end{proposition}

\begin{proof}
  The $\Leftarrow$ direction is obvious. For the $\Rightarrow$
  direction, as $C$ is a discrete space, then each singleton
  $\set{c_i}$ is an open set. If $N$ is continuous, then each inverse
  image $N^{-1}(c_i)$ must also be open. If $N$ takes more than one
  value, then $[-M,M]^n = \bigcup_{1 \leq i \leq k} N^{-1}(c_i)$ must be
  the union of at least two disjoint non-empty open sets, which is
  impossible as $[-M,M]^n$ is a connected space.
\end{proof}

Proposition~\ref{prop:classifier_cont_constant} implies that a
classifier may not be computable---in the framework of
\ac{TTE}~\parencite{Weihrauch2000:book}---unless if it is
constant. This is because, by~\parencite[Theorem~2.13]{Ko91-book}, if
$N$ is computable, then it must be continuous.

By moving to compact subsets, however, it is possible to obtain tight
robust approximations of classifiers, even in the presence of
discontinuities. As $C$ is a finite set, it is compact under the
discrete topology, and $\kCPPO{C}$ is the set of non-empty subsets of
$C$, ordered under reverse inclusion.

\begin{theorem}
  \label{thm:classifiers_tightest_robust}
  Assume that $A_N : \kCPPO{[-M,M]^n} \to \kCPPO{C}$ is the reachability
  map for a classifier $N: [-M,M]^n \to C$, defined
  in~\eqref{eq:A_N_reachability_map}. Then, $A_N$ has a tightest
  robust approximation $\bra{A_N} : \kCPPO{[-M,M]^n} \to \kCPPO{C}$
  which is also Scott-continuous.
\end{theorem}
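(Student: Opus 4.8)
The plan is to read this statement as a direct instance of Corollary~\ref{cor:tightest_robust_approximation}, so that the entire proof reduces to checking that its hypotheses are met by the spaces and map in question. That corollary guarantees a tightest robust approximation $\bra{f}$ for \emph{any} monotonic map $f : \kCPPO{\State_1} \to \kCPPO{\State_2}$, provided $\State_1$ and $\State_2$ are compact metric spaces, and Theorem~\ref{thm:compact_metric} then promotes robustness to Scott-continuity. Thus three verifications remain.

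First I would confirm that both spaces are compact metric spaces. The input space $[-M,M]^n$ is a closed and bounded subset of $\R^n$, hence compact under the Euclidean metric. The class set $C = \set{c_1, \ldots, c_k}$ is finite, so under the discrete metric it is automatically a compact metric space. Consequently $\kCPPO{[-M,M]^n}$ and $\kCPPO{C}$ are the relevant $\omega$-continuous domains, with $\kCPPO{C}$ being the non-empty subsets of $C$ ordered by reverse inclusion. Second, I would note that $A_N$ is monotonic: this is precisely the one-line observation recorded immediately after~\eqref{eq:A_N_reachability_map}, and it transfers verbatim to the present setting, since a larger input set has a larger image and therefore a larger reachability value. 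Crucially, monotonicity holds regardless of whether $N$ is continuous.

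With these facts in hand, Corollary~\ref{cor:tightest_robust_approximation} immediately yields the tightest robust approximation $\bra{A_N}$, given explicitly by $\bra{A_N}(X) = \lub \setbarNormal{A_N(b)}{b \ll X}$, and Theorem~\ref{thm:compact_metric} guarantees that it is Scott-continuous. There is no genuine technical obstacle here; the proof is an application of machinery already assembled. The only point demanding care---and the conceptual heart of the statement---is that Corollary~\ref{cor:tightest_robust_approximation} requires merely \emph{monotonicity} of $A_N$, not continuity of $N$ itself. This is exactly what makes the theorem substantive: by Proposition~\ref{prop:classifier_cont_constant}, every non-constant classifier is discontinuous, yet its reachability map is still monotonic, so the framework nonetheless supplies a canonical robust, Scott-continuous surrogate. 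Beyond this, the only bookkeeping risk is conflating the reverse-inclusion domain order with ordinary set inclusion, which I would keep explicit throughout.
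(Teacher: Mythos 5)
Your proposal is correct and follows exactly the same route as the paper: the paper's proof likewise invokes Corollary~\ref{cor:tightest_robust_approximation} after noting that both $[-M,M]^n$ and $C$ are compact. Your additional remarks---that only monotonicity of $A_N$ (not continuity of $N$) is needed, and that $C$ is compact under the discrete metric---are accurate elaborations of the same argument.
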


\begin{proof}
  Since both $[-M,M]^n$ and $C$ are compact, the result follows from
  Corollary~\ref{cor:tightest_robust_approximation}.
\end{proof}

\begin{example}
  \label{example:classifier_bra}
  Assume that the classifier $N: [-1,1] \to \set{c_1, c_2}$ satisfies:
  \begin{equation*}
    \forall x \in [-1,1]: \quad N(x) = \left\{
      \begin{array}{ll}
        c_1, & \text{if } x \in [-1,0],\\
        c_2, & \text{if } x \in (0,1].\\
      \end{array}
      \right.
    \end{equation*}
    Then, according to~\eqref{eq:A_N_reachability_map}, we have:
  \begin{equation*}
    \forall x \in [-1,1]: \quad A_N(\set{x}) = \left\{
      \begin{array}{ll}
        \set{c_1}, & \text{if } x \in [-1,0],\\
        \set{c_2}, & \text{if } x \in (0,1],\\
      \end{array}
      \right.
    \end{equation*}
    while according to~\eqref{eq:bra_formulation}, we must have:
  \begin{equation*}
    \forall x \in [-1,1]: \quad \bra{A_N}(\set{x}) = \left\{
      \begin{array}{ll}
        \set{c_1}, & \text{if } x \in [-1,0),\\
        \set{c_1,c_2}, & \text{if } x = 0,\\
        \set{c_2}, & \text{if } x \in (0,1].\\
      \end{array}
      \right.
    \end{equation*}
\end{example}

As can be seen from Example~\ref{example:classifier_bra}, the tightest
robust approximation of a non-constant classifier may be
\emph{multi-valued} over certain parts of the input domain. 

\begin{theorem}
  \label{thm:classifier_robust_approx_multivalued}
  Assume that $N: [-M,M]^n \to C$ is a non-constant classifier, and let
  $\hat{N}: \kCPPO{[-M,M]^n} \to \kCPPO{C}$ be any robust approximation
  of the reachability map $A_N$. Then, $\hat{N}(\set{x})$ must be
  multivalued for some $x \in [-M,M]^n$.
\end{theorem}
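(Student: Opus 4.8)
The plan is to argue by contraposition: I assume that $\hat{N}(\set{x})$ is single-valued for every $x \in [-M,M]^n$ and show that $N$ must then be constant. Since $\hat{N}$ is an \emph{approximation} of $A_N$, we have $\hat{N} \sqsubseteq A_N$ pointwise, which under the reverse-inclusion order means $A_N(\set{x}) \subseteq \hat{N}(\set{x})$ for every $x$. For a single point, $A_N(\set{x})$ is the smallest compact superset of $N(\set{x}) = \set{N(x)}$, so $A_N(\set{x}) = \set{N(x)}$. Combined with single-valuedness, this forces $\hat{N}(\set{x}) = \set{N(x)}$; in particular, $\hat{N}$ maps the maximal elements of $\kCPPO{[-M,M]^n}$ into the maximal elements of $\kCPPO{C}$.

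Next I would exploit that a robust map is Scott-continuous. By Theorem~\ref{thm:compact_metric}, since $[-M,M]^n$ and $C$ are compact metric spaces, robustness of $\hat{N}$ is equivalent to Scott-continuity, hence $\hat{N}$ is continuous for the Scott topologies. Restricting a continuous map to subspaces of its domain and codomain again yields a continuous map, provided the image of the restricted domain lands in the chosen codomain subspace. Taking the domain subspace to be the maximal elements $\setbarNormal{\set{x}}{x \in [-M,M]^n}$ and the codomain subspace to be the maximal elements $\setbarNormal{\set{c}}{c \in C}$, the first paragraph guarantees exactly this image condition, so the restriction $\set{x} \mapsto \hat{N}(\set{x}) = \set{N(x)}$ is continuous between these subspaces.

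It then remains to identify the two subspace topologies. By Corollary~\ref{cor:Scott_restrict_Euclid} applied to $\kCPPO{[-M,M]^n}$, the Scott topology restricted to the maximal elements is the Euclidean topology, so the domain subspace is homeomorphic to $[-M,M]^n$ with its Euclidean topology. For the codomain, since $C$ is finite and discrete, each singleton $\set{c}$ is a maximal element of $\kCPPO{C}$ whose upward closure is $\set{\set{c}}$, and this set is inaccessible by directed suprema because directed subsets of the finite poset $\kCPPO{C}$ attain their suprema; hence $\set{\set{c}}$ is Scott-open, the subspace topology on $\setbarNormal{\set{c}}{c \in C}$ is discrete, and this subspace is homeomorphic to $C$. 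Under these identifications the continuous restriction is precisely the classifier $N : [-M,M]^n \to C$, viewed from the Euclidean topology to the discrete topology. By Proposition~\ref{prop:classifier_cont_constant}, a continuous classifier is constant, contradicting the hypothesis that $N$ is non-constant. Therefore $\hat{N}(\set{x})$ must be multivalued for at least one $x$.

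The main obstacle I anticipate is the bookkeeping around the two subspace topologies and the lemma that restricting a continuous map to subspaces preserves continuity; the genuinely classifier-specific content is carried entirely by Proposition~\ref{prop:classifier_cont_constant}, while the domain-theoretic content is carried by Theorem~\ref{thm:compact_metric} and Corollary~\ref{cor:Scott_restrict_Euclid}. An alternative route would observe that the tightest robust approximation $\bra{A_N}$ from Corollary~\ref{cor:tightest_robust_approximation} dominates every robust approximation, so $\bra{A_N}(X) \subseteq \hat{N}(X)$ for all $X$, and it would then suffice to show that $\bra{A_N}$ is multivalued somewhere, using connectedness of $[-M,M]^n$ together with the formula $\bra{A_N}(\set{x}) = \bigsqcup \setbarNormal{A_N(b)}{b \ll \set{x}}$. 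I prefer the contraposition argument above, as it is more self-contained.
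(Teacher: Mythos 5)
Your proof is correct and follows essentially the same route as the paper's: assume single-valuedness, deduce that the restriction of $\hat{N}$ to the maximal elements equals $N$, invoke Corollary~\ref{cor:Scott_restrict_Euclid} (together with Theorem~\ref{thm:compact_metric}) to get continuity of $N$, and conclude via Proposition~\ref{prop:classifier_cont_constant} that $N$ is constant, a contradiction. You simply spell out two steps the paper leaves implicit --- that the approximation order forces $\hat{N}(\set{x}) = \set{N(x)}$, and that the subspace of maximal elements of $\kCPPO{C}$ carries the discrete topology.
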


\begin{proof}
  To obtain a contradiction, assume that $\hat{N}(x)$ is single-valued
  for all $x \in [-M,M]^n$. Therefore, the restriction of $\hat{N}$
  over $[-M,M]^n$---which is the set of maximal elements of
  $\kCPPO{[-M,M]^n}$---must be equal to $N$. By
  Corollary~\ref{cor:Scott_restrict_Euclid}, the classifier
  $N : [-M,M]^n \to C$ must be continuous, which, combined with
  Proposition~\ref{prop:classifier_cont_constant}, implies that $N$ is
  a constant classifier, which is a contradiction.
\end{proof}

In Theorem~\ref{thm:classifier_robust_approx_multivalued}, assume that
$C = \set{c_1, \ldots, c_k}$. In case $\hat{N}$ is the \emph{tightest}
robust approximation $\bra{A_N}$ of $A_N$, then $\hat{N}$ is required
to be multivalued only over the boundaries of
$N^{-1}(c_1), \ldots, N^{-1}(c_k)$. In typical applications, these
boundaries have Lebesgue measure zero, and the restriction of
$\bra{A_N}$ over $[-M,M]^n$ is single-valued and equal to $N$,
\emph{almost everywhere}, {\eg}, as in
Example~\ref{example:classifier_bra}. Computability of $\bra{A_N}$,
however, is an open problem in general, and even in cases where it is
known to be computable, little is known about complexity of its
computation. Thus, in practice, one aims for robust approximations
$\hat{N}$ that are `sufficiently' tight, and over $[-M,M]^n$, may be
multivalued on sets that are not Lebesgue measure zero.

\subsection{Regressors}
\label{seubsec:regressors}

Assume that a given feedforward regressor $N: [-M,M]^n \to \R$ has $H$
hidden layers. We let $n_0 \defeq n$, and for each $1 \leq i \leq H$,
let $n_i$ denote the number of neurons in the $i$-th layer. Then, for
each $1 \leq i \leq H$, we represent:

\begin{itemize}
\item the weights as a matrix $W_i$, with $\dim(W_i) = n_i \times
  n_{i-1}$;

  \item the biases as a vector $b_i$, with $\dim(b_i) = n_i \times
    1$;

  \item and the activation function (seen as a vector field) by
    $\sigma_i: \R^{n_i} \to \R^{n_i}$.
  
\end{itemize}
\noindent
From the last hidden layer to the single output neuron, the weights
are represented as the vector $W_{H+1}$, with
$\dim(W_{H+1}) = 1 \times n_H$, and there are no biases or activation
functions. For any input $x \in [-M,M]^n$, the output of the neural
network may be calculated layer-by-layer as follows:
\begin{equation}
  \label{eq:layer_by_layer}
  \left\{
  \begin{array}{ll}
  Z_0( x) = x,\\
  \hat{Z}_i(x) = W_i Z_{i-1}(x) + b_i, & (1 \leq i \leq H),\\
  Z_i(x) = \sigma_i \left(\hat{Z}_i(x) \right),  & (1 \leq i \leq H),\\
  N(x) = W_{H+1} Z_H (x).
  \end{array}
  \right.
\end{equation}
\noindent
We let ${\cal F}$ denote the class of feedforward networks with a
single output neuron, and Lipschitz continuous activation functions,
whose output may be described using~\eqref{eq:layer_by_layer}. Each
$N \in {\cal F}$ is continuous. Hence, it maps the compact set
$[-M,M]^n$ to a compact set, which we assume is included in
$[-M',M']$, for $M'$ large enough.

\begin{proposition}
  \label{prop:regressor_tightest_robust_extension}
  Assume that $N: [-M,M]^n \to [-M',M']$ is a regressor in ${\cal F}$,
  and $A_N$ is its reachability map. Then, the map $A_N$ has a
  tightest robust approximation which is a robust extension of~$N$,
  {\ie}, an extension according to
  Definition~\ref{def:ext_canonical_ext}~\ref{item:extension}.
\end{proposition}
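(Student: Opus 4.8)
The plan is to apply Corollary~\ref{cor:tightest_robust_approximation} to produce the tightest robust approximation of $A_N$ together with its explicit formula, and then to check that this approximation restricts to $N$ on the maximal elements $\set{x}$, which is precisely the defining property of an extension in Definition~\ref{def:ext_canonical_ext}~\ref{item:extension}. First I would verify the hypotheses of the corollary: every $N \in {\cal F}$ is continuous, the domain $[-M,M]^n$ and codomain $[-M',M']$ are both compact metric spaces, and $A_N$ is monotonic (as noted just after~\eqref{eq:A_N_reachability_map}). Hence Corollary~\ref{cor:tightest_robust_approximation} yields a tightest robust approximation $\bra{A_N}$, which is robust (equivalently, Scott-continuous, by Theorem~\ref{thm:compact_metric}) and satisfies
\[
  \bra{A_N}(X) = \lub \setbarNormal{A_N(b)}{b \ll X}.
\]

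The crux is then to show $\bra{A_N}(\set{x}) = \set{N(x)}$ for every $x \in [-M,M]^n$. Since $N$ is continuous, $A_N(b) = N(b)$ for every compact $b$, and by the characterization~\eqref{eq:way_below_compact_sets} the relation $b \ll \set{x}$ is equivalent to $x \in \interiorOf{b}$. The family $\setbarNormal{b}{b \ll \set{x}}$ is directed, since the intersection of two compact neighborhoods of $x$ is again a compact neighborhood of $x$; and because the least upper bound of a directed family in $\kCPPO{[-M',M']}$ is its intersection, I obtain
\[
  \bra{A_N}(\set{x}) = \bigcap \setbarNormal{N(b)}{x \in \interiorOf{b}}.
\]

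It remains to establish the set equality $\bigcap \setbarNormal{N(b)}{x \in \interiorOf{b}} = \set{N(x)}$. The inclusion $\supseteq$ is immediate, as $x \in b$ forces $N(x) \in N(b)$ for every such $b$. For the reverse inclusion I would argue by contradiction: given any $y \neq N(x)$, continuity of $N$ at $x$ furnishes a neighborhood of $x$ whose $N$-image is confined to a small ball around $N(x)$ that excludes $y$; shrinking to a compact box $b$ with $x \in \interiorOf{b}$ inside that neighborhood yields $y \notin N(b)$, so $y$ lies outside the intersection. This shows $\bra{A_N}(\set{x}) = \set{N(x)}$, so $\bra{A_N}$ is an extension of $N$, which completes the argument.

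I expect the only genuinely delicate point to be this final set equality, where continuity of $N$ is used in an essential way; the rest is bookkeeping once the corollary and the way-below characterization are in hand. A subsidiary point to get right is the directedness of $\setbarNormal{b}{b \ll \set{x}}$, which is exactly what licenses replacing the supremum in~\eqref{eq:bra_formulation} by an intersection of images.
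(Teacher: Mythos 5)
Your proposal is correct and follows the same route as the paper: invoke Corollary~\ref{cor:tightest_robust_approximation} for existence of the tightest robust approximation, then use continuity of $N$ to show that $\bra{A_N}$ returns singletons on maximal elements. The paper dismisses this last step as a ``straightforward consequence'' of continuity; you have simply filled in the details (the way-below characterization, directedness of $\setbarNormal{b}{b \ll \set{x}}$, and the intersection argument), all of which check out.
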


\begin{proof}
  Since both $[-M,M]^n$ and $[-M',M']$ are compact, by
  Corollary~\ref{cor:tightest_robust_approximation}, the reachability
  map $A_N$ has a tightest robust approximation $\bra{A_N}: \kCPPO{[-M,M]^n} \to \kCPPO{[-M',M']}$. To prove that
  $\bra{A_N}$ is indeed an extension of $N$, we must show that over
  the maximal elements of $\kCPPO{[-M,M]^n}$, the map $\bra{A_N}$
  takes singleton values. This is a straightforward consequence of the
  fact that $N$ is continuous.
\end{proof}

This is in contrast with the case of classifiers, where the network
itself did not have to be even continuous, and we had to be content
with tightest robust approximations.

Of particular interest is the interval enclosure of a given regressor,
which may not be the tightest robust extension of the regressor, but
it is a robust extension nonetheless, which is commonly used in
interval analysis of neural networks:

\begin{theorem}
  \label{thm:regressor_interval_robust_extension}
  Let $N: [-M,M]^n \to [-M',M']$ be a regressor in ${\cal
    F}$, and assume that:
  \begin{itemize}
  \item
    $\hbClos{(\cdot)} : \kCPPO{[-M,M]^n} \to \intvaldom[{[-M,M]^n}]$
    is the hyperbox closure map of~\eqref{eq:hyper_rect_clos}.

  \item $\tilde{N}: \intvaldom[{[-M,M]^n}] \to \intvaldom[{[-M',M']}]$
    is any Scott-continuous extension of $N$, {\eg}, the canonical
    interval extension of
    Definition~\ref{def:ext_canonical_ext}~\ref{item:canonical_interval_extension}.

  \item $\iota : \intvaldom[{[-M',M']}] \to \kCPPO{[-M',M']}$ is the
    embedding of the subdomain $\intvaldom[{[-M',M']}]$ into
    $\kCPPO{[-M',M']}$, as in Corollary~\ref{cor:IR_subdomain_KR}.

  \end{itemize}
  \noindent
  Then, the map $\hat{N}: \kCPPO{[-M,M]^n} \to \kCPPO{[-M',M']}$
  defined by
  $\forall X \in \kCPPO{[-M,M]^n}: \hat{N}(X) \defeq
  \iota(\tilde{N}(\hbClos{X}))$ is a robust extension of $N$.
\end{theorem}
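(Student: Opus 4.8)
The plan is to verify the two defining requirements of a robust extension separately: that $\hat{N}$ is \emph{robust}, and that it is an \emph{extension} of $N$ in the sense of Definition~\ref{def:ext_canonical_ext}~\ref{item:extension}. For robustness, I would observe that $\hat{N} = \iota \circ \tilde{N} \circ \hbClos{(\cdot)}$ is a composition of three Scott-continuous maps. Indeed, $\hbClos{(\cdot)}$ is Scott-continuous by the restriction to $[-M,M]^n$ of Corollary~\ref{cor:box_map_scott_cont}, the map $\tilde{N}$ is Scott-continuous by hypothesis, and $\iota$ is Scott-continuous as the embedding half of the embedding–projection pair of Corollary~\ref{cor:IR_subdomain_KR}. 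Since a composition of Scott-continuous maps is Scott-continuous, $\hat{N}$ is Scott-continuous. As both $[-M,M]^n$ and $[-M',M']$ are compact metric spaces, Theorem~\ref{thm:compact_metric} identifies the Scott and robust topologies on $\kCPPO{[-M,M]^n}$ and $\kCPPO{[-M',M']}$; hence Scott-continuity of $\hat{N}$ is precisely robustness.

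For the extension property I would evaluate $\hat{N}$ on an arbitrary maximal element $\set{x}$ with $x \in [-M,M]^n$ and track the singleton through the three maps. First, the hyperbox closure of a singleton is degenerate: since $\pi_i(\set{x}) = \set{x_i}$, formula~\eqref{eq:K_box_prod} gives $\hbClos{\set{x}} = \prod_{i=1}^n [x_i, x_i] = \set{x}$. Next, because $\tilde{N}$ is an extension of $N$, on the maximal element $\set{x}$ it returns the degenerate interval $\hbClos{\set{N(x)}} = \set{N(x)}$. Finally, the embedding $\iota$ sends the maximal element $\set{N(x)}$ of $\intvaldom[{[-M',M']}]$ to the corresponding maximal element $\set{N(x)}$ of $\kCPPO{[-M',M']}$. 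Composing, $\hat{N}(\set{x}) = \iota(\tilde{N}(\hbClos{\set{x}})) = \iota(\tilde{N}(\set{x})) = \iota(\set{N(x)}) = \set{N(x)}$, which is exactly the extension condition.

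I expect no serious obstacle: the argument is essentially bookkeeping once the three component maps are known to be Scott-continuous and the singleton identifications are made explicit. The only points demanding care are the passage from the unrestricted statements of Corollaries~\ref{cor:box_map_scott_cont} and~\ref{cor:IR_subdomain_KR} to their restrictions over the compact domains $\intvaldom[{[-M,M]^n}]$, $\kCPPO{[-M,M]^n}$, $\intvaldom[{[-M',M']}]$ and $\kCPPO{[-M',M']}$, and the consistent identification of the maximal elements of all these domains with points of Euclidean space, so that the chain of equalities $\tilde{N}(\set{x}) = \set{N(x)}$ and $\iota(\set{N(x)}) = \set{N(x)}$ is read correctly across the differing codomains.
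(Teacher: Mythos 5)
Your proposal is correct and follows essentially the same route as the paper: robustness via Scott-continuity of the composition $\iota \circ \tilde{N} \circ \hbClos{(\cdot)}$ together with Theorem~\ref{thm:compact_metric}, and the extension property by checking that maximal elements map to singletons. Your explicit tracking of $\set{x}$ through the three maps, using the hypothesis that $\tilde{N}$ is an interval extension of $N$, is if anything slightly sharper than the paper's appeal to continuity of $N$ for that final step.
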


\begin{proof}
  We know that $\hbClos{(\cdot)}$, $\tilde{N}$, and $\iota$ are
  Scott-continuous. Therefore, the map $\hat{N}$---being a composition
  of the three---is Scott-continuous. Since both $[-M,M]^n$ and
  $[-M',M']$ are compact, by Theorem~\ref{thm:compact_metric}, the map
  $\hat{N}$ is robust. Hence, $\hat{N}$ is a robust approximation of
  $N$. To prove that $\hat{N}$ is a robust extension of $N$, we must
  show that over maximal input values it returns singletons. But this
  is again a straightforward consequence of continuity of $N$.
\end{proof}

  \section{Validated Local Robustness Analysis}
\label{sec:validated_lipschitz_constant}

The results of Section~\ref{sec:global_robustness_analysis} concern
globally robust approximation of neural networks. In the machine
learning literature, robustness analysis is mainly carried out
locally, that is, over specific locations in the input domain. A
classifier $N: \R^n \to \set{c_1, \ldots, c_k}$ may be regarded as
robust at $v \in \R^n$ if, for some neighborhood $B( \set{v}, r_0)$ of
radius $r_0 > 0$, we have:
$\forall x \in B(\set{v},r_0): N(x) = N(v)$. In the current article,
by attack-agnostic robustness measurement we mean obtaining a tight
lower bound for the largest such $r_0$. This, in turn, can be reduced
to obtaining a tight upper bound for the local Lipschitz constant in a
neighborhood of the
point~$v$~\parencite{Hein_Andriushchenko:Robustness:2017,Weng_et_al-CLEVER-ICLR:2018},
which is the focus of the rest of the current article.

From a theoretical angle, the more challenging networks for Lipschitz
analysis are those with non-differentiable activation functions, such
as $\ReLU$. For such functions, a generalized notion of gradient must
be used, {\eg},
Clarke-gradient~\parencite{Clarke:Opt_Non_Smooth_Analysis-Book:1990}. Indeed,
Clarke-gradient has been used recently for robustness analysis of
Lipschitz neural
networks~\parencite{Jordan_Dimakis:Exactly_NeurIPS:2020,Jordan_Dimakis:Provable_ICML:2021}. On
the other hand, the \emph{domain-theoretic $L$-derivative}
(Definition~\ref{def:L_derivative} \vpageref[below]{def:L_derivative})
introduced by~\textcite{Edalat:2008:Continuous_Derivative} also
coincides with Clarke-gradient, a fact which was proven for
finite-dimensional Banach spaces
by~\textcite{Edalat:2008:Continuous_Derivative}, and later generalized
to infinite-dimensional Banach spaces
by~\textcite{Hertling:Clarke_Edalat:2017}. Using this, we develop a
domain-theoretic framework for Lipschitz analysis of feedforward
networks, which provides a theoretical foundation for methods such
as~\parencite{Jordan_Dimakis:Exactly_NeurIPS:2020,Jordan_Dimakis:Provable_ICML:2021}.

Working with the $L$-derivative, however, is computationally
costly. It requires computation over $\cCPPO{\R^n\lift}$ and involves
a rather complicated chain rule
(Proposition~\ref{prop:Clarke_chain_rule}). Therefore, we consider a
hyperbox approximation of the $L$-derivative, which we call the
$\hat{L}$-derivative (Definition~\ref{def:Lhat_derivative}). This will
allow us to work with the relatively simpler domain
$\intvaldom[\R^n\lift]$ and also a much simpler chain rule
(Lemma~\ref{lem:chain_Rule_hat_L}). Finally, we will move on to
computation of the Lipschitz constant of a given regressor, according
to the following overall strategy.

Assume that $N \in {\cal F}$ is a feedforward regressor. By
using~\eqref{eq:layer_by_layer}, we obtain a unique closed-form
expression for $N$ in terms of the weights, biases, and the activation
functions. Let
$\tilde{L}(N): \intvaldom[{[-M,M]^n}] \to \intvaldom[\R^n\lift]$
denote the interval approximation of the $\hat{L}$-derivative of $N$
obtained by applying the chain rule of
Lemma~\ref{lem:chain_Rule_hat_L} over this closed-form expression. The
map $\tilde{L}(N)$ is Scott-continuous. Since we consider
$\norm{.}_{\infty}$ as the perturbation norm, we must compute the
$\norm{.}_1$-norm of the gradients (see
Remark~\ref{rem:func_analysis_dual_Banach}). First, we extend the
absolute value function to real intervals by defining
$\absn{I} \defeq \setbarNormal{\absn{x}}{x \in I}$ for any interval
$I$. Then, we extend the $\norm{.}_1$ norm to hyperboxes by defining
$\norm{\prod_{i=1}^n I_i}_1 \defeq \sum_{i=1}^n \absn{I_i}$. For
instance, we have
$\norm{[-1,1] \times [-1,2]}_1 = [0,1] + [0,2] = [0,3]$. The function
$\norm{.}_1 : \intvaldom[\R^n\lift] \to \intvaldom[\R^1\lift]$ is also
Scott-continuous. Therefore, the composition
$\norm{\tilde{L}(N)}_1: \intvaldom[{[-M,M]^n}] \to
\intvaldom[\R^1\lift]$ is a Scott-continuous function. All that
remains is to find the maximum value and the maximum set of
$\norm{\tilde{L}(N)}_1$ to approximate the Lipschitz constant and the
set of points where the maximum Lipschitz values are attained, within
any given neighborhood. This is one of the main contributions of the
current article and will be presented in detail in
Section~\ref{subsec:Maximization_algorithm}.

If, instead of $\intvaldom[\R^n\lift]$, we use the finer domain
$\cCPPO{\R^n\lift}$, then the suitable chain rule will be that of
Proposition~\ref{prop:Clarke_chain_rule}, which provides a more
accurate result, at the cost of calculating the closed convex
hull. This has also been observed
in~\parencite{Jordan_Dimakis:Provable_ICML:2021}. By using the
$\hat{L}$-derivative and the chain rule of
Lemma~\ref{lem:chain_Rule_hat_L}, we avoid computation of the convex
hull, and trade some accuracy for efficiency. As it turns out,
however, \emph{we will not lose any accuracy over general position
  $\ReLU$ networks (Theorem~\ref{thm:gen_pos_ReLU}), or over
  differentiable networks
  (Theorem~\ref{thm:differentiable_networks})}.

\begin{remark}
\label{rem:l_p_norms}
  In this article, we focus on the $\norm{.}_{\infty}$-norm for
  perturbations because we will be working with hyperboxes, and in the
  implementation, we will use interval arithmetic. This does not mean,
  however, that we cannot analyze robustness with respect to
  $\norm{.}_p$ for other values of $p$. For example, assume that we
  are given a closed neighborhood
  $V(x_0,r) \defeq \setbarNormal{x \in \R^n}{ \norm{x - x_0}_p \leq
    r}$, for some $p \in [1, \infty)$. We can first cover the
  neighborhood $V(x_0,r)$ with (a large number of sufficiently small)
  hyperboxes $B = \setbarNormal{b_i}{i \in I}$. We obtain an
  overapproximation of the $\norm{.}_{p'}$-norm of the Clarke-gradient
  over each box $b_i$, and then obtain an approximation of the
  Lipschitz constant over the neighborhood $V(x_0,r)$.
\end{remark}

\subsection{Lipschitz Continuity and Lipschitz Constant}

Assume that $(X, \delta_X)$ and $(Y, \delta_Y)$ are two metric spaces,
$f : X \to Y$, and $A \subseteq X$. A real number $L \geq 0$ is said
to be a \emph{Lipschitz bound} for $f$ over $A$ if:

\begin{equation}
  \label{eq:Lipschitz_bound}
  \forall a, b \in A: \delta_Y(f(a), f(b)) \leq L \delta_X( a, b).  
\end{equation}
\noindent
A function for which such a bound exists is said to be Lipschitz
continuous over $A$, and the smallest $L$
satisfying~\eqref{eq:Lipschitz_bound}---which must exist---is called
the \emph{Lipschitz constant} of $f$ over $A$.

If $f$ is Lipschitz continuous over $A$, then it is also continuous
over $A$, although the converse is not true. For instance, the
function $h : \ell^1_\infty \to \ell^1_\infty$, defined by
$h(x) \defeq \sqrt{\absn{x}}$, is continuous everywhere, but not
Lipschitz continuous over any neighborhood of $0$. Whenever $X$ is a
compact subset of $\ell^n_\infty$ and $f: X \to \R$ is continuously
differentiable, then $f$ is Lipschitz over $X$, but again, the
converse is not true. For example, the function $h: [-1,1] \to \R$,
defined by $h(x) \defeq \absn{x}$, is Lipschitz continuous, but not
differentiable at $0$. To summarize, we have the following inclusions
for functions over compact subsets of $\R^n$:
\begin{equation*}
  \text{Continuously Differentiable} \subset \text{Lipschitz
    Continuous} \subset \text{Continuous}.
\end{equation*}
\noindent
Even though the inclusions are strict, there is a close relationship
between Lipschitz continuity and differentiability. Rademacher's
theorem states that if $f$ is Lipschitz continuous over an open subset
$A$ of $\R^n$, then it is (Fr{\'e}chet) differentiable almost
everywhere (with respect to the Lebesgue measure) over
$A$~\parencite[Corollary~4.19]{Clarke_et_al:Nonsmooth_Control:Book:1998}. For
example, the function $\ReLU$---defined by
$\forall x \in \R: \ReLU(x) \defeq \max( 0, x)$---is (classically)
differentiable everywhere except at zero:
\begin{equation*}
  (\ReLU)'(x) =
  \left\{
    \begin{array}{ll}
      1, & \text{if } x > 0, \\
      0, & \text{if } x < 0, \\
      \mathrm{undefined}, & \text{if } x = 0.
    \end{array}
  \right.
\end{equation*}

\subsection{Clarke-Gradient}

Although $\ReLU$ is not differentiable at $0$, one-sided limits of the
derivative exist, {\ie}:
\begin{equation}
  \label{eq:relu_left_right_derive}
\lim_{x \to 0^-} (\ReLU)'(x) = 0 \quad \text{and} \quad \lim_{x \to 0^+} (\ReLU)'(x) = 1.   
\end{equation}
As shown in Figure~\ref{fig:relu_cones-step_fun} (Left), in geometric terms, for every
$\lambda \in [0,1]$, the line $y_\lambda(x) \defeq \lambda x$
satisfies:
\begin{equation}
  \label{eq:below_relu}
  y_\lambda (0) = \ReLU (0) \quad \wedge \quad \forall x \in \R: y_\lambda(x) \leq
  \ReLU (x).
\end{equation}
The interval $[0,1]$ is the largest set of $\lambda$ values which
satisfy~\eqref{eq:below_relu}, and may be considered as a gradient
\emph{set} for $\ReLU$ at zero. The concept of generalized (Clarke)
gradient formalizes this idea. Clarke-gradient may be defined for
functions over any Banach space $X$~\parencite[page
27]{Clarke:Opt_Non_Smooth_Analysis-Book:1990}. When $X$ is
finite-dimensional, however, the Clarke-gradient of $f: X \to \R$ has
a simpler characterization. Assume that $X$ is an open subset of
$\R^n$, and $f: X \to \R$ is Lipschitz continuous over~$X$. By
Rademacher's theorem, we know that the set
${\cal N}_f \defeq \setbarNormal{z \in X}{f \text{ is not
    differentiable at } z \, }$ has Lebesgue measure zero. The
Clarke-gradient $\clarke{f}(x)$ of $f$ at any $x \in X$ satisfies:
\begin{equation}
  \label{eq:clarke_fin_dim_charac}
  \clarke{f}(x) = \convHull \setbarTall{\lim_{i \to \infty} f'(x_i)}{ \lim_{i \to \infty} x_i = x \wedge \forall
    i\in \N: x_i \notin {\cal N}_f}.
\end{equation}
Equation~\eqref{eq:clarke_fin_dim_charac} should be interpreted as
follows: take any sequence $(x_i)_{i \in \N}$, which completely avoids
the set ${\cal N}_f$ of points where $f$ is not differentiable,
converges to $x$, and for which $\lim_{i \to \infty} f'(x_i)$
exists. The Clarke-gradient is the convex hull of these
limits~\parencite[page 63]{Clarke:Opt_Non_Smooth_Analysis-Book:1990}. It
should be clear from~\eqref{eq:relu_left_right_derive}
and~\eqref{eq:clarke_fin_dim_charac}
that $\clarke{\ReLU}(0) = [0,1]$.

\begin{proposition}[{\parencite[Proposition~2.1.2]{Clarke:Opt_Non_Smooth_Analysis-Book:1990}}]
  \label{prop:Clarke_general_properties}
  Assume that $p \in [1,\infty]$. Let $X$ be an open subset of
  $\ell^n_p$ and $f: X \to \R$ be Lipschitz continuous over $X$ with a
  Lipschitz bound $L$. Then, for every $x \in X$:

  \begin{enumerate}[label=(\roman*)] \item
  \label{item:clarke_convex_compact_subset_Rn} $\clarke{f}(x)$ is a
  non-empty, convex, and compact subset of $\R^n$.  \item $\forall \xi
  \in \clarke{f}(x): \norm{\xi}_{p'} \leq L$, in which $p'$ is the
  conjugate of $p$ as defined in~\eqref{eq:p_prime_conj}.
  \end{enumerate} \end{proposition}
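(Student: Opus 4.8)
The plan is to argue directly from the finite-dimensional characterization~\eqref{eq:clarke_fin_dim_charac}, rather than from the generalized directional derivative. The single fact that drives everything is that at each point $z$ where $f$ is (Fr\'echet) differentiable, the gradient lies in the closed dual ball $\bar{B} \defeq \setbarNormal{\xi \in \R^n}{\norm{\xi}_{p'} \leq L}$. To establish this I would fix a direction $v \in \R^n$, write $\langle f'(z), v \rangle = \lim_{t \to 0} (f(z + tv) - f(z))/t$, and apply the Lipschitz bound to get $\absn{\langle f'(z), v\rangle} \leq L \norm{v}_p$. Taking the supremum over $\norm{v}_p \leq 1$ and invoking the duality of $\ell^n_p$ and $\ell^n_{p'}$ (which identifies the operator norm of the functional $v \mapsto \langle f'(z), v\rangle$ with $\norm{f'(z)}_{p'}$) yields $\norm{f'(z)}_{p'} \leq L$. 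Since $\R^n$ is finite-dimensional, $\bar{B}$ is compact and convex.

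Property~(ii) and convexity in~(i) then follow quickly. Every gradient $f'(x_i)$ appearing in~\eqref{eq:clarke_fin_dim_charac} lies in $\bar{B}$; as $\bar{B}$ is closed, every limit $\lim_{i \to \infty} f'(x_i)$ lies in $\bar{B}$ as well; and as $\bar{B}$ is convex, the convex hull of these limits stays inside $\bar{B}$. This gives $\norm{\xi}_{p'} \leq L$ for all $\xi \in \clarke{f}(x)$, which is~(ii). Convexity is immediate, since $\clarke{f}(x)$ is by definition a convex hull.

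For non-emptiness and compactness I would write $S$ for the set of limits inside the convex hull in~\eqref{eq:clarke_fin_dim_charac}, so that $\clarke{f}(x) = \convHull(S)$. By Rademacher's theorem ${\cal N}_f$ has Lebesgue measure zero, so its complement is dense; hence there is a sequence $x_i \to x$ with $x_i \notin {\cal N}_f$, and since the gradients $f'(x_i)$ all lie in the compact ball $\bar{B}$, Bolzano--Weierstrass supplies a convergent subsequence (still avoiding ${\cal N}_f$ and still converging to $x$) whose limit belongs to $S$. Thus $S \neq \emptyset$, and hence so is $\clarke{f}(x)$. To prove compactness I would first show $S$ is closed (it is already bounded, being contained in $\bar{B}$): given $\xi_k \in S$ with $\xi_k \to \xi$, a diagonal argument selects for each $k$ a single point $y_k \notin {\cal N}_f$ with $\norm{y_k - x}_p < 1/k$ and $\norm{f'(y_k) - \xi_k}_{p'} < 1/k$; then $y_k \to x$ and $f'(y_k) \to \xi$, so $\xi \in S$. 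Hence $S$ is compact, and in $\R^n$ the convex hull of a compact set is compact (Carath\'eodory), giving compactness of $\clarke{f}(x)$.

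The hard part will be the closedness of $S$: the defining set in~\eqref{eq:clarke_fin_dim_charac} consists of \emph{limits of sequences}, so proving it closed forces the diagonalization above, collapsing a doubly-indexed family into one admissible sequence. A secondary technical point is the dual-norm estimate $\norm{f'(z)}_{p'} \leq L$ at the endpoints $p = 1$ and $p = \infty$, where the $\ell^n_p$--$\ell^n_{p'}$ duality must be handled with the conventions fixed in~\eqref{eq:p_prime_conj}. The remaining ingredients---Rademacher's theorem and compactness of convex hulls of compact subsets of $\R^n$---are standard and can simply be quoted; indeed, the whole statement is Clarke's Proposition~2.1.2, so an alternative is to cite it directly.
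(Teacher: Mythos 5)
Your proof is correct, but it is worth noting that the paper does not prove this proposition at all: it is quoted verbatim from Clarke's monograph (Proposition~2.1.2 there), so your reconstruction is a genuinely different contribution rather than a parallel of the paper's argument. Clarke's own proof proceeds from the definition of $\clarke{f}(x)$ via the generalized directional derivative $f^{\circ}(x;v)$, deriving (i) and (ii) from sublinearity and the bound $\absn{f^{\circ}(x;v)} \leq L \norm{v}_p$, and only later establishes the finite-dimensional characterization~\eqref{eq:clarke_fin_dim_charac} as a separate theorem. You instead take~\eqref{eq:clarke_fin_dim_charac} as the working description and verify all the claimed properties directly: the dual-norm bound at differentiable points, non-emptiness via Rademacher plus Bolzano--Weierstrass, closedness of the limit set $S$ via the diagonal selection, and compactness of $\convHull(S)$ via Carath\'eodory. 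All of these steps are sound, and the two technical points you flag (the diagonalization and the endpoint cases $p=1,\infty$ of the duality) are exactly the right ones to worry about. The only caveat is a matter of logical bookkeeping: if the Clarke gradient is \emph{defined} via the generalized directional derivative, then your argument establishes the properties of the right-hand side of~\eqref{eq:clarke_fin_dim_charac}, and one still needs the (nontrivial) equivalence of the two descriptions to transfer them to $\clarke{f}(x)$ proper. Relative to the paper's presentation, which treats~\eqref{eq:clarke_fin_dim_charac} as the operative characterization in finite dimensions, your proof is complete; what the citation-only route buys the paper is precisely not having to discharge that equivalence.
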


The following chain rule for the Clarke-gradient is crucial for
proving the soundness of methods such
as~\parencite{Chaudhuri_et_al:Proving_Programs_Robust:2011,Chaudhuri_et_al:Continuity_Robustness:2012,Jordan_Dimakis:Exactly_NeurIPS:2020,Jordan_Dimakis:Provable_ICML:2021,Laurel_et_al:Dual_Number:2022}:

\begin{proposition}[\textbf{Chain Rule},
  {\parencite[Theorem~2.3.9]{Clarke:Opt_Non_Smooth_Analysis-Book:1990}}]
  \label{prop:Clarke_chain_rule}
  Assume that $X \subseteq \R^n$ is open,
  $h = (h_1, \ldots, h_m): X \to \R^m$ is Lipschitz continuous in a
  neighborhood of $x \in X$, and $g : \R^m \to \R$ is Lipschitz
  continuous in a neighborhood of $h(x) \in \R^m$. Then,
  $f \defeq g \circ h$ is Lipschitz in a neighborhood of $x$, and we
  have:
  \begin{equation}
    \label{eq:Clarke_chain_rule}
    \clarke{f}(x) \subseteq \cl{\convHull} \setbarTall{\sum_{i=1}^m
      \alpha_i \zeta_i}{ \zeta_i \in \clarke{h_i}(x) \wedge \alpha \in
    \clarke{g}(h(x))},
\end{equation}
in which, $\cl{\convHull}$ denotes the closed convex hull.
\end{proposition}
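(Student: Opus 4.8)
The statement is classical---it is Clarke's chain rule \parencite[Theorem~2.3.9]{Clarke:Opt_Non_Smooth_Analysis-Book:1990}---so the plan is to recall the standard support-function argument. That $f = g \circ h$ is Lipschitz in a neighborhood of $x$ is routine: continuity of $h$ keeps $h(y)$ inside the neighborhood where $g$ is Lipschitz, and the composition of an $L_h$-Lipschitz map with an $L_g$-Lipschitz map is $L_g L_h$-Lipschitz. For the inclusion, the key observation is that both sides are non-empty, convex, compact subsets of $\R^n$: the left side by Proposition~\ref{prop:Clarke_general_properties}\ref{item:clarke_convex_compact_subset_Rn}, and the right side because it is the closed convex hull of $S \defeq \{\sum_{i=1}^m \alpha_i \zeta_i : \zeta_i \in \clarke{h_i}(x),\ \alpha \in \clarke{g}(h(x))\}$, which is bounded since each $\clarke{h_i}(x)$ and $\clarke{g}(h(x))$ is compact. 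For convex compact $A, B$ one has $A \subseteq B$ iff $\sigma_A(v) \le \sigma_B(v)$ for all $v \in \R^n$, where $\sigma_K(v) \defeq \max_{\xi \in K}\langle \xi, v\rangle$ is the support function. It therefore suffices to establish this one-parameter family of scalar inequalities.

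Second, I would identify the two support functions. The support function of $\clarke{f}(x)$ in direction $v$ is exactly the Clarke generalized directional derivative
\[
f^\circ(x;v) = \limsup_{y \to x,\ t \downarrow 0}\frac{f(y + tv) - f(y)}{t},
\]
the standard characterization of the finite-dimensional Clarke gradient. On the right, since a set and its closed convex hull share the same support function, $\sigma_{\cl{\convHull}\,S}(v) = \sigma_S(v)$, and distributing the suprema gives $\sigma_S(v) = \sup_{\alpha \in \clarke{g}(h(x))} \sum_{i=1}^m \sup_{\zeta_i \in \clarke{h_i}(x)} \alpha_i \langle \zeta_i, v\rangle$. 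The target inequality is thus $f^\circ(x;v) \le \sigma_S(v)$ for every $v$.

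Third is the core estimate. Fix $v$ and take $y_k \to x$, $t_k \downarrow 0$ realizing the $\limsup$ defining $f^\circ(x;v)$. Applying Lebourg's mean value theorem to $g$ along the segment from $h(y_k)$ to $h(y_k + t_k v)$ yields an intermediate point $w_k$ on that segment and $\beta_k \in \clarke{g}(w_k)$ with $g(h(y_k + t_k v)) - g(h(y_k)) = \langle \beta_k, h(y_k + t_k v) - h(y_k)\rangle$. Dividing by $t_k$ rewrites the difference quotient as $\sum_i \beta_{k,i}\,\frac{h_i(y_k + t_k v) - h_i(y_k)}{t_k}$. The $\beta_k$ are bounded (Lipschitz bound on $g$) and the quotient vectors are bounded (Lipschitz bound on $h$), so after passing to a subsequence $\beta_k \to \alpha$ and $\frac{h_i(y_k + t_k v) - h_i(y_k)}{t_k} \to r_i$. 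Since $w_k \to h(x)$ and the Clarke gradient has closed graph (upper semicontinuity), $\alpha \in \clarke{g}(h(x))$; and from the definition of the componentwise generalized directional derivatives one gets the two-sided bound $\min_{\zeta_i \in \clarke{h_i}(x)}\langle\zeta_i,v\rangle \le r_i \le \max_{\zeta_i \in \clarke{h_i}(x)}\langle \zeta_i, v\rangle$. Consequently $\alpha_i r_i \le \sup_{\zeta_i}\alpha_i\langle\zeta_i,v\rangle$ for each $i$, whether $\alpha_i$ is positive or negative, and summing gives $f^\circ(x;v) = \sum_i \alpha_i r_i \le \sigma_S(v)$, as required.

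The main obstacle is the third step, specifically the interchange of the two independent limiting processes hidden in the difference quotient: the ``multiplier'' $\beta_k$ coming from $g$ and the ``rate'' quotients coming from $h$ vary simultaneously as $(y_k, t_k) \to (x, 0)$, and $\limsup$ commutes with neither sums nor products. The subsequence extraction that decouples them, together with the sign-sensitive bounding of each $r_i$ by the correct one-sided directional derivative, is the delicate part. One must also be careful that Lebourg's theorem supplies subgradients of $g$ at the intermediate points $w_k$ rather than at $h(x)$, so that the conclusion $\alpha \in \clarke{g}(h(x))$ genuinely relies on upper semicontinuity of the Clarke gradient.
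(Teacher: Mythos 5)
The paper offers no proof of this proposition; it is imported verbatim as Theorem~2.3.9 of Clarke's monograph, so there is no in-paper argument to compare against. Your reconstruction is correct and is essentially Clarke's own proof: reduce the inclusion of convex compact sets to an inequality of support functions, identify $\sigma_{\clarke{f}(x)}(v)$ with the generalized directional derivative $f^\circ(x;v)$, and establish the scalar bound via Lebourg's mean value theorem together with the upper semicontinuity of $\clarke{g}$ and the two-sided bound on the limiting difference quotients of the $h_i$; the sign-sensitive step $\alpha_i r_i \leq \sup_{\zeta_i} \alpha_i \langle \zeta_i, v\rangle$ is handled correctly.
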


Unlike the chain rule for the classical derivative, the chain rule
of~\eqref{eq:Clarke_chain_rule} provides only a subset relation,
which, in general, is not an equality. A simple example of lack of
equality is shown in Figure~\ref{fig:dependency-zero_NN}, which
demonstrates an instance of the dependency problem (see
Section~\ref{subsubsec:dependency}).

\subsection{Domain-Theoretic \texorpdfstring{$\boldsymbol{L}$}{L}-Derivative}

We recall the concept of a domain-theoretic derivative for a function
$f: U \to \R$ defined on an open set $U \subseteq \R^n$. Assume that
$(X, \Omega)$ is a topological space, and $(D, \sqsubseteq)$ is a
\ac{PDCPO}, with bottom element~$\bot$. Then, for any open set
$O \in \Omega$, and any element $b \in D$, we define the single-step
function $b \chi_O: X \to D$ as follows:
\begin{equation}
  \label{eq:single_step_fun}
  b \chi_O (x) \defeq
  \left\{
    \begin{array}{ll}
      b, &  \text{if } x \in O,\\
      \bot, & \text{if } x \in X \setminus O.
    \end{array}
  \right.
\end{equation}  

\begin{figure}[t]
  \centering
  \scalebox{0.3}[0.3]{\includegraphics{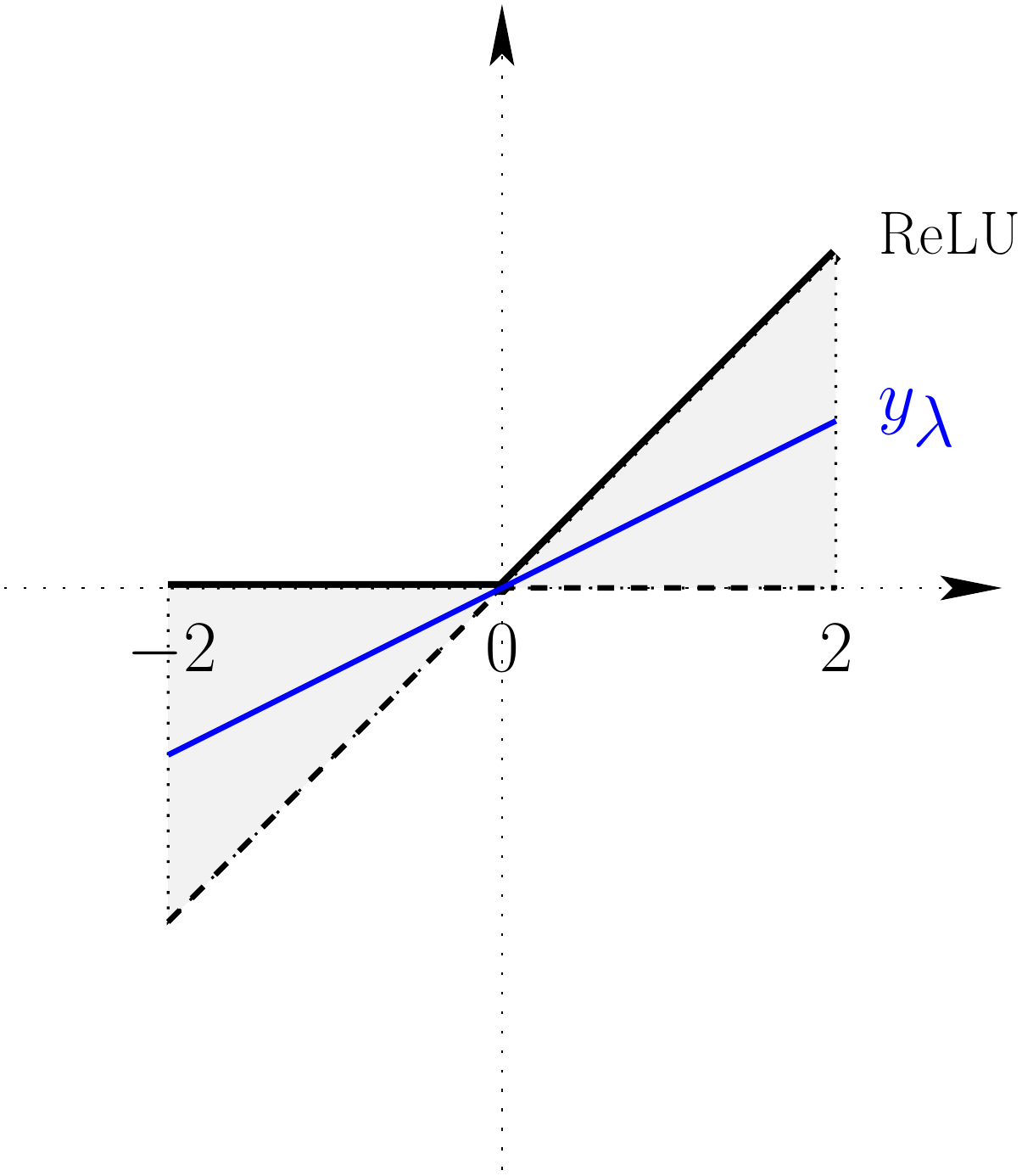}} \hspace{3em}
  \scalebox{0.3}[0.3]{\includegraphics{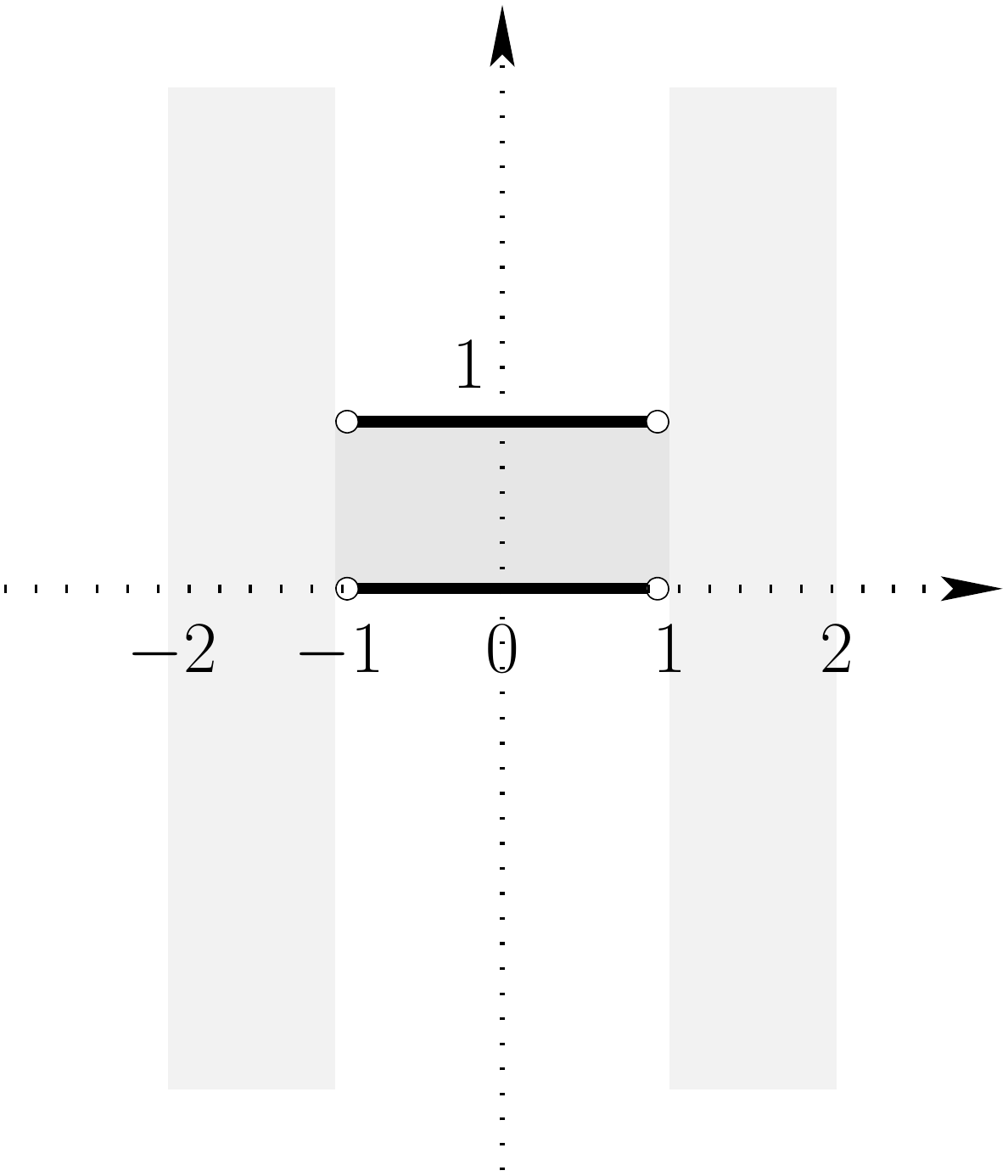}}
  
  \caption{Left: The function $\ReLU: (-2,2) \to \R$ (solid black) and
    the line $y_\lambda$ (in {\color{blue}{blue}})
    from~\eqref{eq:below_relu} for some $\lambda \in [0,1]$, which
    lies entirely in the shaded cone. Right: The step function
    $b \chi_O: (-2,2) \to \intvaldom[\R\lift]$ in which $b = [0,1]$
    and $O = (-1,1)$. We have $\ReLU \in \delta(O,b)$.}
  \label{fig:relu_cones-step_fun}
\end{figure}

\begin{definition}[$L$-derivative~{\parencite[Section~2]{EdalatLieutierPattinson:2013-MultiVar-Journal}}]
  \label{def:L_derivative}

  Assume that $O \subseteq U \subseteq \R^n$, both $O$ and $U$ are
  open, and $b \in \cCPPO{\R^n\lift}$:

  \begin{enumerate}[label=(\roman*)]

  \item The single-step tie $\delta(O, b)$ is the set of all functions
    $f : U \to \R$ that satisfy
    $\forall x, y \in O: b(x-y) \sqsubseteq f(x) - f(y)$. The set
    $b(x-y)$ is obtained by taking the inner product of every element
    of $b$ with the vector $x-y$, and $\sqsubseteq$ is the reverse
    inclusion order on $\cCPPO{\R^1\lift}$.

  \item \label{item:L_Derivative} The $L$-derivative of any function $f: U \to \R$ is defined
    as:    
    \begin{equation*}
     L(f) \defeq \bigsqcup \setbarTall{b \chi_O}{O \subseteq U, b \in
       \cCPPO{\R^n\lift}, f \in \delta(O, b)}. 
   \end{equation*}
   As such, the map $L$ has type $L: (U \to \R) \to (U \to \cCPPO{\R^n\lift})$.  
\end{enumerate}
  
\end{definition}

\begin{example}
  Assume that $U = (-2,2)$. Consider the function
  $\ReLU: (-2,2) \to \R$ and the step function
  $b \chi_O: (-2,2) \to \intvaldom[\R\lift]$ with $b = [0,1]$ and
  $O = (-1,1)$, as depicted in
  Figure~\ref{fig:relu_cones-step_fun}. We have
  $\ReLU \in \delta(O,b)$. In fact, we have $\ReLU \in \delta(O,b)$
  for any open set $O$ containing zero, {\eg}, the interval
  $(-2^n, 2^n)$, for any $n \in \N$. Thus, one may verify that:
  \begin{equation*}
    \forall x \in (-2,2): \quad  L(\ReLU)(x) =
    \left\{
      \begin{array}{ll}
        0, & \text{if } x < 0,\\
        {[0,1]}, & \text{if } x = 0,\\
        1, & \text{if } x > 0.
      \end{array}
    \right.    
  \end{equation*}
\end{example}

For any $f: U \to \R$, the $L$-derivative
$L(f): U \to \cCPPO{\R^n\lift}$ is Euclidean-Scott-continuous. When
$f$ is classically differentiable at $x \in U$, the $L$-derivative and
the classical derivative
coincide~\parencite{Edalat:2008:Continuous_Derivative}. Many of the
fundamental properties of the classical derivative can be generalized
to the domain-theoretic one, {\eg}, additivity and the chain
rule~\parencite{Edalat_Pattinson:2005:Inverse_Implicit}. Of particular
importance to the current paper is the following result:
\begin{theorem}[\parencite{Edalat:2008:Continuous_Derivative,Hertling:Clarke_Edalat:2017}]
\label{thm:L_deriv_Clarke_coincide}
  Over Lipschitz continuous functions $f: U \to \R$, the
  domain-theoretic $L$-derivative coincides with the Clarke-gradient,
  {\ie}, $\forall x \in U: L(f)(x) = \clarke{f}(x)$. 
\end{theorem}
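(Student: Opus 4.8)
The statement is quoted from~\parencite{Edalat:2008:Continuous_Derivative,Hertling:Clarke_Edalat:2017}, so the cleanest route is to cite those works; below I sketch how I would reconstruct the argument in the finite-dimensional setting relevant here. The plan is to prove the two set inclusions $\clarke{f}(x)\subseteq L(f)(x)$ and $L(f)(x)\subseteq\clarke{f}(x)$ separately. First I would unfold the $L$-derivative at a point: since each contributing term is a single-step function, only those with $x\in O$ are non-bottom, so $L(f)(x)=\bigsqcup\setbarNormal{b\in\cCPPO{\R^n\lift}}{\exists\text{ open }O\ni x:\ f\in\delta(O,b)}$. Using the filter property of single-step ties (for a fixed domain, $\delta(O,b_1)\cap\delta(O,b_2)=\delta(O,b_1\cap b_2)$ whenever $b_1\cap b_2\neq\emptyset$, and $\delta$ is antitone in its first argument) this family is directed under $\sqsubseteq$, so the join is computed as an intersection of the underlying sets. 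I would also record that, by reading reverse inclusion as membership, $f\in\delta(O,b)$ is equivalent to the support-function sandwich
\[
  \min_{\xi\in b}\xi\cdot(y-z)\ \le\ f(y)-f(z)\ \le\ \max_{\xi\in b}\xi\cdot(y-z)\qquad(\forall\, y,z\in O).
\]

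For the inclusion $\clarke{f}(x)\subseteq L(f)(x)$ I would fix any qualifying pair $(O,b)$ and show $\clarke{f}(x)\subseteq b$. At any $w\in O$ where $f$ is classically differentiable, dividing the sandwich with $z=w$, $y=w+tv$ by $t>0$ and letting $t\to0^+$ gives $f'(w)\cdot v\le \max_{\xi\in b}\xi\cdot v$ for every $v$; since a compact convex set is the intersection of its supporting half-spaces, this forces $f'(w)\in b$. As $b$ is closed and convex and $f$ is differentiable off a Lebesgue-null set (Rademacher), the characterisation~\eqref{eq:clarke_fin_dim_charac} yields $\clarke{f}(w)\subseteq b$ for all $w\in O$, in particular $\clarke{f}(x)\subseteq b$. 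Intersecting over the directed family then gives $\clarke{f}(x)\subseteq L(f)(x)$.

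For the reverse inclusion $L(f)(x)\subseteq\clarke{f}(x)$ I would, for each convex open neighbourhood $O\ni x$, take the candidate $b_O\defeq\cl{\convHull}\bigcup_{y\in O}\clarke{f}(y)$, which is nonempty, compact and convex once $O$ is small (the gradient is locally bounded by the Lipschitz constant, cf.\ Proposition~\ref{prop:Clarke_general_properties}). The key step is to verify $f\in\delta(O,b_O)$: for $y,z\in O$, Lebourg's mean value theorem for the Clarke gradient (cf.~\parencite{Clarke:Opt_Non_Smooth_Analysis-Book:1990}) provides a point $w$ on the segment $[z,y]\subseteq O$ and $\xi\in\clarke{f}(w)\subseteq b_O$ with $f(y)-f(z)=\xi\cdot(y-z)$, which is exactly the sandwich condition. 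Hence $b_O\chi_O\sqsubseteq L(f)$, so $L(f)(x)\subseteq b_O$. Intersecting over all neighbourhoods and invoking upper semicontinuity of the set-valued map $\clarke{f}$ (with compact convex values) collapses $\bigcap_O b_O$ to $\clarke{f}(x)$, completing the proof.

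The main obstacle is the reverse inclusion: it hinges on Lebourg's \emph{exact} mean value theorem rather than the mere mean value inequality, together with the upper semicontinuity of $\clarke{f}$ used to pass to the limit over shrinking neighbourhoods. A secondary technical point is justifying that the domain-theoretic join is literally the intersection, which requires the filter property of ties and local boundedness of the gradient. I should also flag that the argument above is genuinely finite-dimensional---both Rademacher's theorem and~\eqref{eq:clarke_fin_dim_charac} are used---so for the infinite-dimensional Banach-space generalisation of~\textcite{Hertling:Clarke_Edalat:2017} the first inclusion must instead be routed through the generalised directional derivative $f^{\circ}(x;v)$ and its identity with the support function of $\clarke{f}(x)$, which is where the real work lies in that setting.
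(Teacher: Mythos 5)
The paper does not actually prove this theorem: it is imported verbatim from Edalat (2008) and Hertling (2017), and the bracketed citation is the entire ``proof''. Your reconstruction of the finite-dimensional argument is essentially sound and is close in spirit to Edalat's original: the inclusion $\clarke{f}(x)\subseteq L(f)(x)$ via Rademacher's theorem, the characterisation~\eqref{eq:clarke_fin_dim_charac}, and the supporting-half-space description of a compact convex set, followed by the reverse inclusion via Lebourg's mean value theorem applied to $b_O=\cl{\convHull}\bigcup_{y\in O}\clarke{f}(y)$ and upper semicontinuity of $\clarke{f}$, is the standard route. (Edalat's published proof phrases the first half through the generalised directional derivative $f^{\circ}(x;\cdot)$, identified as the common support function of $L(f)(x)$ and $\clarke{f}(x)$; your pointwise version is equivalent in $\R^n$.) One correction: the ``filter property'' $\delta(O,b_1)\cap\delta(O,b_2)=\delta(O,b_1\cap b_2)$ is in general only the inclusion $\supseteq$, because $(b_1\cap b_2)(v)\subseteq b_1(v)\cap b_2(v)$ can be strict; the family of qualifying pairs need not be directed in the way you state. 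What you actually need --- and what does hold --- is that the set of single-step functions $b\chi_O$ with $f\in\delta(O,b)$ is bounded in the bounded-complete function space $U\Rightarrow\cCPPO{\R^n\lift}$ (a ball of radius the Lipschitz constant gives a common tie), so its supremum exists and is computed pointwise, with the value at $x$ being the intersection of the relevant $b$'s; this repairs the step without changing anything downstream. Your closing caveat about the infinite-dimensional case is accurate and is precisely why the Hertling citation accompanies Edalat's.
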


In this paper, instead of working with the general convex sets in
$\cCPPO{\R^n\lift}$, we work with the simpler hyperboxes in
$\intvaldom[\R^n\lift]$:
\begin{definition}[$\hat{L}$-derivative]
  \label{def:Lhat_derivative}
  Let $U \subseteq \R^n$ be an open set. In the definition of
  $L$-derivative (Definition~\ref{def:L_derivative}), if
  $\cCPPO{\R^n\lift}$ is replaced with $\intvaldom[\R^n\lift]$, then
  we obtain the concept of $\hat{L}$-derivative for functions of type
  $U \to \R$. For vector valued functions
  $f = ( f_1, \ldots, f_m): U \to \R^m$, we define:
  \begin{equation*}
    \hat{L}(f) \defeq  \left( \hat{L}(f_1),  \ldots, \hat{L}(f_m)
    \right)^\intercal,
  \end{equation*}
  in which, $(\cdot)^\intercal$ denotes the transpose of a matrix. In
  other words, for each $1 \leq i \leq m$ and $x \in U$, let
  $\hat{L}(f_i)(x) \equiv ( \alpha_{i,1}, \ldots, \alpha_{i,n}) \in \intvaldom[\R^n\lift]$. Then, for
  each $x \in U$, $\hat{L}(f)(x)$ is the $m \times n$ interval matrix
  $[\alpha_{i,j}]_{1 \leq i \leq m, 1 \leq j \leq n}$.
\end{definition}

The $\hat{L}$-derivative is, in general, coarser than the
$L$-derivative, as the following example demonstrates:
\begin{example}
  \label{example:L_hat_overapprox_L_derive}
  Let $D_n$ denote the closed unit disc in $\R^n$, and define
  $f: \R^n \to \R$ by $f(x) = \norm{x}_2$, in which $\norm{\cdot}_2$
  is the Euclidean norm. Then, $L(f)(0) = D_n$, while
  $\hat{L}(f)(0) = [-1,1]^n$.
\end{example}

As such, we lose some precision by using hyperboxes. On the other
hand, on hyperboxes, the chain rule has a simpler form:
  \begin{lemma}[\textbf{Chain Rule on Hyperboxes},  {\parencite[Lemma~3.3]{Edalat_Pattinson:2005:Inverse_Implicit}}]
    \label{lem:chain_Rule_hat_L}
    For any two functions $g : \R^k \to \R^m$ and $f: \R^m \to \R^n$,
    we have:
    \begin{equation*}
      \hat{L}(f)(g(x))  \times  \hat{L}(g)(x)
      \sqsubseteq \hat{L}(f \circ g)(x).
    \end{equation*}
  \end{lemma}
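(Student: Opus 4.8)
The plan is to reduce the matrix inequality to a family of scalar, coordinatewise statements, and to bridge between the $\hat{L}$-derivative and the Clarke-gradient so that the Clarke chain rule (Proposition~\ref{prop:Clarke_chain_rule}) can be applied component by component. Write $g = (g_1,\ldots,g_m): \R^k \to \R^m$ and $f = (f_1,\ldots,f_n): \R^m \to \R^n$, both assumed locally Lipschitz as throughout this section, so that the $i$-th component of $f \circ g$ is the scalar composite $f_i \circ g$ with $f_i: \R^m \to \R$. Both sides of the claimed inequality are $n \times k$ interval matrices, and since $\sqsubseteq$ is reverse inclusion applied row by row, it suffices to show, for each $i$ and each coordinate $\ell \in \set{1,\ldots,k}$, that the $\ell$-th projection of $\hat{L}(f_i \circ g)(x)$ is contained in the $(i,\ell)$ entry of the product $\hat{L}(f)(g(x)) \times \hat{L}(g)(x)$.

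The key bridge is the identity $\hat{L}(\phi)(y) = \hbClos{\clarke{\phi}(y)}$ for every scalar locally Lipschitz $\phi$, where $\hbClos{(\cdot)}$ is the hyperbox closure of~\eqref{eq:hyper_rect_clos}; equivalently, each coordinate interval of $\hat{L}(\phi)(y)$ is the projection of $\clarke{\phi}(y)$ onto that axis. One inclusion is immediate: since every hyperbox is convex, the admissible single-step functions of Definition~\ref{def:Lhat_derivative} form a subfamily of those in Definition~\ref{def:L_derivative}, so the supremum defining $\hat{L}(\phi)(y)$ is taken over fewer steps, giving $\clarke{\phi}(y) = L(\phi)(y) \subseteq \hat{L}(\phi)(y)$ (by Theorem~\ref{thm:L_deriv_Clarke_coincide}); as $\hat{L}(\phi)(y)$ is itself a hyperbox, minimality of the hyperbox closure yields $\hbClos{\clarke{\phi}(y)} \subseteq \hat{L}(\phi)(y)$. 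For the reverse inclusion I would fix any hyperbox $R$ with $\clarke{\phi}(y) \subseteq \interiorOf{R}$, use upper semicontinuity of the set-valued map $w \mapsto \clarke{\phi}(w)$ to produce an open convex neighborhood $O \ni y$ on which $\clarke{\phi}(w) \subseteq R$, and then invoke Lebourg's mean value theorem to conclude $\phi \in \delta(O,R)$. Thus $R$ is an admissible hyperbox step, so $\hat{L}(\phi)(y) \subseteq R$; intersecting over all such $R$ gives $\hat{L}(\phi)(y) \subseteq \hbClos{\clarke{\phi}(y)}$, and hence equality.

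Granting the bridge, the main computation proceeds as follows. By the bridge, the $(i,j)$ entry $\alpha_{i,j}$ of $\hat{L}(f)(g(x))$ is the $j$-th projection of $\clarke{f_i}(g(x))$, and the $(j,\ell)$ entry $\beta_{j,\ell}$ of $\hat{L}(g)(x)$ is the $\ell$-th projection of $\clarke{g_j}(x)$, so the $(i,\ell)$ entry of the product is the interval $\sum_{j=1}^m \alpha_{i,j}\,\beta_{j,\ell}$ computed in interval arithmetic. Now apply the Clarke chain rule~\eqref{eq:Clarke_chain_rule} to $f_i \circ g$: every element of $\clarke{(f_i \circ g)}(x)$ lies in $\cl{\convHull}$ of the vectors $\sum_{j} \gamma_j \zeta_j$ with $\gamma = (\gamma_1,\ldots,\gamma_m) \in \clarke{f_i}(g(x))$ and $\zeta_j \in \clarke{g_j}(x)$. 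The $\ell$-th coordinate of each generator is $\sum_j \gamma_j (\zeta_j)_\ell$ with $\gamma_j \in \alpha_{i,j}$ and $(\zeta_j)_\ell \in \beta_{j,\ell}$, hence lies in $\sum_j \alpha_{i,j}\,\beta_{j,\ell}$ by the fundamental enclosure property of interval arithmetic. Since this interval is closed and convex, taking $\cl{\convHull}$ and projecting onto the $\ell$-th axis does not escape it, so the $\ell$-th projection of $\clarke{(f_i \circ g)}(x)$ is contained in the $(i,\ell)$ entry. By the bridge the same holds for $\hat{L}(f_i \circ g)(x)$; collecting over $\ell$ shows that the $i$-th row of $\hat{L}(f \circ g)(x)$ is contained, as a hyperbox, in the $i$-th row of the product, which in the reverse-inclusion order is exactly the stated $\sqsubseteq$.

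The principal obstacle is the hard direction of the bridge identity, which rests on upper semicontinuity of the Clarke-gradient map together with Lebourg's mean value theorem; this is where the genuine analytic content sits, the remainder being the routine (but direction-sensitive) bookkeeping of reverse inclusion and of interval matrix multiplication. An alternative, lighter route would be to observe that this lemma is precisely the hyperbox specialization of the domain-theoretic chain rule for the $L$-derivative, and to derive it by restricting the $L$-derivative chain rule to single-step functions valued in $\intvaldom[\R^n\lift] \subseteq \cCPPO{\R^n\lift}$; in that case the bridge identity is replaced by the corresponding restriction argument on ties, but the same upper-semicontinuity input reappears in verifying $\phi \in \delta(O,R)$.
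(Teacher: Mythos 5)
The paper does not prove this lemma at all: it is imported verbatim from Edalat--Pattinson (the cited Lemma~3.3), where the argument is a direct one on ties --- if $g \in \delta(O,a)$ and $f \in \delta(O',b)$ with $x \in O$ and $g(x) \in O'$, then $f(g(x)) - f(g(y)) \in b\,(g(x)-g(y)) \subseteq (b \times a)(x-y)$, so $f \circ g \in \delta(O \cap g^{-1}(O'), b \times a)$, and one concludes by taking suprema and using Scott continuity of interval matrix multiplication. Your route is genuinely different and, for the locally Lipschitz case, correct: the bridge identity $\hat{L}(\phi)(y) = \hbClos{\clarke{\phi}(y)}$ does follow from Theorem~\ref{thm:L_deriv_Clarke_coincide} in the easy direction and from upper semicontinuity of $\clarke{\phi}$ plus Lebourg's mean value theorem in the hard direction (your intersection over hyperboxes $R$ with $\clarke{\phi}(y) \subseteq \interiorOf{R}$ recovers $\hbClos{\clarke{\phi}(y)}$ by Proposition~\ref{prop:K_box_basis_formulation}), and the subsequent coordinatewise projection of the Clarke chain rule through interval arithmetic is sound, since projections of a closed convex hull land in any closed interval containing the projections of the generators. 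What your approach buys is a sharper statement than the lemma itself, namely the exact identification of $\hat{L}$ with the hyperbox hull of the Clarke gradient, which the paper only uses implicitly (e.g.\ in~\eqref{eq:Clarke_subsets} and Example~\ref{example:L_hat_overapprox_L_derive}). What it costs is generality and weight: the lemma as stated is for \emph{any} two functions, whereas your proof needs $f$ and $g$ locally Lipschitz so that Theorem~\ref{thm:L_deriv_Clarke_coincide}, the finite-dimensional characterization~\eqref{eq:clarke_fin_dim_charac}, and Lebourg's theorem are available; the tie-based argument needs none of that machinery (only continuity of $g$, to keep $g^{-1}(O')$ open). Since every network in ${\cal F}$ has Lipschitz activations, this restriction is harmless for the paper's purposes, but you should state it as a hypothesis rather than an aside. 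Your closing remark correctly identifies the tie-based derivation as the lighter alternative --- that is in fact the proof in the cited source.
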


 
\subsection{Maximization Algorithm}
\label{subsec:Maximization_algorithm}

For any closed compact interval $I$, we write
$I = [\lep[I], \uep[I]]$. Assume that
$f : \R^n \to \intvaldom[\R^1\lift]$ and $X \subseteq \R^n$ is a
compact set. We define the maximum value and maximum set of $f$ over
$X$ as follows:
\begin{equation}
  \label{eq:maxV_maxS}
  \left\{
    \arrayoptions{0.5ex}{1.3}
    \begin{array}{l}
      \maxV(f,X) \defeq [\lep[z],\uep[z]], \quad \text{where $\lep[z] \defeq \sup_{x \in X} \lep[f(x)]$ and
$\uep[z] \defeq \sup_{x \in X} \uep[f(x)]$ },  \\
      \maxS(f,X) \defeq \setbarTall{x \in X}{ f(x) \cap \maxV(f,X) \neq
      \emptyset}. 
    \end{array}
  \right.
\end{equation}
\noindent
For any $\hat{f}: \intvaldom[\R^n\lift] \to \intvaldom[\R^1\lift]$,
the maximum value and maximum set are defined by considering the
restriction of $\hat{f}$ over the maximal elements $\R^n$.

\begin{proposition}
  \label{prop:maxS_compact}
  Assume that $\hat{f}: \intvaldom[\R^n\lift] \to \intvaldom[\R^1\lift]$ is
  Scott-continuous, and $X$ is a compact subset of $\R^n$. Then
  $\maxS(\hat{f},X)$ is a compact subset of $\R^n$.
\end{proposition}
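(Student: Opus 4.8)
The plan is to exhibit $\maxS(\hat{f},X)$ as a closed subset of the compact set $X$; since closed subsets of compact sets are compact, this suffices. The key is to recognise the maximum set as a superlevel set of an \emph{upper} semicontinuous function, so that closedness follows at once.

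First I would pass from $\hat{f}$ to its restriction $f \defeq \restrictTo{\hat{f}}{\R^n}$ over the maximal elements, since $\maxV$ and $\maxS$ for $\hat{f}$ are by definition computed from this restriction. By Corollary~\ref{cor:Scott_restrict_Euclid} the Scott topology restricted to the maximal elements $\R^n$ is the Euclidean topology, so Scott-continuity of $\hat{f}$ makes $f : \R^n \to \intvaldom[\R^1\lift]$ Euclidean-Scott-continuous, {\ie}, $f \in D^{(0)}(\R^n)$. Writing $f \equiv [\lep[f], \uep[f]]$, Proposition~\ref{prop:Euclidean_Scott_Cont} then yields that $\lep[f]$ is lower semicontinuous and $\uep[f]$ is upper semicontinuous.

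The decisive step is to simplify the membership condition. Write $\maxV(f,X) = [\lep[z], \uep[z]]$ with $\lep[z] = \sup_{x \in X} \lep[f(x)]$ and $\uep[z] = \sup_{x \in X} \uep[f(x)]$. Two compact intervals meet iff the left endpoint of each is below the right endpoint of the other, so for $x \in X$ we have $f(x) \cap \maxV(f,X) \neq \emptyset$ iff $\lep[f(x)] \le \uep[z]$ and $\lep[z] \le \uep[f(x)]$. The first inequality always holds, since $\lep[f(x)] \le \uep[f(x)] \le \sup_{x' \in X} \uep[f(x')] = \uep[z]$. Hence the condition reduces to $\uep[f(x)] \ge \lep[z]$, giving
\[
  \maxS(f,X) = X \cap \setbarNormal{x \in \R^n}{\uep[f(x)] \ge \lep[z]}.
\]

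Finally, since $\uep[f]$ is upper semicontinuous, the superlevel set $\setbarNormal{x \in \R^n}{\uep[f(x)] \ge \lep[z]}$ is closed (its complement $\setbarNormal{x \in \R^n}{\uep[f(x)] < \lep[z]}$ is open by upper semicontinuity). Thus $\maxS(f,X)$ is the intersection of a closed set with the compact set $X$, hence compact. I expect the only delicate point to be the endpoint bookkeeping that discards the automatic inequality and collapses the two-sided intersection condition to a single superlevel set of $\uep[f]$; once that reduction is made, Proposition~\ref{prop:Euclidean_Scott_Cont} supplies exactly the semicontinuity needed. As an aside, $\maxS(f,X)$ is in fact nonempty, because the upper semicontinuous function $\uep[f]$ attains $\uep[z] \ge \lep[z]$ on the compact set $X$, though this is not required for compactness.
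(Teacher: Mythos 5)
Your proof is correct and follows essentially the same route as the paper's: reduce to the restriction $f$ over the maximal elements, invoke Corollary~\ref{cor:Scott_restrict_Euclid} and Proposition~\ref{prop:Euclidean_Scott_Cont} to get upper semicontinuity of $\uep[f]$, observe that membership in $\maxS(f,X)$ is equivalent to $\uep[f(x)] \geq \lep[z]$, and conclude closedness inside the compact set $X$. The only difference is presentational: the paper argues that the complement is open pointwise, while you phrase the same fact as closedness of a superlevel set, and you spell out the interval-intersection bookkeeping that the paper leaves implicit.
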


\begin{proof}
  Let $f : \R^n \to \intvaldom[\R^1\lift]$ be the restriction of
  $\hat{f}$ over the maximal elements $\R^n$. As $X$ is compact, it
  suffices to prove that $\maxS(f,X)$ is a closed subset of $X$. Let
  $x \in X \setminus \maxS(f,X)$ be an arbitrary point, and write
  $f(x) = [\lep[f(x)], \uep[f(x)]]$. Since $x \not \in \maxS(f,X)$,
  then $\uep[f(x)] < \lep[z]$, where
  $z \equiv [\lep[z], \uep[z]] = \maxV(f,X)$. As $\hat{f}$ is
  Scott-continuous, then, by
  Corollary~\ref{cor:Scott_restrict_Euclid}, the map $f$ must be
  Euclidean-Scott-continuous. In turn, by
  Proposition~\ref{prop:Euclidean_Scott_Cont}, the function $\uep[f]$
  must be upper semicontinuous. As $\uep[f(x)] < \lep[z]$, for some
  neighborhood $O_x$ of $x$, we must have:
  $\forall y \in O_x: \uep[f(y)] < \lep[z]$. This entails that
  $O_x \cap \maxS(f,X) = \emptyset$.
\end{proof}

Thus, in the sequel, we consider the functionals:
\begin{equation*}
  \left\{
    \arrayoptions{0.5ex}{1.3}
    \begin{array}{ll}
      \maxV: & (\intvaldom[\R^n\lift] \Rightarrow \intvaldom[\R^1\lift]) \to
      \kCPPO{\R^n\lift} \to \intvaldom[\R^1\lift],\\
      \maxS: & (\intvaldom[\R^n\lift] \Rightarrow \intvaldom[\R^1\lift]) \to
\kCPPO{\R^n\lift} \to \kCPPO{\R^n\lift},
    \end{array}
  \right.
\end{equation*}
\noindent
which, given any Scott-continuous
$f \in \intvaldom[\R^n\lift] \Rightarrow \intvaldom[\R^1\lift]$ and
compact set $X \in \kCPPO{\R^n\lift}$, return the maximum value and
the maximum set of $f$ over $X$, respectively. As an example, assume
that $f \in \intvaldom[\R^1\lift] \Rightarrow \intvaldom[\R^1\lift]$
is the canonical interval extension of
$\clarke{\ReLU}: \R \to \intvaldom[\R^1\lift]$. Then:
\begin{equation}
  \label{eq:maxV_maxS_not_monotone_in_X}
  \left\{
    \begin{array}{ll}
     \maxV(f,[-1,1]) = [1,1], & \maxS(f,[-1,1]) = [0,1], \\
\maxV(f,[-1,0]) = [0,1], & \maxS(f,[-1,0]) = [-1,0], \\
    \end{array}
  \right.
\end{equation}
\noindent
which shows that the functionals $\maxV$ and $\maxS$ are not even
monotone in their second argument. For fixed second arguments,
however, both are Scott-continuous in their first argument:

\begin{theorem}[\textbf{Scott-continuity}]
  \label{thm:maxV_maxS_Scott_Cont}
  For any $X \in \kCPPO{\R^n\lift}$, the functionals
  $\maxV(\cdot , X): (\intvaldom[\R^n\lift] \Rightarrow
  \intvaldom[\R^1\lift]) \to \intvaldom[\R^1\lift]$ and
  $\maxS( \cdot, X): (\intvaldom[\R^n\lift] \Rightarrow
  \intvaldom[\R^1\lift]) \to \kCPPO{\R^n\lift}$ are Scott-continuous.
\end{theorem}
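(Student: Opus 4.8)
The plan is to invoke Proposition~\ref{prop:Scott_w_cont}: since the function space $\intvaldom[\R^n\lift] \Rightarrow \intvaldom[\R^1\lift]$ and the codomains $\intvaldom[\R^1\lift]$ and $\kCPPO{\R^n\lift}$ are all $\omega$-continuous, it suffices to show that each functional is monotonic and preserves suprema of $\omega$-chains. Throughout, I identify a map with its restriction $f$ to the maximal elements $\R^n$ and write $f(x) = [\lep[f(x)], \uep[f(x)]]$. Monotonicity is routine from the reverse-inclusion orders: if $f \sqsubseteq g$ then $g(x) \subseteq f(x)$ for every $x$, so $\lep[f(x)] \le \lep[g(x)]$ and $\uep[g(x)] \le \uep[f(x)]$, and taking suprema over the compact $X$ gives $\maxV(g,X) \subseteq \maxV(f,X)$, i.e. $\maxV(f,X) \sqsubseteq \maxV(g,X)$. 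For $\maxS$, if $x \in \maxS(g,X)$ then $g(x) \cap \maxV(g,X) \subseteq f(x) \cap \maxV(f,X)$ forces $x \in \maxS(f,X)$, whence $\maxS(f,X) \sqsubseteq \maxS(g,X)$. Nonemptiness of $\maxS$, needed for it to lie in $\kCPPO{\R^n\lift}$, holds because $\uep[f]$ is upper semicontinuous by Proposition~\ref{prop:Euclidean_Scott_Cont}, so its supremum over the compact $X$ is attained and any maximizer lies in $\maxS(f,X)$; compactness is Proposition~\ref{prop:maxS_compact}.

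For preservation of $\omega$-chain suprema, let $(f_i)_{i \in \N}$ be an $\omega$-chain with $f = \lub_i f_i$, so over maximal elements $f(x) = \bigcap_i f_i(x)$, giving $\lep[f(x)] = \sup_i \lep[f_i(x)]$ and $\uep[f(x)] = \inf_i \uep[f_i(x)]$. For the lower endpoint of $\maxV$ the two suprema simply commute, $\sup_{x \in X} \sup_i \lep[f_i(x)] = \sup_i \sup_{x \in X} \lep[f_i(x)]$. The upper endpoint is the crux and reduces to the interchange
\[
\sup_{x \in X} \inf_i \uep[f_i(x)] = \inf_i \sup_{x \in X} \uep[f_i(x)].
\]
Since $\lub_i \maxV(f_i,X) = \bigcap_i \maxV(f_i,X)$ is the interval with endpoints $\sup_i \lep[z_{f_i}]$ and $\inf_i \uep[z_{f_i}]$, establishing this interchange yields $\maxV(f,X) = \lub_i \maxV(f_i,X)$.

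The hard part is the $\ge$ direction of this min-max equality (the $\le$ direction is immediate from $f \sqsubseteq f_i$), which I would prove via Cantor's intersection theorem. Put $u_i(x) \defeq \uep[f_i(x)]$; these are upper semicontinuous by Proposition~\ref{prop:Euclidean_Scott_Cont} and decreasing in $i$. Fix any real $c < \inf_i \sup_{x \in X} u_i(x)$. For each $i$ the supremum $\sup_{x \in X} u_i(x)$ is attained (upper semicontinuity plus compactness of $X$) and exceeds $c$, so $K_i \defeq \setbarNormal{x \in X}{u_i(x) \ge c}$ is non-empty; it is closed, hence compact, and $K_{i+1} \subseteq K_i$ since the $u_i$ decrease. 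By the finite-intersection property there is $x^* \in \bigcap_i K_i$, whence $\uep[f(x^*)] = \inf_i u_i(x^*) \ge c$ and therefore $\sup_{x \in X} \uep[f(x)] \ge c$. Letting $c$ approach $\inf_i \sup_{x \in X} u_i(x)$ gives the desired inequality, with the case of an infinite supremum handled by letting $c \to +\infty$. This is the main obstacle, as it is exactly where compactness of $X$, upper semicontinuity, and monotonicity of the chain must be combined.

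Finally, $\maxS$ preserves $\omega$-chain suprema by the same compactness principle. By monotonicity $(\maxS(f_i,X))_i$ is a chain with $\lub_i \maxS(f_i,X) = \bigcap_i \maxS(f_i,X)$, and monotonicity already gives $\maxS(f,X) \subseteq \bigcap_i \maxS(f_i,X)$. For the reverse inclusion take $x^* \in \bigcap_i \maxS(f_i,X)$ and set $J_i \defeq f_i(x^*) \cap \maxV(f_i,X)$. Each $J_i$ is a non-empty compact interval, and $J_{i+1} \subseteq J_i$ because both $f_i(x^*)$ and $\maxV(f_i,X)$ shrink with $i$; Cantor's theorem then yields $\emptyset \neq \bigcap_i J_i = \left( \bigcap_i f_i(x^*) \right) \cap \left( \bigcap_i \maxV(f_i,X) \right) = f(x^*) \cap \maxV(f,X)$, using the already-established $\maxV(f,X) = \bigcap_i \maxV(f_i,X)$. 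Hence $x^* \in \maxS(f,X)$, which completes the proof.
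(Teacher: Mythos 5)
Your proof is correct, and its skeleton matches the paper's: both reduce Scott-continuity to monotonicity plus preservation of $\omega$-chain suprema via Proposition~\ref{prop:Scott_w_cont}, and the monotonicity halves are essentially identical. Where you diverge is in the mechanism for the $\sqsupseteq$ direction of $\omega$-chain preservation. The paper runs explicit $\epsilon$-arguments: for $\maxV$ it covers $X$ by open neighborhoods $O_x$ on which $\uep[f_{N_x}]$ is controlled and extracts a finite subcover to get a uniform index $N_0$; for $\maxS$ it argues the contrapositive, showing that any $x \notin \maxS(f,X)$ is eventually expelled from $\maxS(f_n,X)$. You instead isolate the clean min--max interchange $\sup_{x}\inf_i \uep[f_i(x)] = \inf_i \sup_x \uep[f_i(x)]$ and prove it by the finite intersection property applied to the nested compact superlevel sets $K_i = \setbarNormal{x \in X}{\uep[f_i(x)] \geq c}$ (closed because $\uep[f_i]$ is upper semicontinuous), which is the dual, closed-set form of the same compactness principle; and for $\maxS$ you argue directly, applying Cantor's intersection theorem to the nested non-empty compact intervals $J_i = f_i(x^*) \cap \maxV(f_i,X)$ to land a point in $f(x^*) \cap \maxV(f,X)$. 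Your route buys modularity and brevity --- the Dini-type interchange is a reusable lemma, and the $\maxS$ step avoids any $\epsilon$-chasing --- at the cost of invoking the already-proved $\maxV$ identity inside the $\maxS$ step (which is legitimate, since you prove $\maxV$ first). You are also slightly more careful than the paper in checking that $\maxS(f,X)$ is non-empty, hence genuinely an element of $\kCPPO{\R^n\lift}$. The one point to keep an eye on is the extended-real case where some $f_i(x) = \bot = \R$, so that $\uep[f_i(x)] = +\infty$; you flag it, and it is handled by letting $c \to +\infty$, but the semicontinuity characterization of Proposition~\ref{prop:Euclidean_Scott_Cont} is stated for real-valued bounds, so strictly one should note it extends to $[-\infty,+\infty]$-valued ones (the paper glosses over the same point).
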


\begin{proof}
  As the domains
  $\intvaldom[\R^n\lift] \Rightarrow \intvaldom[\R^1\lift]$,
  $\intvaldom[\R^1\lift]$, and $\kCPPO{\R^n\lift}$ are
  $\omega$-continuous, by Proposition~\ref{prop:Scott_w_cont}, it
  suffices to show that both functionals are monotone and preserve the
  suprema of $\omega$-chains.

To prove monotonicity, assume that
$f,g \in \intvaldom[\R^n\lift] \Rightarrow \intvaldom[\R^1\lift]$, and
$f \sqsubseteq g$. Thus,
$\forall x \in X: (\lep[f(x)] \leq \lep[g(x)]) \wedge (\uep[g(x)] \leq
\uep[f(x)])$. This implies that
$\sup_{x \in X} \lep[f(x)] \leq \sup_{x \in X} \lep[g(x)]$ and
$\sup_{x \in X} \uep[g(x)] \leq \sup_{x \in X} \uep[f(x)]$. Therefore:
  \begin{equation}
    \label{eq:maxV_f_g_monotone}
    \maxV(f,X) \sqsubseteq \maxV(g,X).
  \end{equation}
  Take an arbitrary $x \in \maxS(g,X)$. By definition, we must have
  $g(x) \cap \maxV(g,X) \neq \emptyset$, which, combined with the
  assumption $f \sqsubseteq g$, implies
  $f(x) \cap \maxV(g,X) \neq \emptyset$. This, combined
  with~\eqref{eq:maxV_f_g_monotone}, entails that
  $f(x) \cap \maxV(f,X) \neq \emptyset$. Hence, $x \in \maxS(f,X)$,
  and we have $\maxS(f,X) \sqsubseteq \maxS(g,X)$. The proof of
  monotonicity is complete.

  Next, let $(f_n)_{n \in \N}$ be an $\omega$-chain in
  $\intvaldom[\R^n\lift] \Rightarrow \intvaldom[\R^1\lift]$, with
  $f = \bigsqcup_{n \in \N} f_n$. As both $\maxV$ and $\maxS$ are
  monotone, we have
  $\bigsqcup_{n \in \N} \maxV(f_n,X) \sqsubseteq \maxV(f,X)$ and
  $\bigsqcup_{n \in \N} \maxS(f_n,X) \sqsubseteq \maxS(f,X)$. Thus, it
  only remains to prove the $\sqsupseteq$ direction in both cases. In
  what follows, we let
  $z \equiv [\lep[z], \uep[z]] \defeq \maxV(f,X)$, and for each
  $n \in \N$, $z_n \equiv [\lep[z_n], \uep[z_n]] \defeq \maxV(f_n,X)$.

\begin{description}
\item[$\maxV$:] First, consider an arbitrary $t < \lep[z]$ and let
  $\epsilon \defeq (\lep[z]-t)/4$. By definition,
  $\exists x_0 \in X: \lep[f(x_0)] > \lep[z] - \epsilon$. Since
  $f = \bigsqcup_{n \in \N} f_n$, we must have
  $\exists N_0 \in \N: \forall n \geq N_0: \lep[f_n(x_0)] >
  \lep[f(x_0)] - \epsilon > \lep[z]- 2\epsilon > t$.
  Therefore:
  \begin{equation}
    \label{eq:t_less_lep_z}
    \forall t \in \R: \quad t < \lep[z] \implies \exists n \in \N: t <
    \lep[z_n].
  \end{equation}
  \noindent
  Next, we consider an arbitrary $t > \uep[z]$ and let
  $\epsilon \defeq (t - \uep[z])/4$. Since
  $f = \bigsqcup_{n \in \N} f_n$, we must have:
  \begin{equation*}
    \forall x \in X: \exists N_x \in \N: \forall n \geq N_x: \quad
    \uep[f_n(x)] < \uep[f(x)] + \epsilon.
  \end{equation*}
  For each fixed $n \in \N$, the map $\uep[f_n]$ is upper
  semicontinuous. Hence, for any $x \in X$, there exists a
  neighborhood $O_x$ of $x$ for which we have
  \begin{equation*}
    \forall y \in O_x: \quad \uep[f_{N_x}(y)] < \uep[f_{N_x}(x)] +
    \epsilon < \uep[f(x)] + 2\epsilon \leq \uep[z] + 2\epsilon.
  \end{equation*}
  As the sequence $(f_n)_{n \in \N}$ is assumed to be an
  $\omega$-chain, the upper bounds $(\uep[f_n])_{n \in \N}$ are
  non-increasing, and we obtain:
  \begin{equation*}
    \forall x \in X: \forall y \in O_x: \forall n \geq N_x: \quad
    \uep[f_n(y)] \leq \uep[z] + 2\epsilon.
  \end{equation*}
  As $X$ is compact and $X \subseteq \bigcup_{x \in X} O_x$, then for
  finitely many neighborhoods $\set{O_{x_1}, \ldots, O_{x_m}}$ we must
  have $X \subseteq \bigcup_{i=1}^m O_{x_i}$. Thus, if we let
  $N_0 \defeq \max\set{N_{x_1}, \ldots, N_{x_m}}$, we must have:
  \begin{equation*}
    \forall n \geq N_0: \forall y \in X: \quad \uep[f_n(y)] \leq
    \uep[z] + 2\epsilon,
  \end{equation*}
  which entails that:
  \begin{equation}
    \label{eq:t_ht_uep_z}
    \forall t \in \R: \quad \uep[z] < t \implies \exists N_0 \in \N:
    \forall n \geq N_0:
    \uep[z_n] \leq
    \uep[z] + 2\epsilon < t.
  \end{equation}
  From~\eqref{eq:t_less_lep_z} and~\eqref{eq:t_ht_uep_z}, we deduce
  that $\bigsqcup_{n \in \N} \maxV(f_n,X) \sqsupseteq \maxV(f,X)$.

\item[$\maxS$:] Assume that $x \in X \setminus \maxS(f,X)$, which
  implies that $\uep[f(x)] < \lep[z]$. Take
  $\epsilon = \left(\lep[z] - \uep[f(x)]\right)/4$. Since
  $\lep[z] = \sup_{y \in X} \lep[f(y)]$, there exists $x_1 \in X$ such
  that $\lep[f(x_1)] > \lep[z] - \epsilon$. As
  $f = \bigsqcup_{n \in \N} f_n$, there exists $N_1 \in \N$, such
  that:
  \begin{equation}
    \label{eq:lep_fn_estimate}
    \forall n \geq N_1: \lep[f_n(x_1)] > \lep[f(x_1)] - \epsilon >
    \lep[z] - 2\epsilon.  
  \end{equation}
  Furthermore, there exists $N_2 \in \N$, such that:
  \begin{equation}
    \label{eq:uep_fn_estimate}
    \forall n \geq
    N_2: \uep[f_n(x)] < \uep[f(x)] + \epsilon = \lep[z] - 3\epsilon.
  \end{equation}
  Taking $N_3 = \max\set{N_1,N_2}$, from~\eqref{eq:lep_fn_estimate}
  and \eqref{eq:uep_fn_estimate}, we obtain
  $\forall n \geq N_3: \uep[f_n(x)] < \lep[f_n(x_1)] \leq \lep[z_n]$,
  which implies that $x \not \in \maxS(f_n,X)$. Therefore,
  $\bigsqcup_{n \in \N} \maxS(f_n,X) \sqsupseteq \maxS(f,X)$.
\end{description}
\noindent
The proof is complete.
\end{proof}

Algorithm~\ref{alg:interval_max} presents an implementation of $\maxV$
and $\maxS$, which we use in our experiments. As pointed out before,
the set $B_{\kCPPO{\R^n\lift}}$, consisting of finite unions of
rational hyperboxes, forms a countable basis for the
$\omega$-continuous domain $\kCPPO{\R^n\lift}$. We have used this fact
in Algorithm~\ref{alg:interval_max}, which outputs enclosures of the
maximum set using elements of $B_{\kCPPO{\R^n\lift}}$.

Apart from the function $f$ and the neighborhood $X$,
Algorithm~\ref{alg:interval_max} requires three other input
parameters: $m_{\iota}$, $m_b$, and $\delta$. These are crucial for making
sure that the algorithm stops below a reasonable threshold. In
line~\ref{alg:line:retain_boxes}, we discard all the boxes in $B_i$
which cannot possibly contain any maximum points. How many boxes are
discarded depends on a number of factors, most importantly, whether
the function $f$ has a flat section around the maximum set or not. The
condition $2^n\absn{\hat{B}_i} \geq m_b$ in
line~\ref{alg:line:stop_cond} is necessary, because in cases where
flat sections exist, the number of elements of $\hat{B}_i$ grows
exponentially with each iteration. In the \texttt{if} statement of
line~\ref{alg:line:if:bisect}, we make sure that bisecting $\hat{B}_i$
will not create too many boxes before executing the bisection command
in the first place.

\begin{algorithm}[t]
  \caption{Enclosing $\maxV(f,X)$ and $\maxS(f,X)$.}

\begin{algorithmic}[1]
  \REQUIRE {\ }

  $X \subseteq \R^n$ : A finite union of hyperboxes;
  
  $f \in \intvaldom[X] \Rightarrow \intvaldom[\R^1\lift]$:  A Scott-continuous interval function;

  $m_{\iota} \in \N$: maximum iterations;

  $m_b \in \N$: maximum number of boxes;
  
  $\delta \geq 0$: a dyadic number indicating target width.

  \ENSURE Enclosures of $\maxV(f,X)$ and $\maxS(f,X)$

 \STATE $i \gets 0$ \quad // number of iterations  

  \STATE $B_1 \gets X$ \quad // initialize the set of hyperboxes

  \REPEAT
  
 \STATE $i \gets i+1$
  
 \STATE $L_i\defeq \setbarNormal{ f(b)}{ b \in B_i}$

  \STATE $\uep[z_i] \gets \max \setbarNormal{\uep[I]}{ I \in L_i}$  // the maximum upper end point

 \STATE $\lep[z_i] \gets \max \setbarNormal{\lep[I]}{ I \in L_i}$ 
 // the maximum lower end point

 \STATE  $I_i \defeq [{\lep[z_i], \uep[z_i]}]$ \quad // maximum value

 \STATE \label{alg:line:retain_boxes}
 $\hat{B}_i \gets \setbarNormal{b \in B_i}{\lep[z_i] \in f(b) }$ \quad // retain only those boxes
 that could contain maximum points

 \IF {$(2^n\absn{\hat{B}_i} < m_b)$} {\label{alg:line:if:bisect}}
\STATE \label{alg:line:bisect} $B_{i+1} \gets \mathtt{bisect}(\hat{B}_i)$
\quad // bisect all boxes in $\hat{B}_i$, along all dimensions

\ENDIF
 
 \UNTIL{$(i \geq m_{\iota})$ or $(2^n\absn{\hat{B}_i} \geq m_b)$ or
   $ (\uep[z_i] - \lep[z_i] \leq \delta )$} \label{alg:line:stop_cond}

\RETURN $(I_i, \hat{B}_i)$
  
\end{algorithmic}
\label{alg:interval_max}
\end{algorithm}

Algorithm~\ref{alg:interval_max} closely follows the definition of
$\maxV$ and $\maxS$ given in~\eqref{eq:maxV_maxS}. Hence, the proof of
soundness is straightforward:

\begin{lemma}[\textbf{Soundness}]
  \label{lemma:soundness_maximization}
  For any given $f$, $X$, $m_{\iota}$, $m_b$, and $\delta$,
  Algorithm~\ref{alg:interval_max} halts in finite time and returns
  enclosures of $\maxV(f,X)$ and $\maxS(f,X)$.
\end{lemma}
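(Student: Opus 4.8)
The plan is to treat the two assertions separately: finite termination, which is essentially bookkeeping, and correctness of the returned enclosures, which I would establish through a loop invariant carried across iterations.

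For termination, I would note that the counter $i$ is incremented exactly once per pass through the \texttt{repeat} loop, and that $i \geq m_\iota$ is one of the three disjuncts of the stopping test on line~\ref{alg:line:stop_cond}; hence the loop runs at most $m_\iota$ times. Each pass performs only finitely much work: the family $B_i$ is always a finite set of hyperboxes (it starts as the finite union $X$, and $\mathtt{bisect}$ sends a finite family of boxes to a finite family), so forming $L_i$, the two maxima $\lep[z_i],\uep[z_i]$, the retained family $\hat{B}_i$, and the bisection are all finite operations. This gives halting in finite time.

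For correctness, recall that here an \emph{enclosure} is an over-approximation in the reverse-inclusion order, so at the point of return I must show $\maxV(f,X) \subseteq I_i$ and $\maxS(f,X) \subseteq \bigcup \hat{B}_i$. A useful first reduction is to simplify the maximum set: since $\uep[f(x)] \leq \uep[z]$ holds for every $x \in X$ by definition of $\uep[z]$ as a supremum, the membership condition $f(x) \cap \maxV(f,X) \neq \emptyset$ collapses to the single inequality $\uep[f(x)] \geq \lep[z]$, that is, $\maxS(f,X) = \setbarNormal{x \in X}{\uep[f(x)] \geq \lep[z]}$. The core of the proof is then the loop invariant
\[
  (\ast)\qquad \maxS(f,X) \subseteq \bigcup B_i
  \quad\text{and}\quad \lep[z_i] \leq \lep[z],
\]
which I would prove by induction on $i$. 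The base case is immediate since $B_1 = X$. The bound $\lep[z_i] \leq \lep[z]$ uses monotonicity of $f$ (Scott continuity implies monotonicity by Proposition~\ref{prop:Scott_w_cont}): for each box $b \subseteq X$ and each $x \in b$ one has $f(b) \sqsubseteq f(\{x\})$, hence $\lep[f(b)] \leq \lep[f(x)] \leq \lep[z]$, and taking the maximum over $b \in B_i$ yields $\lep[z_i] \leq \lep[z]$. This bound is exactly what makes the discarding step safe: if $x \in \maxS(f,X)$ and $b \in B_i$ is a box containing $x$ (which exists by $(\ast)$), then $\lep[f(b)] \leq \lep[z_i]$ automatically, as $\lep[z_i]$ is the maximum of the $\lep[f(b')]$, while $\uep[f(b)] \geq \uep[f(x)] \geq \lep[z] \geq \lep[z_i]$, so $\lep[z_i] \in f(b)$ and $b$ survives into $\hat{B}_i$. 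Thus $\maxS(f,X) \subseteq \bigcup \hat{B}_i$, and since $\mathtt{bisect}$ preserves the union of its argument, $\bigcup B_{i+1} = \bigcup \hat{B}_i \supseteq \maxS(f,X)$, closing the induction.

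Finally I would read both enclosures off the invariant at the terminating iteration. The inclusion $\maxS(f,X) \subseteq \bigcup \hat{B}_i$ is the displayed consequence just obtained, which holds whether or not the bisection branch was taken. For the value, upper semicontinuity of $\uep[f]$ on the compact set $X$ (Proposition~\ref{prop:Euclidean_Scott_Cont}) yields a point $x^\ast$ with $\uep[f(x^\ast)] = \uep[z]$; since $\uep[f(x^\ast)] = \uep[z] \geq \lep[z]$ we get $x^\ast \in \maxS(f,X) \subseteq \bigcup B_i$, and the box $b^\ast \ni x^\ast$ gives $\uep[z_i] \geq \uep[f(b^\ast)] \geq \uep[z]$. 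Together with $\lep[z_i] \leq \lep[z]$ this gives $\maxV(f,X) = [\lep[z],\uep[z]] \subseteq [\lep[z_i],\uep[z_i]] = I_i$. The one delicate point, which I would flag as the main obstacle, is precisely the safety of the discarding on line~\ref{alg:line:retain_boxes}: the test uses the \emph{running} estimate $\lep[z_i]$ rather than the \emph{true} threshold $\lep[z]$, so the whole argument hinges on the monotonicity-based inequality $\lep[z_i] \leq \lep[z]$ to guarantee that no box carrying a genuine maximiser is ever thrown away.
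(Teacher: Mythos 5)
Your proof is correct and follows essentially the same route as the paper's: an induction over the loop iterations with the invariant $\maxS(f,X) \subseteq \bigcup B_i$, combined with the fact that evaluating $f$ on a box under-estimates the lower bounds and over-estimates the upper bounds of its pointwise values, plus termination via the $i \geq m_{\iota}$ disjunct. The only difference is one of detail: you make explicit the inequality $\lep[z_i] \leq \lep[z]$ that justifies the safety of the discarding step in line~\ref{alg:line:retain_boxes}, whereas the paper compresses this into the single remark that $f$ is an interval approximation of the restriction $\tilde{f}$ to the maximal elements, so your write-up is a more fully worked-out version of the same argument.
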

\begin{proof}
  We prove the lemma by induction over $i$, that is, the number of
  iterations of the main \texttt{repeat-until} loop. When $i=1$, we
  have $B_i = X$, which trivially contains $\maxS(f, X)$.

For the inductive step, assume that at iteration $i$, before the start
of the loop, the finite set of hyperboxes $B_i$ satisfies
$\maxS(f,X) \subseteq \bigcup B_i$. Assume that
$\tilde{f} \in X \Rightarrow \intvaldom[\R^1\lift]$ is the restriction of
the interval function $f$ over the maximal elements $X$, which entails
that $f$ is an interval approximation of $\tilde{f}$. As a result,
$I_i$ is an overapproximation of $\maxV(\tilde{f},X)$, which entails
that the set $\hat{B}_i$---obtained in
line~\ref{alg:line:retain_boxes}---is an overapproximation of
$\maxS(\tilde{f},X)$.

The fact that the algorithm always halts in finite time follows from
the stopping condition $i \geq m_{\iota}$ of line~\ref{alg:line:stop_cond}.
\end{proof}

The algorithm combines the definition of $\maxV$ and $\maxS$ given
in~\eqref{eq:maxV_maxS} with a bisection operation that increases the
accuracy at each iteration. We prove that the result can be obtained
to any degree of accuracy. To that end, we first define the following
sequence:
\begin{equation}
  \label{eq:X_i_Seq}
  \left\{
    \begin{array}{ll}
      X_0 \defeq X, &\\
      X_{i+1} \defeq \mathtt{bisect}(X_i), & (\forall i \in \N).
    \end{array}
  \right.
\end{equation}
Note that by bisecting a box $b \subseteq \R^n$ we mean bisecting
along all the $n$ coordinates, which generates $2^n$ boxes. It is
straightforward to verify that
$\forall i \in \N: \hat{B}_i \subseteq B_i \subseteq X_i$.

\begin{theorem}[\textbf{Completeness}]
  \label{thm:completeness}
  Assume that $\Psi(\hat{f}, X, m_{\iota}, m_b, \delta)$ is the function
  implemented by Algorithm~\ref{alg:interval_max}. For any given $\hat{f}$
  and $X$:
  \begin{equation*}
    \lim_{m_{\iota} \to \infty, m_b \to \infty, \delta \to 0} \Psi(\hat{f}, X,
    m_{\iota}, m_b, \delta) = \left( \maxV(\hat{f},X), \maxS(\hat{f},X) \right).
  \end{equation*}
\end{theorem}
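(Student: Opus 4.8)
The plan is to analyze the idealized run of Algorithm~\ref{alg:interval_max} in which bisection is performed at every step and no stopping condition ever fires, show that the two sequences $(I_i)_{i \in \N}$ and $(\hat B_i)_{i \in \N}$ it produces converge in the Scott topology to $\maxV(\hat f, X)$ and $\maxS(\hat f, X)$, and then argue that letting $m_\iota \to \infty$, $m_b \to \infty$ and $\delta \to 0$ drives the actual stopping index to infinity, so that the finite-parameter output is cofinal with these sequences and inherits their limit. Write $\tilde f$ for the restriction of $\hat f$ to the maximal elements $\R^n$, set $z \equiv [\lep[z], \uep[z]] = \maxV(\tilde f, X)$, and recall from \eqref{eq:X_i_Seq} that $\hat B_i \subseteq B_i \subseteq X_i$, where the boxes of $X_i$ have diameter tending to $0$. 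Since $f$ is an interval approximation of $\tilde f$ (as in Lemma~\ref{lemma:soundness_maximization}), every box $b \ni x$ satisfies $\tilde f(x) \subseteq f(b)$, and $x \in \maxS$ is equivalent to $\uep[\tilde f(x)] \geq \lep[z]$ because $\uep[\tilde f(x)] \leq \uep[z]$ always holds.

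First I would record three structural facts. (a) Bisection preserves unions and discarding only removes boxes, so $\bigcup \hat B_{i+1} \subseteq \bigcup B_{i+1} = \bigcup \hat B_i$, whence the $\hat B_i$ form an increasing $\omega$-chain in $\kCPPO{\R^n\lift}$; a parallel check on the retained endpoints (the boxes realizing $\lep[z_i]$ and $\uep[z_i]$ are never discarded, and their sub-boxes sharpen these endpoints monotonically) shows the $I_i$ form an increasing $\omega$-chain in $\intvaldom[\R^1\lift]$. (b) No maximum point is discarded: if $x \in \maxS$ then $\uep[f(b)] \geq \uep[\tilde f(x)] \geq \lep[z] \geq \lep[z_i]$ for the box $b \ni x$ (using soundness, $\lep[z_i] \leq \lep[z]$), so $b$ survives line~\ref{alg:line:retain_boxes}; hence $\maxS \subseteq \bigcup \hat B_i$ for every $i$. (c) By Corollary~\ref{cor:Scott_restrict_Euclid} and Scott continuity of $\hat f$, whenever a family of boxes all contain a fixed point $x$ and have diameter tending to $0$, the way-below characterization \eqref{eq:way_below_compact_sets} together with monotonicity forces $\lep[f(b)] \to \lep[\tilde f(x)]$ and $\uep[f(b)] \to \uep[\tilde f(x)]$.

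Next I would establish the two limits. For $I_i \to z$ the containment $z \subseteq I_i$ is soundness, so it remains to show $\lep[z_i] \to \lep[z]$ and $\uep[z_i] \to \uep[z]$. For the first, suppose $\lim \lep[z_i] = \ell < \lep[z]$, pick $\ell < t < \lep[z]$ and $x_0$ with $\lep[\tilde f(x_0)] > t$; then $\uep[f(b_i)] \geq \lep[\tilde f(x_0)] > t > \lep[z_i]$ for the box $b_i \ni x_0$, so $x_0$ is never discarded, $b_i \downarrow \{x_0\}$, and fact (c) gives $\lep[f(b_i)] \to \lep[\tilde f(x_0)] > t$, forcing $\lep[z_i] > t$ eventually, a contradiction. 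For $\uep[z_i] \to \uep[z]$ I would argue, as in Theorem~\ref{thm:maxV_maxS_Scott_Cont}, by contradiction: a subsequence with $\uep[z_{i_k}] > \uep[z] + \epsilon$ is realized by boxes $b_{i_k}$ of vanishing diameter; extracting (by compactness of $X$) a limit point $x^*$ of chosen representatives, $b_{i_k}$ is eventually below every $b \ll \{x^*\}$, so $\uep[f(b_{i_k})] \leq \uep[f(b)] \to \uep[\tilde f(x^*)] \leq \uep[z]$, contradicting the choice of the subsequence. For the maximum set, $\bigcap_i \bigcup \hat B_i \supseteq \maxS$ is fact (b); conversely, any $x^*$ lying in a retained box $b_i \in \hat B_i$ for every $i$ satisfies $\uep[f(b_i)] \geq \lep[z_i]$, and since $b_i \ni x^*$ with diameter $\to 0$, fact (c) yields $\uep[\tilde f(x^*)] = \lim \uep[f(b_i)] \geq \lim \lep[z_i] = \lep[z]$, so $x^* \in \maxS$. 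Hence $\bigsqcup_i \hat B_i = \bigcap_i \bigcup \hat B_i = \maxS$.

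Finally I would connect the finite-parameter runs to these chains. For any target index $i_0$ the cardinality $\absn{\hat B_{i_0}}$ is finite, so taking $m_\iota > i_0$, $m_b > 2^n \absn{\hat B_{i_0}}$, and $\delta$ below the width $\uep[z_{i_0}] - \lep[z_{i_0}]$ guarantees the loop reaches iteration $i_0$; the only exception is that the width might already vanish at some earlier step, in which case $I$ already equals $z$ and halting is harmless. Thus as $m_\iota \to \infty$, $m_b \to \infty$ and $\delta \to 0$ the stopping index is cofinal in $\N$, and the net $\Psi(\hat f, X, m_\iota, m_b, \delta) = (I_{i^*}, \hat B_{i^*})$ converges in the product Scott topology to the suprema of the two chains, namely $(\maxV(\hat f, X), \maxS(\hat f, X))$. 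I expect the crux to be the $\uep[z_i] \to \uep[z]$ step and the reverse inclusion for $\maxS$: because the boxes realizing the running maxima at successive iterations need not be nested, one cannot use a single shrinking chain but must combine compactness of $X$ with \eqref{eq:way_below_compact_sets} to push Scott continuity through an arbitrary diameter-vanishing family of boxes. The remaining care lies in the bookkeeping of the three stopping conditions, in particular ensuring that premature termination by $\delta$ does not cut off the resolution of the maximum set before the limit is reached.
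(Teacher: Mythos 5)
Your main convergence argument is essentially the paper's: both proofs analyze the unbounded bisection sequence and use compactness of $X$ together with continuity to show that $[\lep[z_i],\uep[z_i]]$ shrinks onto $\maxV(\hat f,X)$ and that every point outside $\maxS(\hat f,X)$ is eventually discarded in line~\ref{alg:line:retain_boxes}. The paper feeds in continuity as upper/lower semicontinuity of the endpoint functions $\uep[f]$ and $\lep[f]$ (via Proposition~\ref{prop:Euclidean_Scott_Cont}) and closes the argument with a finite subcover; you feed it in as Scott continuity of $\hat f$ evaluated along shrinking boxes (your fact (c)) and close with a convergent subsequence. These are interchangeable. You prove slightly more than the paper states explicitly --- the chain structure of $(I_i)$ and $(\hat B_i)$ and the reverse inclusion $\bigcap_i \bigcup \hat B_i \subseteq \maxS(\hat f,X)$ --- and your fact (c), which does not require $x$ to be interior to the boxes, lets you avoid the paper's bookkeeping with up to $2^n$ adjacent boxes around boundary points.

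The one step that fails is in your final paragraph: the claim that premature termination by the width test is ``harmless'' because $I$ already equals $z$. That is true for the $\maxV$ component but not for $\maxS$. If $\uep[z_{i_1}]-\lep[z_{i_1}]=0$ at some finite iteration $i_1$, the loop exits there for \emph{every} $\delta\geq 0$, so the second component of $\lim \Psi$ is $\hat B_{i_1}$, which can be a strict over-approximation of $\maxS(\hat f,X)$. Concretely, take $X=[0,1]$ and $\hat f$ the canonical interval extension of $g(x)=\min(0,\,1/4-x)$: at the third iteration the boxes are $[0,1/4],[1/4,1/2],[1/2,3/4],[3/4,1]$, one computes $\lep[z_3]=\uep[z_3]=0$, the loop halts, and $\hat B_3$ has union $[0,1/2]$, whereas $\maxS(\hat f,X)=[0,1/4]$; the surviving sub-boxes of $(1/4,1/2]$ would only be discarded at later iterations that are never reached. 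To be fair, the paper's own proof silently ignores the stopping conditions and only establishes convergence of the idealized run, so this is as much a defect of the theorem's interface with Algorithm~\ref{alg:interval_max} as of your argument; but since you explicitly undertook to justify that reduction, the justification you give does not go through for $\maxS$, and fixing it requires either restricting to the case where $\uep[z_i]-\lep[z_i]>0$ for all $i$ or modifying the stopping rule.
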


\begin{proof}
  We write the restriction of
  $\hat{f} \in \intvaldom[X] \Rightarrow \intvaldom[\R^1\lift]$ over the
  maximal elements as
  $f \equiv [\lep[f], \uep[f]]: X \to \intvaldom[\R^1\lift]$. As
  $\hat{f}$ is Scott-continuous, by
  Corollary~\ref{cor:Scott_restrict_Euclid} and
  Proposition~\ref{prop:Euclidean_Scott_Cont}, $\lep[f]$ and $\uep[f]$
  are lower and upper semicontinuous, respectively.

  \begin{description}
  \item[$\maxV$:]   Let $z \equiv [\lep[z], \uep[z]] \defeq \maxV(\hat{f},X)$. As
  $\uep[f]$ is upper semicontinuous, then it must attain its maximum
  over the compact set $X$ at some point $x_0 \in X$. Thus,
  $\forall x \in X: \uep[f](x) \leq \uep[f](x_0)$. Suppose that
  $\epsilon > 0$ is given. By upper semicontinuity of $\uep[f]$, for
  each $x \in X$, there exists a neighborhood $O_x$ such that
  $\forall y \in O_x: \uep[f](y) \leq \uep[f](x) + \epsilon \leq
  \uep[f](x_0) + \epsilon$. By referring to the sequence
  $(X_i)_{i \in \N}$ defined in~\eqref{eq:X_i_Seq}, and using Scott
  continuity of $\hat{f}$, we deduce that, for each $x \in X$, there
  exists a sufficiently large index $N_x \in \N$ that satisfies the
  following property:

\begin{itemize}
\item There exist $k_x \in \N$ and finitely many boxes
  $\setbarNormal{B_{N_x, j}}{ 1 \leq j \leq k_x}$ such that:
  \begin{equation}
    \label{eq:x_cup_B_N_x_j}
    x \in \interiorOf{\left(\bigcup_{1 \leq j \leq k_x} B_{N_x,
          j}\right)} \text{ and } \left(\bigcup_{1 \leq j \leq k_x} B_{N_x, j}\right) \subseteq O_x,     
  \end{equation}
  \noindent
  in which $\interiorOf{(\cdot)}$ denotes the interior operator, and:
  \begin{equation}
    \label{eq:B_N_x_j}
    \forall j \in \set{1, \ldots, k_x}: \quad    \uep[\hat{f}(B_{N_x, j})] \leq \uep[f](x_0) +
    \epsilon.  
  \end{equation}
\end{itemize}
The reason for considering $k_x$ boxes
$\setbarNormal{B_{N_x, j}}{ 1 \leq j \leq k_x}$---rather than just one
box---is that, if at some iteration $i_0 \in \N$, the point $x$ lies
on the boundary of a box in $X_{i_0}$, then it will remain on the
boundary of boxes in $X_i$ for all $i \geq i_0$. In other words, it
will not be in the interior of any boxes in subsequent bisections. Of
course, as each point $x \in X$ may be on the boundaries of at most
$2^n$ adjacent boxes in $X_{N_x}$, then we can assume that
$k_x \leq 2^n$.

For each $x \in X$, we define the neighborhood
$U_x \defeq \interiorOf{(\bigcup_{1 \leq j \leq k_x} B_{N_x, j})}$. As
$X$ is compact, then it may be covered by finitely many of such
neighborhoods (say) $\setbarNormal{U_{x_i}}{1 \leq i \leq m}$. By
taking $N = \max_{1 \leq i \leq m} N_{x_i}$, and
considering~\eqref{eq:B_N_x_j}, we deduce that, at the latest, at
iteration $N$, we must have $\uep[z_i] \leq \uep[z] + \epsilon$.

One the other hand, as $\lep[z] \defeq \sup_{x \in X} \lep[f(x)]$, for
some $x'_0 \in X$ we must have $\lep[f](x'_0) > \lep[z] -
\epsilon/2$. Furthermore, as $\lep[f]$ is lower semicontinuous, then
for some neighborhood $X'_0$ of $x'_0$, we must have
$\forall y \in X'_0: \lep[f](y) > \lep[f](x'_0) - \epsilon/2 > \lep[z]
- \epsilon$. Similar to the previous argument, we can show that, after
a sufficient number of bisections, we must have
$\lep[z_i] \geq \lep[z] - \epsilon$.

\item[$\maxS$:] Assume that $x \in X \setminus \maxS(\hat{f},X)$. By
  Proposition~\ref{prop:maxS_compact}, the set $\maxS(\hat{f},X)$ is
  compact. Hence, the set $X \setminus \maxS(\hat{f},X)$ is relatively
  open in $X$ and there exists a neighborhood $O_x$ of $x$ which
  satisfies $\forall y \in O_x: \uep[f](y) < \lep[z]$. Once again, we
  can show that, after a sufficiently large number of bisections (say)
  $N_x$, for a finite number of boxes, the
  relation~\eqref{eq:x_cup_B_N_x_j} holds. This entails that all these
  boxes must be discarded in line~\ref{alg:line:retain_boxes} of the
  algorithm after at most $N_x$ iterations.
\end{description}
\end{proof}

\subsection{Computability}
\label{subsec:computability}

For computable analysis of the operators discussed in this work, we
need an effective structure on the underlying domains. To that end, we
use the concept of effectively given
domains~\parencite{Smyth:effectively_given_domains:1977}. Specifically, we follow the approach
taken
in~\parencite[Section~3]{Edalat-Sunderhauf-dt-computability-on-reals99}.

Assume that $(D, \sqsubseteq)$ is an $\omega$-continuous domain, with
a countable basis $B$ that is enumerated as follows:
    \begin{equation}
      \label{eq:basis_enum}
      B = \set{ b_0 = \bot, b_1, \ldots, b_n, \ldots}.
    \end{equation}
    We say that the domain $D$ is \emph{effectively given} with
    respect to the enumeration~\eqref{eq:basis_enum}, if the set
    $\setbarNormal{(i,j) \in \N \times \N}{ b_i \ll b_j}$ is
    recursively enumerable, in which $\ll$ is the way-below relation
    on $D$.

    In the following proposition, computability is to be understood
    according to \acl{TTE}~\parencite{Weihrauch2000:book}.

    \begin{proposition}[Computable elements and functions]
    \label{prop:comp_elem_fun}
    Let $D$ and $E$ be two effectively given domains, with enumerated
    bases $B_1 = \set{d_0, d_1, \ldots, d_n, \ldots}$ and
    $B_2 = \set{e_0, e_1, \ldots, e_n, \ldots}$, respectively. An
    element $x \in D$ is computable iff the set
    $\setbarNormal{i \in \N}{d_i \ll x}$ is recursively enumerable. A
    map $f: D \to E$ is computable iff
    $\setbarNormal{(i,j) \in \N \times \N}{ e_i \ll f(d_j)}$ is
    recursively enumerable.
  \end{proposition}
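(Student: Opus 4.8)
The plan is to fix the standard admissible representation of an effectively given $\omega$-continuous domain and then to show that TTE-computability with respect to it is exactly the recursive-enumerability condition claimed. Following~\parencite[Section~3]{Edalat-Sunderhauf-dt-computability-on-reals99}, for a domain $D$ with enumerated basis $B = \set{d_0, d_1, \ldots}$ I would take a \emph{name} of $x \in D$ to be any enumeration of the set $A_x \defeq \setbarNormal{i \in \N}{d_i \ll x}$. This is well defined precisely because continuity guarantees that $\waybelows{x} \cap B$ is directed with supremum $x$, so the encoded data determines $x$ uniquely. Effectivity of $D$---the relation $\setbarNormal{(i,j)}{d_i \ll d_j}$ being recursively enumerable---ensures that every basis element $d_j$ has a name computable \emph{uniformly} in $j$, a fact I will use in the function case.

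For the element characterization the equivalence is almost immediate from the representation: $x$ is computable iff it has a computable name, and a computable enumeration of $A_x$ exists iff $A_x$ is recursively enumerable. The only point requiring care is the backward direction, where I must check that an arbitrary recursive enumeration of $A_x$ is a legitimate name; this again follows from directedness of $\waybelows{x}\cap B$ with supremum $x$.

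The function characterization is the substantive part. For the forward direction, if $f$ is TTE-computable then feeding the uniformly computable name of each basis element $d_j$ into the realizing machine yields a name of $f(d_j)$---an enumeration of $\setbarNormal{i}{e_i \ll f(d_j)}$---uniformly in $j$, whence $R \defeq \setbarNormal{(i,j)}{e_i \ll f(d_j)}$ is recursively enumerable. For the backward direction I would first establish the key equivalence
\begin{equation*}
  e_i \ll f(x) \iff \exists j \in \N: (d_j \ll x) \wedge (e_i \ll f(d_j)),
\end{equation*}
which is where the domain theory is used: Scott-continuity of $f$ (Proposition~\ref{prop:Scott_w_cont}) together with continuity of $D$ gives $f(x) = \lub \setbarNormal{f(d_j)}{d_j \ll x}$ over a directed family. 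The ($\Leftarrow$) direction then follows from $e_i \ll f(d_j) \sqsubseteq f(x)$, while the ($\Rightarrow$) direction uses the interpolation property of continuous domains~\parencite{AbramskyJung94-DT}, interpolating $e_i \ll e' \ll f(x)$ and absorbing $e' \sqsubseteq f(d_{j_0})$ for a suitable $d_{j_0} \ll x$.

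Granting this equivalence, the realizing machine is straightforward: given a name of an arbitrary input $x$ (an enumeration of $\setbarNormal{j}{d_j \ll x}$) together with a recursive enumeration of $R$, I can semi-decide the right-hand side of the equivalence and thereby enumerate $\setbarNormal{i}{e_i \ll f(x)}$, which is a name of $f(x)$; hence $f$ is computable. I expect the main obstacle to be twofold: (i) verifying the interpolation step and the directedness claims that make the central equivalence hold, and (ii) ensuring that every enumeration is produced uniformly, so that the informal ``semi-decide and output'' procedure is genuinely realized by a single Type-II machine. Both are routine once the representation is in place, and the supporting framework may simply be invoked from~\parencite{Smyth:effectively_given_domains:1977,Edalat-Sunderhauf-dt-computability-on-reals99}.
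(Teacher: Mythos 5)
The paper offers no proof of its own here: it simply cites Proposition~3 and Theorem~9 of \textcite{Edalat-Sunderhauf-dt-computability-on-reals99}. Your argument is correct and is essentially the standard proof contained in that reference --- the same representation (names as enumerations of the indices of way-below basis elements), the same uniform naming of basis elements from effectivity of $D$, and the same key equivalence $e_i \ll f(x) \iff \exists j\, (d_j \ll x \wedge e_i \ll f(d_j))$ established via Scott-continuity and interpolation --- so nothing further is needed.
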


  \begin{proof}
    See ~\parencite[Proposition~3 and Theorem~9]{Edalat-Sunderhauf-dt-computability-on-reals99}.
  \end{proof}

Arithmetic operators have computable domain-theoretic extensions, so
do the common activation functions ({\eg}, $\ReLU$, $\tanh$, or
sigmoidal) and their
Clarke-gradients~\parencite{Edalat95:DT-fractals,Escardo96-tcs,Farjudian:Shrad:2007,EdalatLieutierPattinson:2013-MultiVar-Journal}. Our
aim here is to study the computability of $\maxV$ and
$\maxS$. From~\eqref{eq:maxV_maxS_not_monotone_in_X}, we know that
$\maxV$ and $\maxS$ are not even monotone in their second argument,
while, by Theorem~\ref{thm:maxV_maxS_Scott_Cont}, if we fix
$X \in \kCPPO{\R^n\lift}$, then the functionals
$\maxV(\cdot , X): (\intvaldom[\R^n\lift] \Rightarrow
\intvaldom[\R^1\lift]) \to \intvaldom[\R^1\lift]$ and
$\maxS( \cdot, X): (\intvaldom[\R^n\lift] \Rightarrow
\intvaldom[\R^1\lift]) \to \kCPPO{\R^n\lift}$ are Scott-continuous. By
referring to Proposition~\ref{prop:comp_elem_fun}, to study
computability of these two operators, we need to enumerate specific
countable bases for the domains
$\intvaldom[\R^n\lift] \Rightarrow \intvaldom[\R^1\lift]$,
$\intvaldom[\R^1\lift]$, and $\kCPPO{\R^n\lift}$. We already know that
$B_{\intvaldom[\R^1\lift]}$ and $B_{\kCPPO{\R^n\lift}}$ form countable
bases for $\intvaldom[\R^1\lift]$ and $\kCPPO{\R^n\lift}$,
respectively, and it is straightforward to effectively enumerate them.

  For the function space
  $\intvaldom[\R^n\lift] \Rightarrow \intvaldom[\R^1\lift]$, we use
  the common step function construction. Recall the single-step
  functions as defined in~\eqref{eq:single_step_fun}. Assume that $D$
  and $E$ are two domains. For any $d \in D$ and $e \in E$, we must
  have:
  \begin{equation*}
    e \chi_{\wayaboves{d}} (x) \defeq
    \left\{
    \begin{array}{ll}
      e, &  \text{if } d \ll x,\\
      \bot, & \text{if } d \not \ll x.
    \end{array}
  \right.
\end{equation*}  
Note that, by Proposition~\ref{prop:Scott_wayaboves}, the set
$\wayaboves{d}$ is Scott-open. By a step function we mean the supremum
of a finite set of single-step functions. If the domains $D$ and $E$
are bounded-complete, with countable bases
$B_D = \set{d_0, \ldots, d_n, \ldots}$ and
$B_E = \set{e_0, \ldots, e_n, \ldots}$, then the countable collection:
\begin{equation}
  \label{eq:B_D_to_E}
 B_{D \Rightarrow E} \defeq \setbarTall{\lub_{i \in I} e_i
   \chi_{\wayaboves{d_i}}}{I \text{ is finite}, \setbarNormal{e_i
   \chi_{\wayaboves{d_i}}}{i \in I} \text{ is bounded}}
\end{equation}
forms a basis for the function space
$D \Rightarrow E$~\parencite[Section~4]{AbramskyJung94-DT}.

Thus, we consider the bases $B_{\intvaldom[\R^n\lift]}$ and
$B_{\intvaldom[\R^1\lift]}$, and construct the countable basis
$B_{\intvaldom[\R^n\lift] \Rightarrow \intvaldom[\R^1\lift]}$ for the
function space
$\intvaldom[\R^n\lift] \Rightarrow \intvaldom[\R^1\lift]$, according
to~\eqref{eq:B_D_to_E}. The key point is that, each step function in
$B_{\intvaldom[\R^n\lift] \Rightarrow \intvaldom[\R^1\lift]}$ maps any
given hyperbox with rational coordinates to a rational interval. We
formulate this as:

\begin{proposition}
  \label{prop:rational_step_fun}
  Each step function in
  $B_{\intvaldom[\R^n\lift] \Rightarrow \intvaldom[\R^1\lift]}$ maps
  any given element of $B_{\intvaldom[\R^n\lift]}$ to an element of~$B_{\intvaldom[\R^1\lift]}$. 
\end{proposition}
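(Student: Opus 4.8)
The plan is to evaluate a basis step function at a rational hyperbox and recognize the result as a finite intersection of rational intervals. Take any $s \in B_{\intvaldom[\R^n\lift] \Rightarrow \intvaldom[\R^1\lift]}$; by~\eqref{eq:B_D_to_E} we may write $s = \lub_{i \in I} e_i \chi_{\wayaboves{d_i}}$ with $I$ finite, each $d_i \in B_{\intvaldom[\R^n\lift]}$, each $e_i \in B_{\intvaldom[\R^1\lift]}$, and the family $\setbarNormal{e_i \chi_{\wayaboves{d_i}}}{i \in I}$ bounded in the function space. Fix a rational hyperbox $b \in B_{\intvaldom[\R^n\lift]}$.

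First I would compute $s(b)$. Since suprema in the function space $\intvaldom[\R^n\lift] \Rightarrow \intvaldom[\R^1\lift]$ are taken pointwise, and each single-step function returns $e_i$ when $d_i \ll b$ and $\bot$ otherwise, the bottom contributions drop out of the supremum and we are left with $s(b) = \lub_{i \in J} e_i$, where $J \defeq \setbarNormal{i \in I}{d_i \ll b}$.

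Next I would identify this supremum explicitly. If $J = \emptyset$, then $s(b) = \bot = \R \in B_{\intvaldom[\R^1\lift]}$ and we are done. Otherwise, recalling that the order on $\intvaldom[\R^1\lift]$ is reverse inclusion, the supremum of the $e_i$ coincides with their intersection, so $s(b) = \bigcap_{i \in J} e_i$. A finite, non-empty intersection of closed intervals with rational endpoints is itself a closed interval whose left endpoint is the maximum of the left endpoints and whose right endpoint is the minimum of the right endpoints---both rational. Hence $s(b) \in B_{\intvaldom[\R^1\lift]}$.

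The only point requiring care is the non-emptiness of the intersection $\bigcap_{i \in J} e_i$, and this is exactly where the boundedness condition built into~\eqref{eq:B_D_to_E} is used. If $g$ is an upper bound of the step-function family, then evaluating $e_i \chi_{\wayaboves{d_i}} \sqsubseteq g$ at $b$ gives $e_i \sqsubseteq g(b)$ for every $i \in J$; under reverse inclusion this reads $g(b) \subseteq e_i$ for all such $i$, so $\emptyset \neq g(b) \subseteq \bigcap_{i \in J} e_i$. Once this non-emptiness is secured, the remainder is a routine computation of the rational endpoints of a finite intersection, so I anticipate no serious obstacle.
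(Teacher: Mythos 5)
Your proof is correct. The paper states this proposition without any proof, treating it as an immediate consequence of the step-function construction in~\eqref{eq:B_D_to_E}; your argument is exactly the verification being left implicit: suprema in the function space are pointwise, the bottom contributions drop out, and the resulting finite supremum $\lub_{i \in J} e_i$ is the intersection $\bigcap_{i \in J} e_i$, a rational interval. The one step that genuinely needs care---non-emptiness of that intersection---is handled correctly via the boundedness condition in~\eqref{eq:B_D_to_E}, and the remaining edge cases ($J = \emptyset$, or some $e_i$ equal to the bottom element $\R$) land in the basis as well since $\R$ is included in $B_{\intvaldom[\R^1\lift]}$ by definition.
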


As such, we call the elements of
$B_{\intvaldom[\R^n\lift] \Rightarrow \intvaldom[\R^1\lift]}$ rational
step functions.

\begin{theorem}[\textbf{Computability}]
  \label{thm:computability}
  Assume that $X \in B_{\kCPPO{\R^n\lift}}$ is fixed, that is, $X$ is
  a finite union of hyperboxes with rational coordinates. Then,
  the functionals
  $\maxV(\cdot , X): (\intvaldom[\R^n\lift] \Rightarrow
  \intvaldom[\R^1\lift]) \to \intvaldom[\R^1\lift]$ and
  $\maxS( \cdot, X): (\intvaldom[\R^n\lift] \Rightarrow
  \intvaldom[\R^1\lift]) \to \kCPPO{\R^n\lift}$ are computable.  
\end{theorem}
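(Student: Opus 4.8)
The plan is to use Proposition~\ref{prop:comp_elem_fun}, which reduces computability of the two functionals to recursive enumerability of certain way-below relationships. By Theorem~\ref{thm:maxV_maxS_Scott_Cont}, both $\maxV(\cdot,X)$ and $\maxS(\cdot,X)$ are Scott-continuous for fixed $X$, so they are legitimate candidates for the computability criterion. Concretely, writing the enumerated basis of the function space $\intvaldom[\R^n\lift] \Rightarrow \intvaldom[\R^1\lift]$ as $\set{\phi_0, \phi_1, \ldots}$ (the rational step functions of Proposition~\ref{prop:rational_step_fun}), the bases of $\intvaldom[\R^1\lift]$ as $\set{I_0, I_1, \ldots}$ (rational intervals), and of $\kCPPO{\R^n\lift}$ as $\set{C_0, C_1, \ldots}$ (finite unions of rational hyperboxes), I must show that the two sets
\begin{equation*}
  \setbarNormal{(i,j) \in \N \times \N}{ I_i \ll \maxV(\phi_j,X)}
  \quad \text{and} \quad
  \setbarNormal{(i,j) \in \N \times \N}{ C_i \ll \maxS(\phi_j,X)}
\end{equation*}
are recursively enumerable.

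First I would make the key observation underlying the whole argument: since $X$ is a finite union of rational hyperboxes and each $\phi_j$ is a rational step function, the value $\maxV(\phi_j,X)$ can be computed \emph{exactly} in finitely many arithmetic steps. Indeed, a rational step function is piecewise constant on the Scott-open sets $\wayaboves{d}$ determined by its finitely many defining boxes; restricting to the maximal elements $X$ and evaluating, the supremum $\lep[z] = \sup_{x \in X}\lep[\phi_j(x)]$ and $\uep[z] = \sup_{x \in X}\uep[\phi_j(x)]$ reduce to a maximum over the finitely many rational interval values that $\phi_j$ attains on $X$. Hence $\maxV(\phi_j,X)$ is a rational interval that is computable from the indices $(j)$ and the rational data of $X$. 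Once $\maxV(\phi_j,X)$ is known exactly, deciding $I_i \ll \maxV(\phi_j,X)$ is decidable by the way-below characterization~\eqref{eq:way_below_compact_sets}, namely $\maxV(\phi_j,X) \subseteq \interiorOf{I_i}$, which for rational intervals is a decidable comparison of rational endpoints. This already gives recursive enumerability (in fact decidability) of the first set.

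For $\maxS$, I would proceed analogously but more carefully, because $\maxS(\phi_j,X) = \setbarNormal{x \in X}{\phi_j(x) \cap \maxV(\phi_j,X) \neq \emptyset}$ is a set rather than an interval. Using the piecewise-constant structure of $\phi_j$ on $X$ together with the exact value of $\maxV(\phi_j,X)$ obtained above, the maximum set is a finite union of regions cut out by the rational boxes defining $\phi_j$ intersected with $X$; in the worst case one must take closures to land in $\kCPPO{\R^n\lift}$, but the result is a compact set computable from the rational data. Then $C_i \ll \maxS(\phi_j,X)$ is again checked via~\eqref{eq:way_below_compact_sets} as $\maxS(\phi_j,X) \subseteq \interiorOf{C_i}$, a decidable rational inclusion. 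Thus the second set is also recursively enumerable, and by Proposition~\ref{prop:comp_elem_fun} both functionals are computable.

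The main obstacle I anticipate is the exact-evaluation claim for $\maxS$ and the subtlety at box boundaries. A rational step function need not be continuous in the Euclidean sense when restricted to maximal elements, so its maximum set may fail to be topologically well-behaved (e.g.\ it can be a half-open region whose closure is strictly larger), which is precisely why the codomain is $\kCPPO{\R^n\lift}$ and why the way-below test uses \emph{interior} containment. The delicate point is to verify that, despite these boundary effects, $\maxS(\phi_j,X)$ is genuinely a computable compact set and that the inclusion $\maxS(\phi_j,X) \subseteq \interiorOf{C_i}$ is decidable rather than merely semidecidable. I expect this can be handled by appealing to the finiteness of the partition of $X$ induced by $\phi_j$ and to the decidability of rational polyhedral inclusions, but it is the step that requires the most care.
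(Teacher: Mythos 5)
Your proposal is correct in outline, but it takes a genuinely different route from the paper. Both arguments reduce the theorem to Proposition~\ref{prop:comp_elem_fun} via the enumerated bases of $\intvaldom[\R^n\lift] \Rightarrow \intvaldom[\R^1\lift]$, $\intvaldom[\R^1\lift]$, and $\kCPPO{\R^n\lift}$; the divergence is in how the way-below relations on basis elements are recognized. The paper does not attempt to evaluate $\maxV(f,X)$ or $\maxS(f,X)$ exactly: it runs Algorithm~\ref{alg:interval_max} on the rational step function $f$ and invokes Completeness (Theorem~\ref{thm:completeness}) together with Scott-openness of $\wayaboves{e}$ (resp.\ $\wayaboves{C}$) to conclude that, whenever $e \ll \maxV(f,X)$, the relation $e \ll [{\lep[z_i],\uep[z_i]}]$ is witnessed at some finite iteration, each such test being a comparison of rationals by Proposition~\ref{prop:rational_step_fun}; this yields semi-decidability, which is all that recursive enumerability demands. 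You instead exploit the finite combinatorial structure of a rational step function restricted to $X$ to compute $\maxV(\phi_j,X)$ and $\maxS(\phi_j,X)$ exactly as rational data, obtaining outright decidability of the basis relations --- a stronger conclusion, at the price of the polyhedral bookkeeping you flag yourself. Two remarks on that delicate step: the closure worry for $\maxS$ is already settled by Proposition~\ref{prop:maxS_compact}, and more concretely upper semicontinuity of $\uep[f]$ forces the union of the ``maximal'' partition pieces to be closed, so no ad hoc closure is needed; and should the exact-evaluation argument become awkward at degenerate boundary configurations, you may always retreat to the paper's semi-decision procedure, since only recursive enumerability is required. The paper's approach buys brevity and ties computability directly to the implemented algorithm; yours buys decidability on the basis and independence from Theorem~\ref{thm:completeness}.
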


\begin{proof}
  We prove computability according to
  Proposition~\ref{prop:comp_elem_fun}. Let
  $f \in B_{\intvaldom[\R^n\lift] \Rightarrow \intvaldom[\R^1\lift]}$
  be a rational step function.

  \begin{description}
  \item[$\maxV$:] Assume that
    $e \equiv [\lep[e], \uep[e]]\in B_{\intvaldom[\R^1\lift]}$, that
    is, $e$ is a rational interval. To decide
    $e \ll \maxV(f, X) = [\lep[z], \uep[z]]$, we run
    Algorithm~\ref{alg:interval_max}. As $\wayaboves{e}$ is
    Scott-open, then by completeness (Theorem~\ref{thm:completeness}),
    after sufficiently large number of iterations (say) $N_f$, we must
    have $\forall i \geq N_f: e \ll [{\lep[z_i],
      \uep[z_i]}]$. By~\eqref{eq:way_below_compact_sets}, the relation
    $e \ll [{\lep[z_i], \uep[z_i]}]$ is equivalent to
    $\lep[e] < \lep[z_i] \leq \uep[z_i] < \uep[e]$. By
    Proposition~\ref{prop:rational_step_fun}, both $\lep[z_i]$ and
    $\uep[z_i]$ are rational numbers. Hence, deciding
    $e \ll [{\lep[z_i], \uep[z_i]}]$ reduces to comparing rational
    numbers, which is decidable. Thus, overall, the problem
    $e \ll \maxV(f, X)$ is semi-decidable.

  \item[$\maxS$:] The proof for $\maxS$ is similar to that for
    $\maxV$. Assume that $C \in B_{\kCPPO{\R^n\lift}}$. To decide
    $C \ll \maxS(f, X) = K$, we run
    Algorithm~\ref{alg:interval_max}. As $\wayaboves{C}$ is
    Scott-open, then by completeness (Theorem~\ref{thm:completeness}),
    after sufficiently large number of iterations (say) $N_f$, we must
    have $\forall i \geq N_f: C \ll
    \hat{B}_i$. By~\eqref{eq:way_below_compact_sets}, the relation
    $C \ll \hat{B}_i$ is equivalent to
    $\hat{B}_i \subseteq \interiorOf{C}$. As we have assumed that
    $C \in B_{\kCPPO{\R^n\lift}}$ and $X \in B_{\kCPPO{\R^n\lift}}$,
    then all the hyperboxes in $C$ and $\hat{B}_i$ have rational
    coordinates. Therefore, deciding $C \ll \hat{B}_i$ reduces to
    comparing rational numbers, which is decidable. Thus, overall, the
    problem $C \ll \maxS(f, X)$ is semi-decidable.
  \end{description}
\end{proof}


  \subsection{Interval Arithmetic}
  \label{subsec:interval_arithmetic}

We will use arbitrary-precision interval arithmetic for approximation
of the $\hat{L}$-derivative. Interval
arithmetic~\parencite{Moore:2009:IIA} is performed over sets of numbers,
in contrast with the classical arithmetic which is performed over
individual numbers. This allows incorporating all the possible errors
({\eg}, round-off and truncations errors) into the result. As an
example, note that:
  \begin{equation}
    \label{eq:interval_sum}
    (x \in [\lep[I],\uep[I]]) \wedge (y \in [\lep[J],\uep[J]]) \implies  x+y \in [\lep[I]+\lep[J], \uep[I]+\uep[J]].
  \end{equation}
In practice, inaccuracies enter into interval computations for a
variety of reasons. The following are the most relevant to the current article:

\subsubsection{Finite Representations}

In arbitrary-precision interval arithmetic, every real number $x$ is
represented as the limit of a sequence $(I_i)_{i \in \N}$ of intervals
with dyadic end-points, {\ie}, $[x,x] = \bigcap_{i \in \N} I_i$. Assume
that we want to compute $\exp([1,1]) = [\me, \me]$, in which $\me$ is
the Euler number. As $\me$ is an irrational number, it does not have a
finite binary expansion. For instance, in
MPFI~\parencite{Revol_Rouillier:MPFI:05}---the arbitrary-precision
interval library that we use---when the precision is set at $10$, the
result of $\exp([1,1])$ is given as $[2.718281828,2.718281829]$.

Furthermore, as the precision of the end-points of each interval must
be set first, inaccuracy is incurred even on simpler computations,
such as~\eqref{eq:interval_sum}. For instance, if the bit size is set
at $n$, then the result of $[0,0]+[-2^{-2n}, 2^{-2n}]$ must be rounded
outwards to $[-2^{-n}, 2^{-n}]$.

This source of inaccuracy is inevitable and, to a large extent,
harmless. The reason is that higher accuracy may be obtained by simply
increasing the bit size of the representation of the end-points.

\subsubsection{Dependency}
\label{subsubsec:dependency}

By default, interval computations do not take into account any
dependencies between parameters. For instance, assume that
$x \in [0,1]$. By applying interval subtraction we obtain
$ (x-x) \in [-1,1]$, which is a significant overestimation as $x-x=0$,
regardless of which interval $x$ belongs to.

\begin{figure}[t]
  \centering
   \scalebox{0.3}[0.3]{\includegraphics{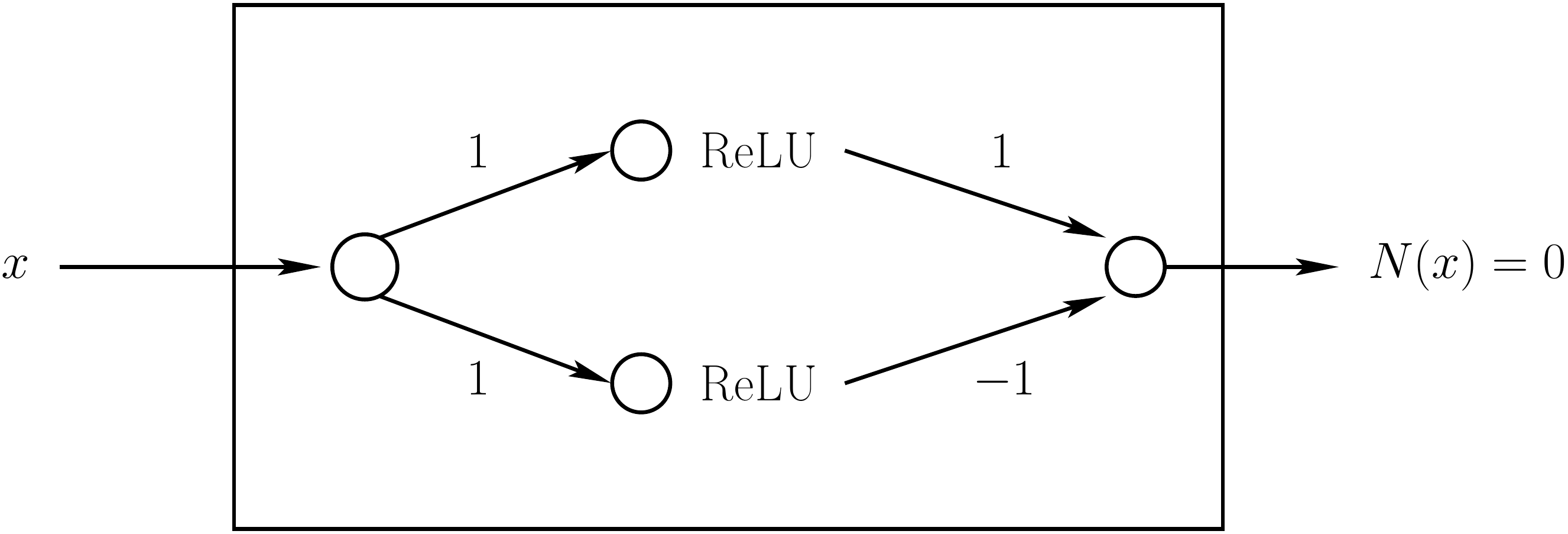}}
   \caption{Dependency problem: The network computes the constant
     function zero. Hence, we must have $N'(x)=0$ for all $x$. But the
     Clarke-gradient at $x=0$ is overestimated as $[-1,1]$.}
  \label{fig:dependency-zero_NN}
\end{figure}

The dependency problem causes inaccuracies in estimations of the
gradient as well. As an example, consider the neural network of
Figure~\ref{fig:dependency-zero_NN}. This network computes the
constant function $N(x)=0$, for which we must have
$\forall x \in \R: N'(x)=0$. We estimate the Clarke-gradient of $N$
using the chain rule, which, at $x = 0$, leads to the following
overestimation:
\begin{equation*}
\clarke{N}(0) \subseteq \clarke{\ReLU}(0) - \clarke{\ReLU}(0) = [0,1] -
[0,1] =  [-1,1].
\end{equation*}

Our framework, however, is not affected by the dependency problem for
general position $\ReLU$ networks (Theorem~\ref{thm:gen_pos_ReLU}) or
differentiable networks (Theorem~\ref{thm:differentiable_networks}).

\subsubsection{Wrapping Effect}
\label{subsubsec:wrapping_effect}

Whenever a set which is not a hyperbox is overapproximated by a
bounding hyperbox, some accuracy is lost. This source of
inaccuracy in interval arithmetic is referred to as the wrapping
effect~\parencite{Neumaier:Wrapping_Effect:1993}. As stated in
Proposition~\ref{prop:Clarke_general_properties}, the Clarke-gradient
is, in general, a convex set. Hence, by overapproximating the
Clarke-gradient with a hyperbox, some accuracy will be lost, as
exemplified in Example~\ref{example:L_hat_overapprox_L_derive}.

Similar to the dependency problem, our framework is not affected by
the wrapping effect for general position $\ReLU$ networks
(Theorem~\ref{thm:gen_pos_ReLU}) and differentiable networks
(Theorem~\ref{thm:differentiable_networks}).

 
\subsection{General Position \texorpdfstring{$\boldsymbol{\ReLU}$}{ReLU} Networks}
\label{subsec:general_position_networks}

We have pointed out the trade-off between accuracy and efficiency in
using the hyperbox domain~$\intvaldom[\R^n\lift]$, and further sources
of inaccuracy in interval arithmetic. In this section, we show that,
for a broad class of $\ReLU$ networks, no accuracy is lost by using
hyperboxes. Assume that $N : \R^n \to \R$ is a feedforward $\ReLU$
network with $k \geq 0$ hidden neurons (and an arbitrary number of
hidden layers). For convenience, we adopt the set-theoretic definition
of natural numbers~\parencite{Jech2002} for which we have
$k = \set{0, \ldots, k-1}$. For each $i \in k$, let
$\hat{z}_i : \R^n \to \R$ be the pre-activation function of the
$i$-th hidden neuron and define:
\begin{equation}
  \label{eq:def_H_i}
  H_i \defeq \setbarNormal{x \in \R^n}{\hat{z}_i(x) = 0}.
\end{equation}
\noindent
The following
definition, which we take
from~\parencite{Jordan_Dimakis:Exactly_NeurIPS:2020}, is an adaptation of
the concept of hyperplane arrangements in general
position~\parencite{Stanley:Hyperplane_Arrangements:Book:2006}:
\begin{definition}
\label{def:ReLU_General_Position}
  A $\ReLU$ network $N : \R^n \to \R$ with $k \geq 0$ hidden neurons
  is said to be in \emph{general position} if, for every subset
  $S \subseteq k$, the intersection $\bigcap_{i \in S} H_i$ is a
  finite union of $(n - \absn{S})$-dimensional polytopes.
\end{definition}

A $\ReLU$ network $N : \R^n \to \R$ with $k \geq 0$ hidden neurons and
only one hidden layer is in general position if for every
$S \subseteq k$, the intersection $\bigcap_{i \in S} H_i$ is an
$(n - \absn{S})$-dimensional subspace of $\R^n$. For instance, when
$n=2$, the network is in general position if no two hyperplanes $H_i$
and $H_j$ (with $i \neq j$) are parallel and no three such hyperplanes
meet at a point.

Recall that
$\tilde{L}(N): \intvaldom[\R^n\lift] \to \intvaldom[\R^n\lift]$
denotes the interval approximation of the $\hat{L}$-derivative of $N$
obtained by applying the chain rule of
Lemma~\ref{lem:chain_Rule_hat_L}. In general, we have:
  \begin{equation}
    \label{eq:Clarke_subsets}
    \forall x \in \R^n:  \quad \clarke{N}(x) \subseteq \hat{L}(N)(x) \subseteq
    \tilde{L}(N)(\set{x}). 
  \end{equation}
  \noindent
  Over general position networks, however, we obtain equality:  
\begin{lemma}
  \label{lemma:no_loss_of_precision_gen_pos}
  Let $N : \R^n \to \R$ be a general position $\ReLU$
  network. Then, $\forall x \in \R^n: \clarke{N}(x) = \hat{L}(N)(x) =
  \tilde{L}(N)(\set{x})$.
\end{lemma}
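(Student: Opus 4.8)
The plan is to establish the two reverse inclusions to the chain~\eqref{eq:Clarke_subsets}, namely $\tilde{L}(N)(\set{x}) \subseteq \hat{L}(N)(x)$ and $\hat{L}(N)(x) \subseteq \clarke{N}(x)$, since~\eqref{eq:Clarke_subsets} already supplies the forward inclusions and hence either reverse inclusion, together with these, forces equality throughout. I would first dispose of the generic case: if $x$ lies on none of the hyperplanes $H_i$ of~\eqref{eq:def_H_i}, then every hidden neuron has nonzero pre-activation, $N$ is affine on a whole neighbourhood of $x$, and by~\eqref{eq:relu_left_right_derive} each $\ReLU$ contributes a degenerate factor ($[0,0]$ or $[1,1]$). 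Consequently the chain rule of Lemma~\ref{lem:chain_Rule_hat_L} returns the single classical gradient, which also equals $\clarke{N}(x)$ by Theorem~\ref{thm:L_deriv_Clarke_coincide}; all three objects collapse to the same singleton. The work is therefore concentrated at a boundary point $x$, where the set $S \defeq \setbarNormal{i \in k}{\hat{z}_i(x) = 0}$ of active neurons is nonempty.

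Next I would describe the common hyperbox produced by the chain rule at such an $x$. Propagating~\eqref{eq:layer_by_layer} through Lemma~\ref{lem:chain_Rule_hat_L}, each neuron $i \notin S$ contributes the exact Boolean $\ReLU'$ value, whereas each $i \in S$ contributes the full interval $[0,1] = \clarke{\ReLU}(0)$. Since the remaining factors are degenerate, the interval products are exact, so $\tilde{L}(N)(\set{x})$ is precisely the interval hull of the finite family of vectors obtained by assigning to each boundary neuron $i \in S$ one of the two endpoint values of its $\ReLU'$, i.e.\ the interval hull of the $2^{\absn{S}}$ \emph{vertex gradients}. This simultaneously identifies $\hat{L}(N)(x)$, so the equality $\hat{L}(N)(x) = \tilde{L}(N)(\set{x})$ reduces to showing that the dependency problem does not inflate the product, that is, that each of these vertex gradients is genuinely realised by the network near $x$.

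Here the general position hypothesis (Definition~\ref{def:ReLU_General_Position}) enters. Because $\bigcap_{i \in S} H_i$ has the full expected codimension $\absn{S}$, the hyperplanes through $x$ meet transversally, so every neighbourhood of $x$ meets all $2^{\absn{S}}$ cells determined by independently choosing the sign of each $\hat{z}_i$, $i \in S$, with the signs of the $i \notin S$ frozen. On each such full-dimensional cell $N$ is affine with gradient equal to the corresponding vertex gradient; taking a sequence $x_j \to x$ inside one cell and applying~\eqref{eq:clarke_fin_dim_charac} places that vertex gradient in $\clarke{N}(x)$. By convexity of $\clarke{N}(x)$ (Proposition~\ref{prop:Clarke_general_properties}\ref{item:clarke_convex_compact_subset_Rn}), the entire interval hull of these vertices lies in $\clarke{N}(x)$, yielding $\hat{L}(N)(x) \subseteq \clarke{N}(x)$ and hence, with~\eqref{eq:Clarke_subsets}, equality of all three.

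The main obstacle I anticipate is the independence step for deep networks: for a single hidden layer the vertex gradients and their transversal realisation are immediate, but when a boundary neuron in a later layer is fed by earlier $\ReLU$s, one must verify by induction over the layers that switching the $\absn{S}$ active neurons can still be performed independently in a neighbourhood of $x$, and that the chain-rule product contributes nothing beyond the interval hull of the realised gradients. This is exactly where Definition~\ref{def:ReLU_General_Position}, applied to \emph{every} subset $S' \subseteq k$ and not merely to $S$, is required, and where the combinatorial bookkeeping of which activation patterns are attainable near $x$ must be carried out with care.
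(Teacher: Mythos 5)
Your overall strategy---reduce to the reverse inclusions via~\eqref{eq:Clarke_subsets}, dispose of the generic case where $x$ lies off every $H_i$ of~\eqref{eq:def_H_i}, and at a boundary point argue that general position forces all $2^{\absn{S}}$ activation patterns to occur on full-dimensional cells in every neighbourhood of $x$, so that each vertex gradient lands in $\clarke{N}(x)$ by~\eqref{eq:clarke_fin_dim_charac}---is far more detailed than the paper's own proof, which simply invokes \parencite[Theorem~2]{Jordan_Dimakis:Exactly_NeurIPS:2020} for the outer equality and closes the sandwich with~\eqref{eq:Clarke_subsets}. The realisation step you describe is indeed the substance of that cited theorem, and you are right that the multi-layer case needs an induction over layers rather than a one-layer transversality remark.

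The genuine gap is in your last inference: from ``every vertex gradient belongs to $\clarke{N}(x)$'' together with convexity of $\clarke{N}(x)$ you conclude that the \emph{interval hull} of the vertex gradients is contained in $\clarke{N}(x)$. Convexity delivers only their \emph{convex hull}, and the interval hull is in general strictly larger. Concretely, take $N(x_1,x_2) = \ReLU(x_1+x_2) + \ReLU(x_1-x_2)$, which is in general position; at the origin the four vertex gradients are $(0,0)$, $(1,1)$, $(1,-1)$ and $(2,0)$, whose convex hull---a zonotope, and by~\eqref{eq:clarke_fin_dim_charac} exactly $\clarke{N}(0)$---does not contain $(0,1)$, whereas the interval chain rule gives $\tilde{L}(N)(\set{0}) = [0,2]\times[-1,1] \ni (0,1)$. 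So the inclusion $\tilde{L}(N)(\set{x}) \subseteq \clarke{N}(x)$ cannot be obtained by this route: the discrepancy is precisely the wrapping effect of Section~\ref{subsubsec:wrapping_effect}, and what \parencite[Theorem~2]{Jordan_Dimakis:Exactly_NeurIPS:2020} actually supplies is equality of $\clarke{N}(x)$ with the \emph{set-valued} matrix product in which each boundary neuron's slope ranges over $[0,1]$ inside the product (the convex hull of the vertex gradients), not with its hyperbox envelope. Your attempt correctly locates where the content of the lemma must lie, but the step from convex hull to interval hull is exactly the point that fails on this example; any honest proof has to either restrict to the quantities that are insensitive to the wrapping (such as coordinatewise extrema of the gradient set) or confront this discrepancy directly.
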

\begin{proof}  
  The fact that
  $\forall x \in \R^n:\clarke{N}(x) = \tilde{L}(N)(\set{x})$
  follows
  from~\parencite[Theorem~2]{Jordan_Dimakis:Exactly_NeurIPS:2020}. The
  full claim now follows from~\eqref{eq:Clarke_subsets}.
\end{proof}

   As a result, we have an effective procedure which, for any given
   $\ReLU$ network $N : [-M,M]^n \to \R$ in general position, and any
   $X \in B_{\kCPPO{[-M,M]^n}}$, returns the Lipschitz constant of $N$
   over $X$, and the set of points in $X$ where the maximum Lipschitz
   values are attained, to within any degree of accuracy:

\begin{theorem}
  \label{thm:gen_pos_ReLU}
  Let $N : [-M,M]^n \to \R$ be a general position $\ReLU$ network and
  $X \in B_{\kCPPO{[-M,M]^n}}$. Then:
  \begin{equation*}
    \left\{
      \arrayoptions{0.5ex}{1.2}
      \begin{array}{lcl}
        \maxV(\clarke{N},X) &= &\maxV(\tilde{L}(N), X),\\
        \maxS(\clarke{N},X) &= &\maxS(\tilde{L}(N), X),
      \end{array}
    \right.
  \end{equation*}
  \noindent
  and both the maximum value and the maximum set can be obtained
  effectively to any degree of accuracy.
\end{theorem}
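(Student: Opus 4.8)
The plan is to read both equalities straight off Lemma~\ref{lemma:no_loss_of_precision_gen_pos} and then obtain the effectiveness clause from Theorems~\ref{thm:completeness} and~\ref{thm:computability}. First I would fix the reading of the statement: since $\maxV$ and $\maxS$ act on functions valued in $\intvaldom[\R^1\lift]$ (see~\eqref{eq:maxV_maxS}), whereas $\clarke{N}(x)$ and $\tilde{L}(N)(\set{x})$ are subsets of $\R^n$, the expressions $\maxV(\clarke{N},X)$ and $\maxV(\tilde{L}(N),X)$ are to be understood through the norm functions $x \mapsto \norm{\clarke{N}(x)}_1$ and $x \mapsto \norm{\tilde{L}(N)(\set{x})}_1$, in keeping with the overall strategy of maximizing $\norm{\tilde{L}(N)}_1$.

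For the two equalities I would proceed as follows. By Lemma~\ref{lemma:no_loss_of_precision_gen_pos}, for every $x \in \R^n$ (in particular every $x \in X$) we have $\clarke{N}(x) = \tilde{L}(N)(\set{x})$ as subsets of $\R^n$; note that this forces the Clarke-gradient $\clarke{N}(x)$ to be a hyperbox. Consequently $\norm{\cdot}_1$ returns the same interval from either side, so the functions $\norm{\clarke{N}}_1$ and $\norm{\tilde{L}(N)}_1$ agree at every point of $X$. Since, by their definitions in~\eqref{eq:maxV_maxS} and the convention for arguments of type $\intvaldom[\R^n\lift] \Rightarrow \intvaldom[\R^1\lift]$, both $\maxV(\cdot,X)$ and $\maxS(\cdot,X)$ depend only on the restriction of their argument to the maximal elements $X$, the two claimed equalities follow at once.

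For the effectiveness clause I would assemble the computability of $\norm{\tilde{L}(N)}_1$ from computable building blocks. The map $\tilde{L}(N)$ is produced by the chain rule of Lemma~\ref{lem:chain_Rule_hat_L} applied to the closed-form expression of $N$, using the computable domain-theoretic Clarke-gradients of the activation functions together with computable arithmetic operators; composing with the Scott-continuous and computable norm $\norm{\cdot}_1$ yields a computable element of $\intvaldom[{[-M,M]^n}] \Rightarrow \intvaldom[\R^1\lift]$. For a rational $X \in B_{\kCPPO{[-M,M]^n}}$, Theorem~\ref{thm:computability} makes the functionals $\maxV(\cdot,X)$ and $\maxS(\cdot,X)$ computable, so applying them to this computable argument produces computable results; Theorem~\ref{thm:completeness} then guarantees that Algorithm~\ref{alg:interval_max} converges to these values, whence both the maximum value and the maximum set can be obtained to any prescribed accuracy.

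The main obstacle is administrative rather than mathematical: all the genuine content is already packaged in Lemma~\ref{lemma:no_loss_of_precision_gen_pos}, which in turn rests on the general-position analysis and the inclusions~\eqref{eq:Clarke_subsets}. The only point requiring real care is the norm identification, i.e.\ checking that equality of the gradient sets survives passage to $\norm{\cdot}_1$. This is immediate once the lemma forces $\clarke{N}(x)$ to be a hyperbox, since for a hyperbox $\prod_{i=1}^n I_i$ the box-norm $\sum_{i=1}^n \absn{I_i}$ coincides with $\setbarNormal{\norm{\xi}_1}{\xi \in \prod_{i=1}^n I_i}$, the $\norm{\cdot}_1$-norm of the Clarke-gradient viewed as a set. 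After this, the theorem is obtained by combining the pointwise equality with the completeness and computability results applied to the correct computable argument function.
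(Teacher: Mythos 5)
Your proposal is correct and follows essentially the same route as the paper, whose proof is the one-line citation of Lemma~\ref{lemma:no_loss_of_precision_gen_pos}, Theorem~\ref{thm:completeness}, and Theorem~\ref{thm:computability}. Your additional care in resolving the implicit type coercion (passing from the gradient sets to $\norm{\cdot}_1$-valued interval functions before applying $\maxV$ and $\maxS$) is a sensible elaboration the paper leaves tacit, but it does not change the argument.
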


\begin{proof}
  The result follows from
  Lemma~\ref{lemma:no_loss_of_precision_gen_pos},
  Theorem~\ref{thm:completeness}, and Theorem~\ref{thm:computability}.
\end{proof}

The network of Figure~\ref{fig:dependency-zero_NN} is not in general
position, and we have shown that
Lemma~\ref{lemma:no_loss_of_precision_gen_pos} fails on that
network. With respect to the Lebesgue measure over the parameter
space, however, almost every $\ReLU$ network is in general
position~\parencite[Theorem~4]{Jordan_Dimakis:Exactly_NeurIPS:2020}. In
simple terms, \emph{we have a sound and effective procedure for
  Lipschitz analysis of networks, which provides the results to within
  any given degree of accuracy on almost every $\ReLU$ network}.

 
\subsection{Differentiable Networks}
\label{subsec:differentiable_networks}

One of the main strengths of our domain-theoretic framework is that it
can handle differentiable and non-differentiable networks alike
seamlessly. For differentiable networks as well, we obtain results
similar to those for $\ReLU$ networks in general position.

We know that when $N$ is classically differentiable at
$x \in [-M,M]^n$, the $L$-derivative and the classical derivative
coincide~\parencite{Edalat:2008:Continuous_Derivative}. Furthermore, for
classically differentiable functions, the chain rules of
Proposition~\ref{prop:Clarke_chain_rule} and
Lemma~\ref{lem:chain_Rule_hat_L} reduce to equalities which incur no
overapproximation, because both sides are single-valued. As a result,
we obtain:
\begin{lemma}
  \label{lemma:no_loss_of_precision_differentiable}
  Let $N : [-M,M]^n \to \R$ be a differentiable feedforward network in
  ${\cal F}$. Then:
  \begin{equation*}
    \forall x \in [-M,M]^n:\quad \clarke{N}(x) = \hat{L}(N)(x) =
    \tilde{L}(N)(\set{x}). 
  \end{equation*}
  \noindent
  In other words, $\hat{L}(N)$ is an interval extension of
  $\clarke{N}$.
\end{lemma}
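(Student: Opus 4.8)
The plan is to prove the single reverse inclusion $\tilde{L}(N)(\set{x}) \subseteq \clarke{N}(x)$; combined with the chain of inclusions~\eqref{eq:Clarke_subsets}, this immediately forces $\clarke{N}(x) = \hat{L}(N)(x) = \tilde{L}(N)(\set{x})$, and the final clause (that $\hat{L}(N)$ is an interval extension of $\clarke{N}$) then follows from Definition~\ref{def:ext_canonical_ext}, since all three quantities agree with a single point whose hyperbox closure is itself. To begin, I would pin down the left-hand side: because $N$ is differentiable at $x$, the classical derivative $N'(x)$ exists and the $L$-derivative coincides with it, so $L(N)(x) = \set{N'(x)}$ is a singleton. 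By Theorem~\ref{thm:L_deriv_Clarke_coincide} we have $L(N)(x) = \clarke{N}(x)$, hence $\clarke{N}(x) = \set{N'(x)}$ is a single point of $\R^n$.

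Next I would compute $\tilde{L}(N)(\set{x})$ explicitly along the layer-by-layer form~\eqref{eq:layer_by_layer}. Evaluating at the maximal element $\set{x}$, every intermediate quantity $\hat{Z}_i(x)$ and $Z_i(x)$ is an ordinary real vector, that is, a degenerate hyperbox. Since each activation $\sigma_i$ is differentiable, the same reasoning as above shows that $\hat{L}(\sigma_i)(\hat{Z}_i(x))$ is the classical, single-point Jacobian of $\sigma_i$ at $\hat{Z}_i(x)$, while each affine layer contributes the constant weight matrix. Thus every factor entering the chain rule of Lemma~\ref{lem:chain_Rule_hat_L} is a degenerate interval matrix. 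The key observation is that interval matrix multiplication restricted to such single-point matrices is exact: neither the dependency problem nor the wrapping effect introduces any overapproximation on degenerate intervals, so the (reverse-inclusion) inequality of Lemma~\ref{lem:chain_Rule_hat_L} collapses to an equality, and the chain-rule product equals the product of the classical Jacobians. By the classical chain rule this product is exactly $N'(x)$, whence $\tilde{L}(N)(\set{x}) = \set{N'(x)} = \clarke{N}(x)$, which is the required reverse inclusion.

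The main obstacle is the first step, namely guaranteeing that $\clarke{N}(x)$ really is a singleton. This hinges on the coincidence of the $L$-derivative with the classical derivative at a point of differentiability, together with Theorem~\ref{thm:L_deriv_Clarke_coincide}; one must be careful that \emph{differentiable} is understood in the sense for which this coincidence holds. For the smooth activation functions of interest the whole network is $C^1$, so this point is unproblematic, and the argument extends componentwise from each differentiable $\sigma_i$ to the entire network. Once $\clarke{N}(x)$ is known to be a point, the remaining work is the routine bookkeeping of verifying that degenerate intervals propagate exactly through the affine maps and the differentiable activations.
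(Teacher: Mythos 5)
Your proposal is correct and follows essentially the same route as the paper: the paper's justification (given in the paragraph preceding the lemma) is precisely that at points of classical differentiability the $L$-derivative, and hence the Clarke-gradient, collapses to the singleton classical derivative, and that the chain rules of Proposition~\ref{prop:Clarke_chain_rule} and Lemma~\ref{lem:chain_Rule_hat_L} become exact equalities because every factor is single-valued, so the sandwich~\eqref{eq:Clarke_subsets} closes up. Your layer-by-layer verification that degenerate interval matrices propagate without overapproximation is just a more explicit rendering of the same argument, and you rightly flag the one point both arguments lean on implicitly --- that ``differentiable network'' must be read as each activation $\sigma_i$ being (continuously) differentiable, since a merely differentiable composite (cf.\ Figure~\ref{fig:dependency-zero_NN}) would not suffice.
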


\begin{theorem}
  \label{thm:differentiable_networks}
  Let $N : [-M,M]^n \to \R$ be a differentiable feedforward network in
  ${\cal F}$. Then:
  \begin{equation*}
    \left\{
      \arrayoptions{0.5ex}{1.2}
      \begin{array}{lcl}
        \maxV(\clarke{N},X) &= &\maxV(\tilde{L}(N), X),\\
        \maxS(\clarke{N},X) &= &\maxS(\tilde{L}(N), X),
      \end{array}
    \right.
  \end{equation*}
  \noindent
  and both the maximum value and the maximum set can be obtained
  effectively to any degree of accuracy.
\end{theorem}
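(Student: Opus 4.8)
The plan is to mirror the proof of Theorem~\ref{thm:gen_pos_ReLU}, since the two statements are structurally identical and only the hypothesis guaranteeing no loss of precision changes, from general position to differentiability. The single substantive input is Lemma~\ref{lemma:no_loss_of_precision_differentiable}, which asserts the pointwise identity $\clarke{N}(x) = \hat{L}(N)(x) = \tilde{L}(N)(\set{x})$ for every $x \in [-M,M]^n$. The remaining work is purely to transport this pointwise agreement through the $\maxV$ and $\maxS$ functionals and then to invoke the already-established completeness and computability results.

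First I would observe that, by definition~\eqref{eq:maxV_maxS}, both $\maxV$ and $\maxS$ depend only on the restriction of their interval-function argument to the maximal elements $\R^n$ (equivalently, to the points of $[-M,M]^n$). By Lemma~\ref{lemma:no_loss_of_precision_differentiable}, the interval functions associated with $\clarke{N}$ and with $\tilde{L}(N)$ coincide at every such maximal element: both reduce to the singleton $\set{\nabla N(x)}$, so that, after composing with the gradient norm $\norm{\cdot}_1$ (as in the overall strategy of Section~\ref{subsec:Maximization_algorithm}), they yield the identical scalar interval function $x \mapsto \norm{\nabla N(x)}_1$ on $[-M,M]^n$. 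Since $\maxV$ and $\maxS$ are computed from exactly this restricted data, the two displayed equalities $\maxV(\clarke{N},X) = \maxV(\tilde{L}(N),X)$ and $\maxS(\clarke{N},X) = \maxS(\tilde{L}(N),X)$ follow immediately, with compactness of the maximum set guaranteed by Proposition~\ref{prop:maxS_compact}.

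For the effectivity claim, I would note that $\tilde{L}(N)$ is Scott-continuous, being assembled from the computable domain-theoretic extensions of the activation functions and their derivatives via the chain rule of Lemma~\ref{lem:chain_Rule_hat_L}. Hence Theorem~\ref{thm:completeness} applies and shows that Algorithm~\ref{alg:interval_max}, run on $\norm{\tilde{L}(N)}_1$, produces enclosures of $\maxV(\tilde{L}(N),X)$ and $\maxS(\tilde{L}(N),X)$ converging to the exact values as $m_{\iota} \to \infty$, $m_b \to \infty$, and $\delta \to 0$. Theorem~\ref{thm:computability} then guarantees that these quantities are computable for $X \in B_{\kCPPO{[-M,M]^n}}$. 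Combining this with the equalities of the previous paragraph yields that $\maxV(\clarke{N},X)$ and $\maxS(\clarke{N},X)$ can themselves be obtained effectively to any degree of accuracy.

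I do not expect a genuine obstacle at the level of this theorem: once Lemma~\ref{lemma:no_loss_of_precision_differentiable} is granted, the result is a direct corollary, exactly as Theorem~\ref{thm:gen_pos_ReLU} is. The real content sits in the lemma, whose justification rests on the fact that at a point of classical differentiability the Clarke-gradient and the $L$-derivative are single-valued, so that the inclusion chain rules of Proposition~\ref{prop:Clarke_chain_rule} and Lemma~\ref{lem:chain_Rule_hat_L} collapse to genuine equalities and the interval overapproximations in~\eqref{eq:Clarke_subsets} become tight. If anything requires care, it is confirming that differentiability of $N$ propagates through the layer-by-layer construction~\eqref{eq:layer_by_layer} so that each composite appearing in the chain rule is itself differentiable; this is precisely where the assumption $N \in {\cal F}$ with differentiable activations is used.
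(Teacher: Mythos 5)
Your proposal is correct and follows exactly the route the paper intends: the paper leaves this theorem without an explicit proof, but its twin, Theorem~\ref{thm:gen_pos_ReLU}, is proved verbatim by citing the corresponding no-loss-of-precision lemma together with Theorem~\ref{thm:completeness} and Theorem~\ref{thm:computability}, which is precisely your argument with Lemma~\ref{lemma:no_loss_of_precision_differentiable} in place of Lemma~\ref{lemma:no_loss_of_precision_gen_pos}. Your added observations (that $\maxV$ and $\maxS$ depend only on the restriction to maximal elements, and that the substantive content lives in the lemma) are accurate and consistent with the paper.
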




\section{Experiments}
\label{sec:experiments}

We have implemented our method for estimation of Lipschitz constant of
neural networks---which we call \emph{lipDT}---using two interval
libraries: the arbitrary-precision C++ library
MPFI~\parencite{Revol_Rouillier:MPFI:05}, and the fixed precision
Python library
PyInterval.\footnote{\url{https://github.com/taschini/pyinterval}} The
latter is used for fair time comparison with the other methods in
Tables~\ref{table:Experiment1} and \ref{table:mnist}, which are all in
Python, {\ie}, ZLip~\parencite{Jordan_Dimakis:Provable_ICML:2021},
lipMIP~\parencite{Jordan_Dimakis:Exactly_NeurIPS:2020},
FastLip~\parencite{Weng_et_al:Towards_FastLip:2018},
SeqLip~\parencite{Virmaux_Scaman:Lipschitz_regularity:2018}, and
\ac{CLEVER}~\parencite{Weng_et_al-CLEVER-ICLR:2018}. All the
experiments have been conducted on a machine with an AMD Ryzen 75800H
3.2GHz 8-Core processor and 16GB RAM under Ubuntu 20.04 LTS
environment. The source code is available on
GitHub.\footnote{\url{https://github.com/Can-ZHOU/Theoretic-Robustness-NN}}

The main characteristic of our method is that it is fully
validated. We begin by demonstrating how our method can be used for
verification of state of the art methods from the literature through
some experiments on very small networks. If the state of the art
methods turn out to be prone to inaccuracies over small networks, then
they most likely are prone to errors over large networks as well.

Whenever a network has $m$ output neurons, we only consider the first
output neuron for analyses. We need to consider networks with more
than one output to avoid altering the methods from the literature which
require more than one output. For instance, in Experiment~1, we
consider a randomly generated $\ReLU$ network of layer sizes
$[2,2,2]$, with an input domain straddling the intersection of the
hyperplanes $H_1$ and $H_2$, as defined in~\eqref{eq:def_H_i}.

\begin{table*}[t]
\centering
\caption{Experiment 1: Random, $\ReLU$, Layer sizes = [2, 2, 2]. Our method is called lipDT, which is the only case where
    both the method and the implementations are validated.}
\begin{tabular}{|c|c|c|c|c|}
\hline
Guarantee & Method                & Estimate of Lip. Constant                               & Time (ms)
  & Rel. Err. \\ \hline \hline
\multirow{4}{*}{Fully validated} & \textbf{lipDT (C++)}  &  \makecell*[c]{[0.32270569085905976,\\
  0.32270569085905976]} & 8.2890 & \textbf{0.00\%}    \\ \cline{2-5}
& \textbf{lipDT (Python)}  &  \makecell*[c]{[0.32270569085905976,\\
  0.32270569085905976]} & 338.3580 & \textbf{0.00\%}    \\ \hline  \hline
  \multirow{3}{*}{\parbox{3cm}{Method validated, implementation
  non-validated}}  &   
ZLip      & 0.041121020913124084               & 2.5075   &
                                                                   --87.26\% \\ \cline{2-5}
  & lipMIP            & 0.041121020913124084               & 10.0348     & --87.26\%\\ \cline{2-5}
  & FastLip         & 0.041121020913124084				 &
                                                                   0.8451
  & --87.26\% \\ \hline \hline
\multirow{2}{*}{Non-validated} &  SeqLip          & 0.2129872590303421
                                                     & 1.9632 &
                                                                --34.00\%
  \\ \cline{2-5}
  & CLEVER          & 0.2815846800804138				 & 1054.1022 & --12.74\%\\ \hline
\end{tabular}
\label{table:Experiment1}
\end{table*}

Assume that $z = [\lep[z],\uep[z]]$ is the interval returned by
lipDT. For each of the other methods, if $r$ is the returned estimate
of the Lipschitz constant, we calculate the relative error as follows:
\begin{equation}
  \label{eq:rel_error}
  \text{relative error} \defeq \left\{
    \begin{array}{ll}
      \arrayoptions{0.5ex}{1.3}
      \frac{r - \lep[z]}{\lep[z]}, & \text{if } r < \lep[z],\\
      0, & \text{if } \lep[z] \leq r \leq \uep[z],\\      
      \frac{r - \uep[z]}{\uep[z]},& \text{if } \uep[z] < r.
    \end{array}
  \right.
\end{equation}
\noindent
As such, if the relative error is negative, then the value $r$ is
definitely an underapproximation of the true Lipschitz constant, and
when the relative error is positive, then $r$ is definitely an
overapproximation of the true Lipschitz constant. As reported in
Tables~\ref{table:Experiment1} and \ref{table:mnist}, although lipMIP,
ZLip, and FastLip are based on validated methods, the implementations
are not validated due to floating-point errors. In
Table~\ref{table:Experiment1}, the estimates of the Lipschitz constant
returned by all the three methods are exactly the same, which
demonstrates that, for the input domain of Experiment 1, these methods
all follow the same computation path, which is subject to the same
floating-point errors.

For the randomly generated network of Experiment 1, the weights and
biases are shown in Figure~\ref{fig:weights_and_biases_Exp_01}. Based
on these values, we calculate an input domain which straddles the
intersection of the hyperplanes $H_1$ and $H_2$. The center and radius
of the input domain are also given in
Figure~\ref{fig:weights_and_biases_Exp_01}.
\begin{figure*}[t]
  \centering
  Weights $W_1$ from input neurons to the hidden layer:
\begin{equation*}
  W_1 =
  \begin{bmatrix}
    -0.5323450565338134766 & 0.2982044517993927002\\
    -0.4367777407169342041 & 0.1963265985250473022
  \end{bmatrix}  
\end{equation*}

Biases $b_1$ of the hidden layer:
\begin{equation*}
  b_1 =
  \begin{bmatrix}
  -0.3763356208801269531&  -0.6647928357124328613  
  \end{bmatrix}
\end{equation*}

Weights $W_2$ from the hidden layer to the output layer:
\begin{equation*}
  W_2 =
  \begin{bmatrix}
    -0.3885447978973388672 & 0.4447681903839111328\\
   -0.3205857872962951660 & 0.2034754604101181030
  \end{bmatrix}  
\end{equation*}

Input domain:
\begin{equation*}
\text{center} = [x_0,x_1] = [-4.832202221268014242,
-7.364287590384273940], \quad \text{radius} = 10^{-7}
\end{equation*}

\caption{Weights, biases, and the input domain of the network of
  Experiment 1.}
  \label{fig:weights_and_biases_Exp_01}
\end{figure*}

For lipDT to produce the results of Experiment 1, it had to perform
five iterations of the main loop of
Algorithm~\ref{alg:interval_max}. In each iteration, the most
important operation is that of the bisection, carried out in
line~\ref{alg:line:bisect} of
Algorithm~\ref{alg:interval_max}. Table~\ref{tab:exp_01_bisections}
shows the dynamics of the computation through successive bisections.

\label{page:lipDT_bisection}

\begin{table*}
  \centering
  \begin{tabular}{|c|c|} \hline
    Number of Bisections & Interval enclosure of the Lipschitz constant \\ \hline \hline
    0 & [0.0, 0.32270569085905976]\\ \hline
    1 & [0.04112101957020187, 0.32270569085905976]\\ \hline
    2 & [0.04112101957020187, 0.32270569085905976] \\ \hline
    3 & [0.04112101957020187, 0.32270569085905976]\\ \hline
    4 &  [0.04112101957020187, 0.32270569085905976]\\ \hline
    5 & [0.32270569085905976, 0.32270569085905976]    \\ \hline
  \end{tabular}
  \caption[Results of each bisection]{The results of lipDT on
    Experiment 1 for the first 5 bisections.}
  \label{tab:exp_01_bisections}
\end{table*}

The effects of the floating-point errors become less significant as
the radius of the input domain is increased, as demonstrated in
Figure~\ref{fig:accuracy_comparison_radius}. We took the same center
as given in Figure~\ref{fig:weights_and_biases_Exp_01}, but changed
the radius of the input domain from $10^{-7}$ to $1$, and compared the
results returned by all the methods. ZLip, lipMIP, and FastLip give
exactly the same results. Hence, we only draw the results for ZLip,
SeqLip, \ac{CLEVER}, and lipDT. As can be seen, \ac{CLEVER} and
ZLip---and as a consequence, the other methods except for SeqLip---all
catch up with lipDT eventually when the radius of the domain is large
enough.

\begin{figure}[t]
  	\centering
	\scalebox{0.2}[0.18]{\includegraphics{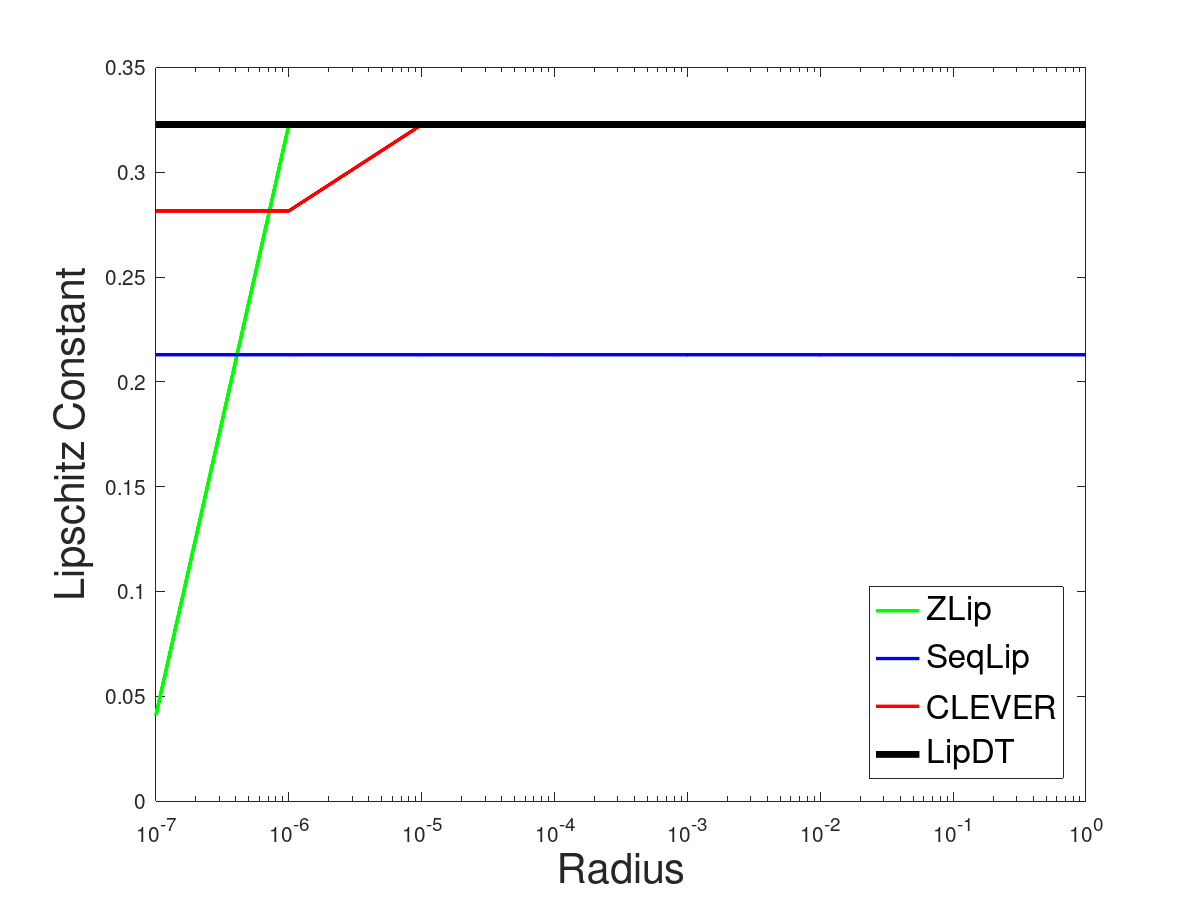}} 
	\caption{For very small input domains, lipDT is the only
          method which returns the correct result.}
	\label{fig:accuracy_comparison_radius}
\end{figure}

\subsection{Scalability: input dimension}
\label{subsec:scalability:input_dim}

When no bisections are required, lipDT can be useful for analysis of
larger networks as well, as shown in Experiment 2 on the MNIST dataset
(Table~\ref{table:mnist}). We consider a $\ReLU$ network of size
$[784, 10, 10, 2]$, only focusing on the digits $1$ and $7$ in the
output. The radius of the input domain is $0.001$. For this
experiment, the input domain is not situated on the intersection of
the hyperplanes, because lipDT would require bisecting along 784
dimensions, resulting in $2^{784}$ new hyperboxes in each bisection,
which is prohibitively costly. Nonetheless, we observe that, even
without any bisection, lipDT maintains correctness, while ZLip,
lipMIP, and FastLip still underestimate the Lipschitz constant, albeit
with small relative errors.

\begin{table*}[t]
\centering
\caption{Experiment 2: MNIST, $\ReLU$, Layer sizes = [784, 10, 10, 2].}
\begin{tabular}{|c|c|c|c|c|}
\hline
Guarantee & Method                & Estimate of Lip. Constant                                     &
                                                                    Time (ms)      & Rel. Err.\\ \hline \hline
\multirow{4}{*}{Fully validated} & \textbf{lipDT C++} & \makecell*[c]{[0.34815343269413546,\\ 0.34815343269413546]}  & 164.0400 & \textbf{0.00\%}\\ \cline{2-5}
& \textbf{lipDT Python} & \makecell*[c]{[0.34815343269412385,\\
  0.3481534326941471]}  & 2388.6251 & \textbf{0.00\%}\\ \hline \hline
    \multirow{3}{*}{\parbox{3cm}{Method validated, implementation
  non-validated}}  &  ZLip          & 0.34815341234207153
                                                                              &
                                                                                7.1895   & --5.85e-06\%\\ \cline{2-5}
  & lipMIP            & 0.3481534055299562                       & 33.1788
                                                                               &
                                                                                 --7.80e-06\% \\ \cline{2-5}
  & 
FastLip         & 0.34815341234207153
                                                                  &
                                                                    0.8844
                                                                               & --5.85e-06\%\\ \hline \hline
  \multirow{2}{*}{Non-validated} &
SeqLip          & 0.670467323694055
                                                                              &
                                                                                4.9305    & +92.58\%\\ \cline{2-5}
  & CLEVER          & 0.3481535315513611						& 4308.8220 & +2.84e-05\%\\ \hline
\end{tabular}
\label{table:mnist}
\end{table*}

For a network with input dimension $n$, each bisection generates $2^n$
new hyperboxes. As such, under favorable conditions, our method is
applicable to networks with input dimensions up to 10 or so. It is
also possible to obtain more accurate results on networks with higher
input dimensions using \emph{influence
  analysis}~\parencite{Wang_et_al:symbolic_intervals:2018}. This can
be achieved by a slight modification of
Algorithm~\ref{alg:interval_max}. The influence analysis technique,
however, is only applicable on differentiable networks. Furthermore,
this technique provides tighter bounds on the Lipschitz constant in a
reasonable time, but may still provide a loose bound around the
maximum set, {\ie}, the set where maximum Lipschitz constant is
attained. Another simple way of increasing efficiency is through
parallelization. The reason is that the algorithm is essentially
performing a search. Once several hyperboxes are generated, they can
be searched in a perfectly parallel way.

\subsection{Scalability: depth and width}
\label{subsec:scalability:depth_width}

As we have mentioned previously, validated methods of Lipschitz
estimation (such as lipDT) cannot be scalable. Nonetheless, such
methods are valuable for purposes such as:

\begin{enumerate}[label=(\roman*)]
\item verficiation of other Lipschitz estimation methods, as
  demonstrated in Table~\ref{table:Experiment1};

\item robustness analysis of low-dimensional networks, especially
  those that appear in safety-critical applications, {\eg}, in power
  electronics~\parencite{Dragicevic_et_al:AI_Aided_PowElec:2019},
  analog-to-digital converters~\parencite{Tankimanova:NN_ADC:2018}, and
  solution of \acp{PDE}~\parencite{Rudd:PDE_NN:PhD_Thesis:2013};

\item \label{item:lipDT_under_over_param} investigation of how
  Lipschitz constant is affected by factors such as
  under-parametrization, over-parametrization, and through successive
  epochs of a training process.
  
\end{enumerate}

Item~\ref{item:lipDT_under_over_param} is particularly relevant for
networks with low input dimensions, but with depths and widths that
can be very large relative to their input dimensions. The reasons is
that, although lipDT does not scale well with input dimension of the
network, it has a much better scalability with respect to the depth
and width of the network.

To investigate how lipDT scales with the number of layers ({\ie},
depth) of the network, we generated some multilayer networks randomly
with 4 input neurons, 2 output neurons, and various number of hidden
layers (from 1 to 10), each containing 10 neurons. We took the input
domain with center $[1,1,1,1]$ and radius $0.001$, and compared the
timing of lipDT with the other methods, except for \acs{CLEVER}, which
takes significantly longer than all the other methods. As can be seen
in Figure~\ref{fig:scalability}, lipDT scales quite well---almost
linearly---with the number of layers of the network.

\begin{figure*}[t]
  \centering
  \scalebox{0.6}[0.7]{\includegraphics{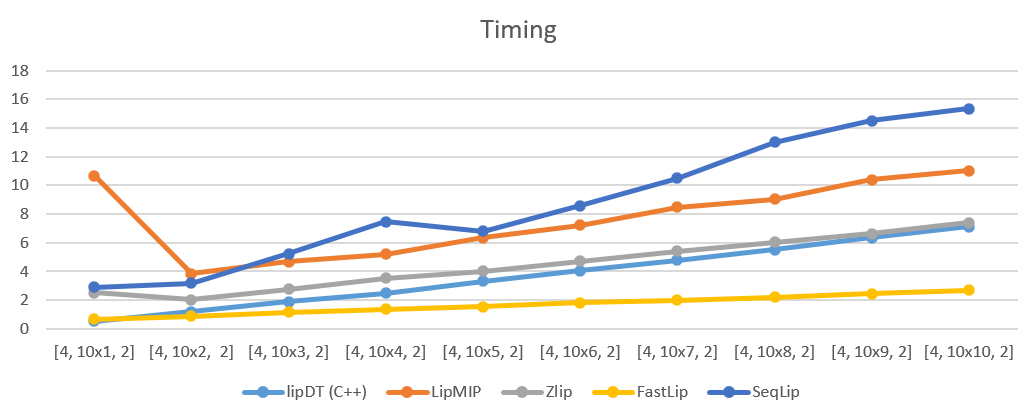}} 
  \caption{lipDT scales (almost) linearly with respect to the number
    of layers.}
  \label{fig:scalability}
\end{figure*}

Similarly, lipDT exhibits an almost linear scaling with respect to the
width of the network. We ran an experiment in which we considered
networks with two input neurons and one hidden layer of width $2^n$,
where $n$ ranges from $1$ to $10$. The weights and biases were
assigned randomly, and to make the comparison fair, we performed the
same number of bisections for each experiment, making sure that the
only contributing factor is the width. We recorded the time, and the
result is shown in Figure~\ref{fig:time_vs_width}.

\begin{figure*}[t]
  \centering
  \scalebox{0.55}[0.5]{\includegraphics{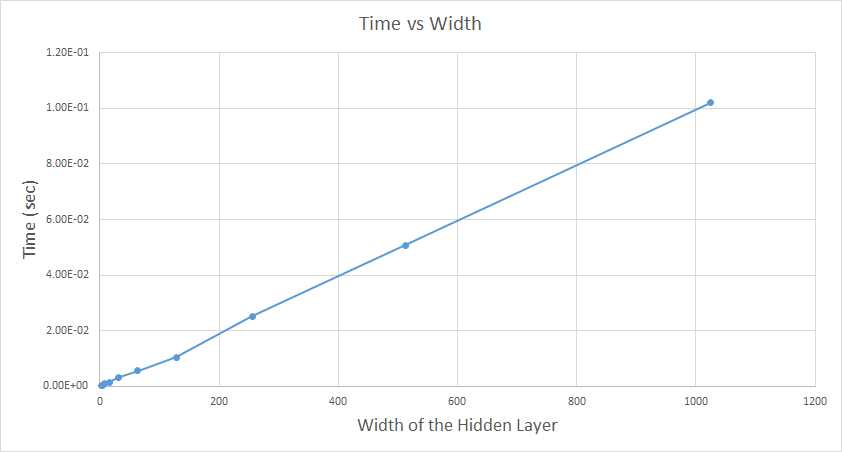}}
  \caption{lipDT scales almost linearly with respect to the width of
    the network.}
  \label{fig:time_vs_width}
\end{figure*}

\subsection{Case study: benign overfitting}
\label{subsec:Case_study_Benign_overfitting}

The traditional consensus in machine learning postulated that
over-parametrized models would have poor generalization
properties. For instance, in a regression task over a dataset of size
$m$, if the capacity $K$ of the model were significantly larger than
$m$, then the model would interpolate the dataset, and would oscillate
erratically over points outside of the dataset. In a fairly recent
development, it has been observed that over-parametrized models can
indeed exhibit superior generalization compared with
under-parametrized ones, even over noisy
data~\parencite{Muthukumar_et_al:Harmless_Interpolation:2020}. An
extensive account of this phenomenon---termed \emph{`benign
  overfitting'}~\parencite{Bartlett_et_al:Benign_overfitting:2020}---can
be found in~\parencite{Belkin:Fit_without_fear:2021}.

In this section, we investigate benign overfitting from the angle of
Lipschitz analysis. To that end, we consider a univariate regression
task similar to that depicted
in~\parencite[Figure~3.6]{Belkin:Fit_without_fear:2021}. The dataset
consists of eleven points over the interval $[-5,5]$. We train a set
of $\ReLU$ networks with one input neuron, one output neuron, and a
hidden layer with width ranging from $3$ to $3000$ (in steps of
$3$). Even though lipDT scales well with respect to the depth of the
network, for this experiment, we consider only networks with one
hidden layer, and take the width of the single hidden layer as the
capacity of the network.

Each network is trained for $100\,000$ epochs. Then, the Lipschitz
constant is calculated over the interval $[-5,5]$ using lipDT. The
result is shown in Figure~\ref{fig:Lip_vs_capacity_200} (Left). To
show the initial dynamics more clearly, the right hand figure shows
capacities only up to $200$. The curve exhibits similar qualitative
dynamics as that of the ``double descent generalization curve''
of~\parencite[Figure~3.5]{Belkin:Fit_without_fear:2021}. The
under-parametrized regime extends from capacity $1$ up to capacity
$42$, during which the Lipschitz constant has an increasing and then
decreasing trend. Maximum Lipschitz constant is attained at capacity
$21$. Interpolation starts at capacity $42$, after which the Lipschitz
constant hovers around $4$ until it flattens after capacity $125$, and
remains flat for the remaining capacities up to $3000$.

\begin{figure*}[t]
  \centering
  \scalebox{0.45}[0.40]{\includegraphics{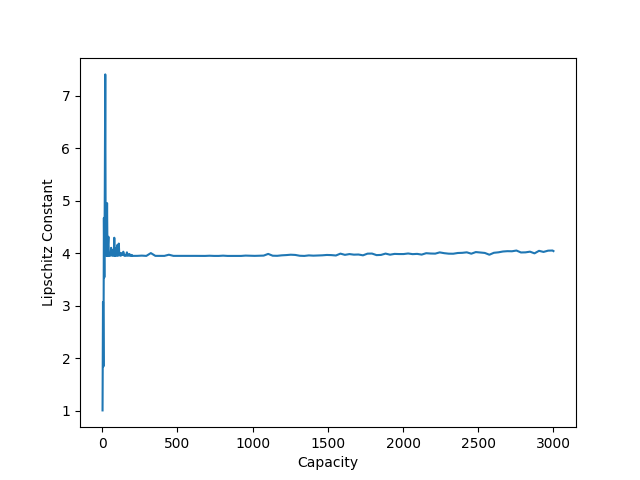}}    
  \scalebox{0.45}[0.40]{\includegraphics{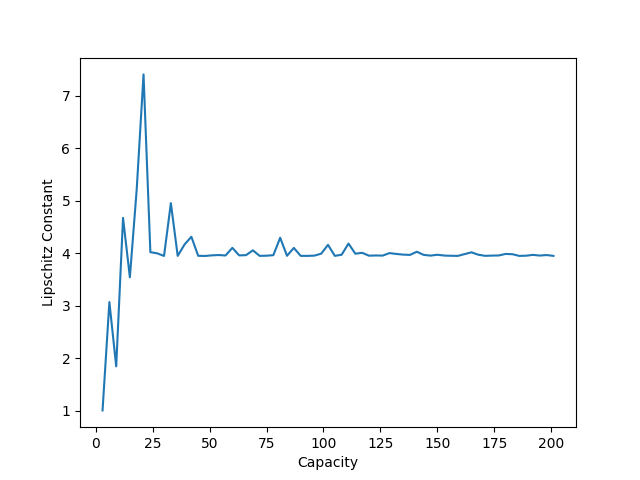}}
  \caption{Lipschitz constant versus capacity. Left: All the
    capacities up to $3000$. Right: Capacities up to $200$. The
    under-parametrized regime extends to capacity $42$, where
    interpolation starts, and after which, the curve remains almost
    flat. Maximum Lipschitz constant is attained at capacity $21$.}
  \label{fig:Lip_vs_capacity_200}
\end{figure*}

Figure~\ref{fig:under_over_param_curves} depicts the output of neural
networks with capacities $3$ and $21$ (under-parametrized), $42$
(interpolation threshold) and $200$ (over-parametrized). The slope of
the linear segment connecting the third and fourth data points (from
left) is roughly $4$. By the mean value theorem, it is clear that once
interpolation starts, the Lipschitz constant may not go below this
slope. As can be seen in Figure~\ref{fig:Lip_vs_capacity_200}, in the
over-parametrized regime, the Lipschitz constant stays very close to
$4$.

We also investigate how the Lipschitz constant is affected through
training. The results are shown in
Figure~\ref{fig:training_Lipschitz}. In particular, when the capacity
is farther away from the interpolation threshold---{\eg}, for
capacities $3$ and $200$---the Lipschitz curve flattens at earlier
epochs. In contrast, for capacities $21$ and $42$, optimization seems
a slower and more demanding process, as the curve begins to flatten at
much later epochs. We point out, however, that the optimization
algorithm is randomized, and the network is initialized
randomly. Hence, the exact dynamics of the Lipschitz constant versus
epoch number---in particular, at what stage the curve begins to
flatten---are subject to the initial weights and biases and the
optimization process.

\begin{figure*}[t]
  \centering
  \scalebox{0.4}[0.3]{\includegraphics{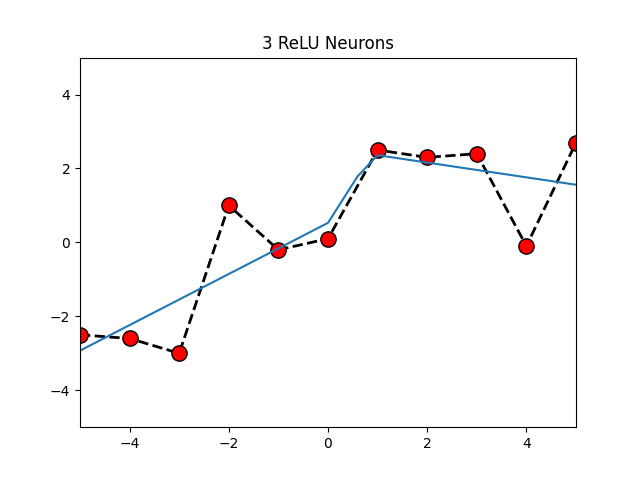}}
  \scalebox{0.4}[0.3]{\includegraphics{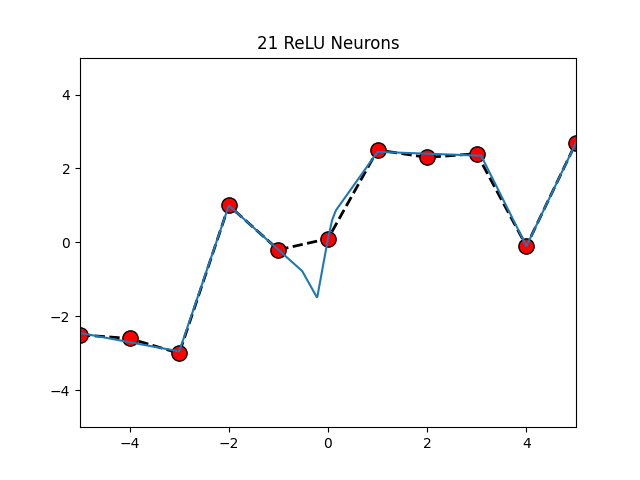}}
  \scalebox{0.4}[0.3]{\includegraphics{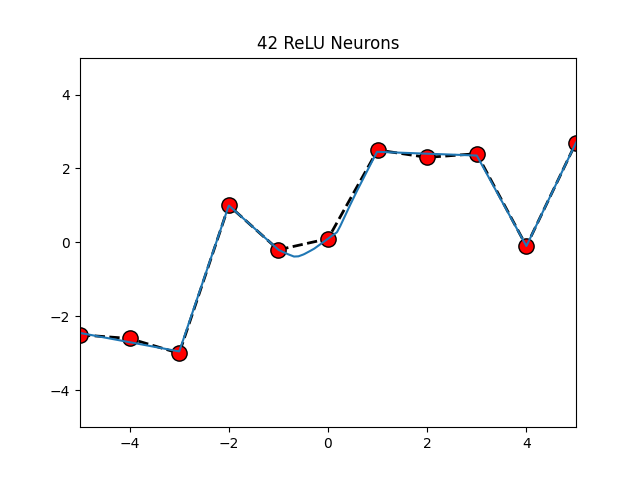}}
  \scalebox{0.4}[0.3]{\includegraphics{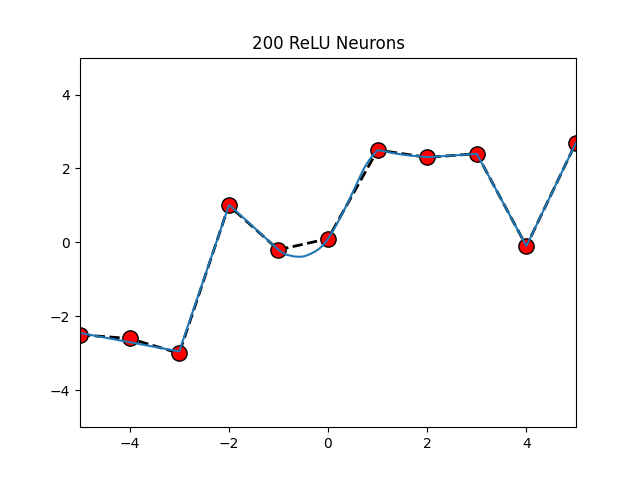}}
  \caption{The dashed black curve is a piecewise linear interpolation
    of the data points. The blue curves show the output of the neural
    networks. Interpolation starts at capacity $42$. The highest
    Lipschitz constant is attained at capacity $21$. Also, see,~\parencite[Figure~3.6]{Belkin:Fit_without_fear:2021}.}
  \label{fig:under_over_param_curves}
\end{figure*}

\begin{figure*}[t]
  \centering
  \scalebox{0.4}[0.3]{\includegraphics{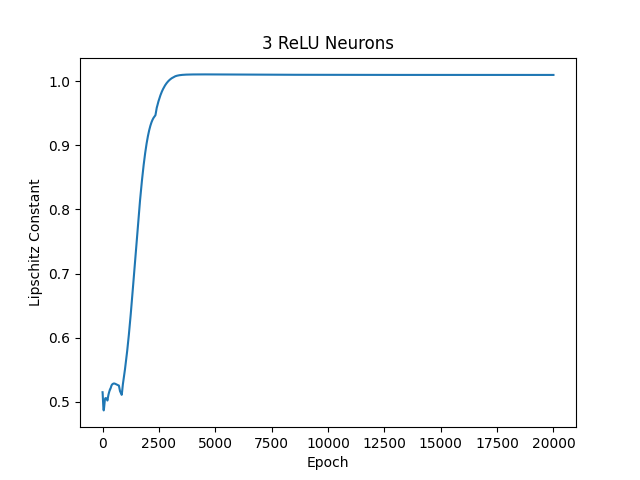}}
  \scalebox{0.4}[0.3]{\includegraphics{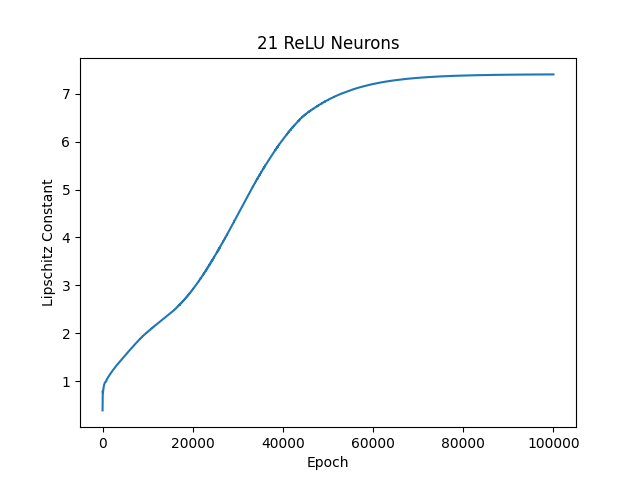}}
  \scalebox{0.4}[0.3]{\includegraphics{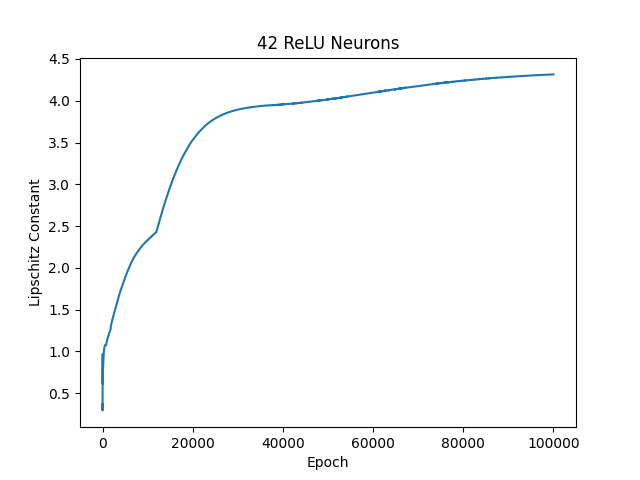}}
  \scalebox{0.4}[0.3]{\includegraphics{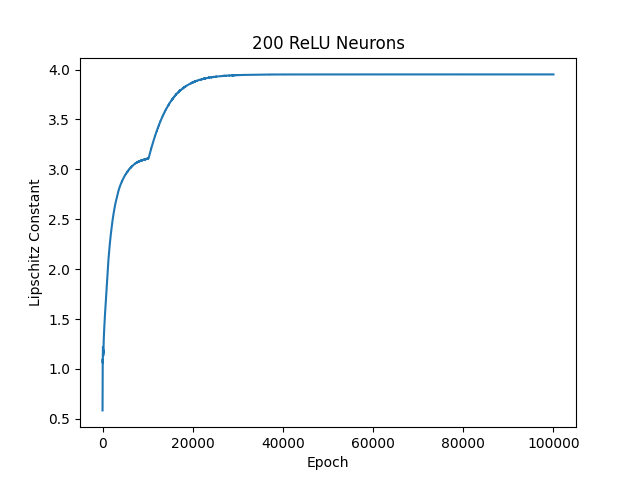}}  
  \caption{Dynamics of the Lipschitz constant through successive
    epochs. The network with three neurons reaches its best loss much
    earlier than $100\,000$ epochs, so we stop the training at
    $20000$ epochs.}
  \label{fig:training_Lipschitz}
\end{figure*}


\subsubsection{Validated versus non-validated Lipschitz analysis}

We ran our experiments using the validated method lipDT, and the
non-validated statistical method \ac{CLEVER}. The Lipschitz estimates
returned by \ac{CLEVER} are almost always a slight under-estimate of
those returned by lipDT. To be more precise, the average relative
error of \ac{CLEVER} is around $-1\%$, calculated according
to~\eqref{eq:rel_error}. In most cases, qualitatively, the results of
\ac{CLEVER} and lipDT are similar. Some outliers, however, were
observed. For instance,
Figure~\ref{fig:benign_clever_wrong_3000}~(Left) shows the graph of
the result returned by \ac{CLEVER} when measuring the Lipschitz
constant of the $\ReLU$ network with $3000$ neurons, up to epoch
$2500$. At epoch $251$, \ac{CLEVER} estimated the Lipschitz constant
to be around $127$, while for all the other epochs, the Lipschitz
constant was estimated below~$4$.

\begin{figure}[t]
  \centering
  \scalebox{0.45}[0.4]{\includegraphics{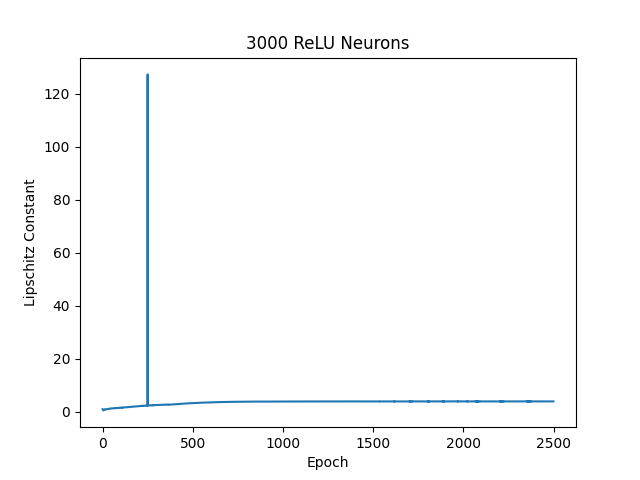}}
  \scalebox{0.45}[0.4]{\includegraphics{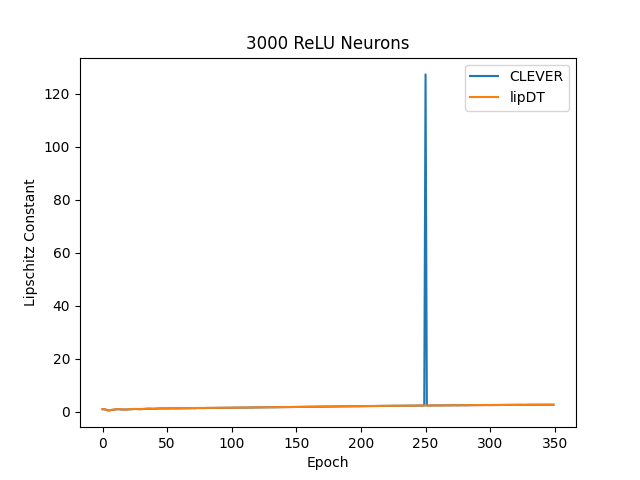}}
  \caption{Left: \ac{CLEVER} returns an outlier at epoch $251$. Right:
    Comparison with lipDT (for the first $350$ epochs) reveals that
    the Lipschitz constant does not have a jump.}
  \label{fig:benign_clever_wrong_3000}
\end{figure}

Without recourse to a validated method, it is not possible to
determine whether the Lipschitz constant truly has a jump at epoch
$251$ or not. The reason is that the commonly used optimization
methods are randomized, which makes it possible to obtain genuine
outliers at certain epochs. Nonetheless, running the same experiment
with lipDT reveals that, at epoch $251$, \ac{CLEVER} indeed provides
the wrong result. The interval returned by lipDT at epoch $251$ is:
\begin{equation*}
[2.445572934050687,2.445572934050933].   
\end{equation*}
Figure~\ref{fig:benign_clever_wrong_3000}~(Right)
depicts the results (for the first $350$ epochs) of the comparison
between lipDT and \ac{CLEVER}.

While in experiments of Table~\ref{table:Experiment1} and
Figure~\ref{fig:accuracy_comparison_radius}, the input domains were
chosen deliberately to be small and to straddle the intersection of
hyperplanes, in the experiment of
Figure~\ref{fig:benign_clever_wrong_3000}, the search was performed
over the entire interval of $[-5,5]$. This demonstrates that
non-validated statistical methods may return misleading results even
when the samples are taken from a large and seemingly non-critical
part of a very low-dimensional input space.


\section{Concluding remarks}
\label{sec:concluding_remarks}

Continuous domains are powerful models of computation and provide a
semantic model for the concept of approximation. We have demonstrated
this claim by presenting a framework for robustness analysis of neural
networks. The main focus in the current article has been laying the
theoretical foundation and obtaining fundamental results on soundness,
completeness, and computability. We also demonstrated how
straightforward it is to translate our algorithms into fully validated
implementations using interval arithmetic. In particular, we devised
an algorithm for validated computation of Lipschitz constants of
feedforward networks, which is sound and complete. Based on the
foundation laid in this paper, our next step will be extending the
framework to \acp{RNN}.

The maximization algorithm of Algorithm~\ref{alg:interval_max} has an
imperative style, which is in line with the common practice in machine
learning. Continuous domains, however, are best known as models of
lambda calculus and it is common to adopt a functional style in
language design for real number
computation~\parencite{Escardo96-tcs,Farjudian:Shrad:2007,DiGianantonio_Edalat-PCDF:2013}. Thus,
another direction for future work is the design of a functional
algorithm for maximization which suits our non-algebraic domain
framework.



\printbibliography

\end{document}